\theoremstyle{plain}
\newtheorem{definition}{Definition}
\newtheorem{lemma}{Lemma}
\newtheorem{theorem}{Theorem}
\newtheorem{assumption}{Assumption}
\begin{document}
%
\title{Adjacency Constraint for Efficient Hierarchical Reinforcement Learning}
%
%
%
%

\author{Tianren~Zhang$^\dagger$,
        Shangqi~Guo$^{\dagger}$,
        Tian~Tan,
        Xiaolin~Hu,~\IEEEmembership{Senior Member,~IEEE},
        and Feng~Chen,~\IEEEmembership{Member,~IEEE}
\thanks{$\dagger$ indicates equal contribution.}
\IEEEcompsocitemizethanks{\IEEEcompsocthanksitem T. Zhang and F. Chen are with the Department of Automation, Tsinghua University, Beijing 100086, China, with the Beijing Innovation Center for Future Chip, Beijing 100086, China, and with the LSBDPA Beijing Key Laboratory, Beijing 100084, China
(e-mail: zhang-tr19@mails.tsinghua.edu.cn;  chenfeng@mail.tsinghua.edu.cn).\protect\\
\IEEEcompsocthanksitem S. Guo is with the Department of Automation, Tsinghua University, Beijing 100086, China (e-mail: shangqi\_guo@foxmail.com).\protect\\
\IEEEcompsocthanksitem T. Tan is with the Department of Civil and Environmental Engineering, Stanford University, Stanford CA 94305, USA (e-mail: tiantan@stanford.edu). \protect\\
\IEEEcompsocthanksitem X. Hu is with the Department of Computer Science and Technology,
Institute for Artificial Intelligence, Beijing National Research Center for
Information Science and Technology, State Key Laboratory of Intelligent
Technology and Systems, Tsinghua University, Beijing 100084, China (e-mail: xlhu@mail.tsinghua.edu.cn).}
\thanks{This work was supported in part by the National Natural Science Foundation of China under Grant 62176133 and 61836004, and in part by the Tsinghua-Guoqiang Research Program under Grant 2019GQG0006, and in part by the National Key Research and Development Program of China under Grant 2021ZD0200300.\protect\\
Corresponding authors: Shangqi Guo and Feng Chen.\vspace{1em}}\protect\\
\thanks{\textbf{© © 2022 IEEE. Personal use of this material is permitted. Permission from IEEE must be obtained for all other uses, in any current or future media, including reprinting/republishing this material for advertising or promotional purposes, creating new collective works, for resale or redistribution to servers or lists, or reuse of any copyrighted component of this work in other works.}}}

%
%

\markboth{IEEE TRANSACTIONS ON PATTERN ANALYSIS AND MACHINE INTELLIGENCE}%
{Shell \MakeLowercase{\textit{et al.}}: Bare Demo of IEEEtran.cls for Computer Society Journals}
%



\IEEEtitleabstractindextext{%
\begin{abstract}
Goal-conditioned Hierarchical Reinforcement Learning (HRL) is a promising approach for scaling up reinforcement learning (RL) techniques. However, it often suffers from training inefficiency as the action space of the high-level, i.e., the goal space, is large. Searching in a large goal space poses difficulty for both high-level subgoal generation and low-level policy learning. In this paper, we show that this problem can be effectively alleviated by restricting the high-level action space from the whole goal space to a $k$-step adjacent region of the current state using an adjacency constraint. We theoretically prove that in a deterministic Markov Decision Process (MDP), the proposed adjacency constraint preserves the optimal hierarchical policy, while in a stochastic MDP the adjacency constraint induces a bounded state-value suboptimality determined by the MDP's transition structure. We further show that this constraint can be practically implemented by training an adjacency network that can discriminate between adjacent and non-adjacent subgoals. Experimental results on discrete and continuous control tasks including challenging simulated robot locomotion and manipulation tasks show that incorporating the adjacency constraint significantly boosts the performance of state-of-the-art goal-conditioned HRL approaches.
\end{abstract}

\begin{IEEEkeywords}
Hierarchical reinforcement learning (HRL), reinforcement learning (RL), goal-conditioning, subgoal generation, adjacency constraint.
\end{IEEEkeywords}}

\maketitle

\IEEEdisplaynontitleabstractindextext

%
\IEEEpeerreviewmaketitle

\IEEEraisesectionheading{\section{Introduction}\label{sec:intro}}

%
%
%
%

\IEEEPARstart{H}{ierarchical} reinforcement learning (HRL) has shown great potential in scaling up reinforcement learning (RL) methods to tackle large, temporally extended problems with long-term credit assignment and sparse rewards~\cite{sutton_between_1999,precup_temporal_2000,barto_recent_2003}. As one of the prevailing HRL paradigms, goal-conditioned HRL framework~\cite{dayan_feudal_1992,schmidhuber_planning_1993,kulkarni_hierarchical_2016,vezhnevets_feudal_2017,nachum_data-efficient_2018,levy_learning_2019}, which comprises a high-level policy that breaks the original task into a series of subgoals and a low-level policy that aims to reach those subgoals, has recently achieved significant success in video games~\cite{kulkarni_hierarchical_2016,vezhnevets_feudal_2017} and robotics~\cite{nachum_data-efficient_2018,nachum_near-optimal_2019,levy_learning_2019}. However, the effectiveness of goal-conditioned HRL relies on the acquisition of effective and semantically meaningful subgoals, which remains a key challenge.

As subgoals can be interpreted as high-level actions, it is feasible to directly train the high-level policy to generate subgoals using external rewards as supervision, which has been widely adopted in previous research~\cite{nachum_data-efficient_2018,nachum_near-optimal_2019,levy_learning_2019,kulkarni_hierarchical_2016,vezhnevets_feudal_2017}. Although these methods require little task-specific design, they often suffer from training inefficiency. This is because the action space of the high-level, i.e., the goal space, is often as large as the state space~\cite{nachum_near-optimal_2019,huang_mapping_2019,guo_state-temporal_2021}. Such a large action space leads to inefficient high-level exploration and a non-ignorable burden of both high-level and low-level value function approximation, thus often resulting in inefficient learning.

One effective way of handling large action spaces is action space reduction. However, it is difficult to perform action space reduction in general scenarios without additional information, since a restricted action set may not be expressive enough to express the optimal policy. There has been limited literature~\cite{zahavy_learn_2018,van_de_wiele_q-learning_2020,khetarpal_what_2020} studying action space reduction in RL, and to our knowledge, there is no prior work studying action space reduction in HRL, since the information loss in the goal space can lead to severe performance degradation~\cite{nachum_near-optimal_2019}.

In this paper, we present an optimality-preserving high-level action space reduction method for goal-conditioned HRL. Concretely, we show that the high-level action space can be restricted from the whole goal space to a $k$-step adjacent region centered at the current state. Our main intuition is depicted in Fig.~\ref{fig:motivation}: distant subgoals can be substituted by closer subgoals, as long as they drive the low-level to move towards the same ``direction''. Therefore, given the current state $s$ and the subgoal generation frequency $k$, the high-level only needs to explore in a subset of subgoals covering states that the low-level can possibly reach within $k$ steps. By reducing the action space of the high-level, the learning efficiency of both the high-level and the low-level can be improved: a considerably smaller action space compared with the raw state space relieves the burden of high-level exploration and both high-level and low-level value function approximation; in addition, adjacent subgoals provide a stronger learning signal for the low-level than non-adjacent subgoals when intrinsic rewards are sparse, as the agent can be intrinsically rewarded with a higher frequency for reaching these subgoals. Formally, we introduce a $k$-step adjacency constraint for high-level action space reduction, and further show that this constraint can be practically implemented by training an adjacency network that enables succinct judgment of the $k$-step adjacency between all states and subgoals. Theoretically, we prove that in a deterministic Markov Decision Process (MDP), the proposed adjacency constraint preserves the optimal hierarchical policy, while in a stochastic MDP the adjacency constraint induces a bounded state-value suboptimality determined by the MDP's transition structure.

\begin{figure}[t]
\centering
\includegraphics[width=0.75\linewidth]{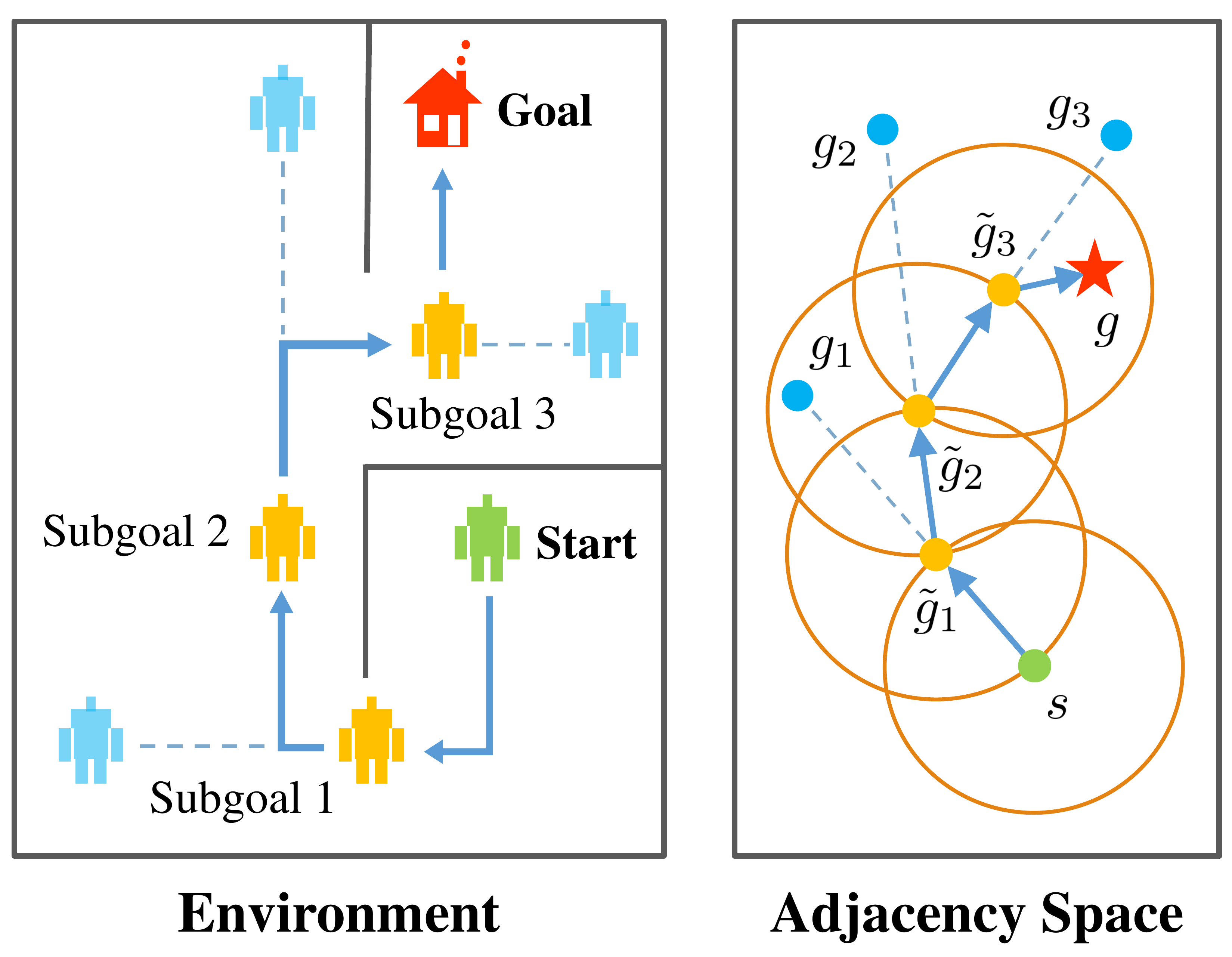}
\caption{A high-level illustration of our motivation: distant subgoals $g_1,\,g_2,\,g_3$ (blue) can be surrogated by closer subgoals $\tilde{g}_1,\,\tilde{g}_2,\,\tilde{g}_3$ (yellow) that fall into the $k$-step adjacent regions.
}
\label{fig:motivation}
\end{figure}

We benchmark our method on various kinds of tasks, including discrete control and planning tasks on grid worlds and challenging continuous control tasks based on the MuJoCo simulator~\cite{todorov_mujoco_2012}, which has been widely used in HRL literature~\cite{nachum_data-efficient_2018,levy_learning_2019,nachum_near-optimal_2019,florensa_stochastic_2017}. Experimental results exhibit the superiority of our method on both sample efficiency and asymptotic performance compared with state-of-the-art goal-conditioned HRL approaches, demonstrating the effectiveness of the proposed adjacency constraint.

A preliminary version of this manuscript was previously published at the conference of NeurIPS 2020~\cite{zhang_generating_2020}. Compared with the previous paper, in this work we further make the following contributions:
\begin{itemize}
  \item Theoretically, we analyze the impact on the suboptimality of the adjacency constraint in the context of \emph{stochastic} MDPs (in our previous work~\cite{zhang_generating_2020} a similar result is derived only in \emph{deterministic} MDPs), showing that in the general case the adjacency constraint induces a bounded suboptimality determined by the MDP's transition structure in terms of the state value of the optimal policy.
  \item Our analysis on stochastic MDPs shows that the distribution mismatch between the subgoal (state) distribution proposed by the high-level policy and the state distribution yielded by the subgoal-conditioned low-level policy plays a key role in upper-bounding the suboptimality induced by the hierarchical policy with goal-conditioning, which we believe is of independent interest.
  \item Empirically, we apply our method to additional robot control tasks, including a quadrupedal robot locomotion task that simultaneously involves locomotion and object manipulation and two robot arm manipulation tasks with sparse rewards. Experimental results on these tasks further exhibit the efficacy of our method. Also, we provide additional subgoal generation visualization to illustrate the effect of the adjacency constraint.
\end{itemize}

The rest of this paper is organized as follows: in Section~\ref{sec:pre}, we introduce the preliminaries on MDPs and HRL; in Section~\ref{sec:theory}, we formalize the proposed adjacency constraint and analyze the suboptimality induced by the constraint, respectively in deterministic and stochastic settings; in Section~\ref{sec:method}, we detail our practical implementation of the adjacency constraint and the overall HRL algorithm; in Section~\ref{sec:experiment}, we show our main experimental results and empirical analyses; in Section~\ref{sec:rel}, we introduce the related work of this paper; in Section~\ref{sec:con}, we discuss potential implications and conclude the paper.

\section{Preliminaries}
\label{sec:pre}

We consider a finite-horizon, goal-conditioned MDP defined as a tuple $\langle\mathcal{S},\mathcal{G},\mathcal{A},P,R,\gamma\rangle$, where $\mathcal{S}$ is a state space, $\mathcal{G}$ is a goal space, $\mathcal{A}$ is an action set, $P(s'\,|\, s, a)$ is the one-step state transition probability, $R(r\,|\, s, a, s')$ is the one-step reward, and $\gamma \in [0,1)$ is a discount factor. We define the expected reward function as $r(s,a,s') \vcentcolon= \mathbb{E} [R(\cdot\,|\,s, a, s')]$. We assume that the reward function $R$ is bounded within the range $[0,R_\mathrm{max}]$ with $R_\mathrm{max}\in\mathbb{R}_+$, which is a common assumption in the theoretical RL literature. We assume communicating MDPs~\cite{puterman_markov_1994}, i.e., all MDP states are strongly connected, which is widely used in the literature~\cite{jaksch_near-optimal_2010,agrawal_optimistic_2017,fruit_exploration-exploitation_2017,abbasi-yadkori_politex_2019,wei_learning_2021} and is natural in many reinforcement learning applications such as video games and robotics~\cite{lillicrap_continuous_2016,chen_understanding_2022}. This assumption is also necessary in our context for ensuring that all subgoals in the goal space are reachable.

Following prior works~\cite{kulkarni_hierarchical_2016,vezhnevets_feudal_2017,nachum_data-efficient_2018}, we consider a hierarchical framework comprising two levels of policies: a high-level policy $\pi_\mathrm{hi}(g\,|\, s)$ and a low-level policy $\pi_{\mathrm{lo}}(a\,|\, s,g)$, as shown in Fig.~\ref{fig:method}. We assume that the high-level and the low-level policies are parameterized by two function approximators, e.g., neural networks, with parameters $\theta_\mathrm{hi}$ and $\theta_\mathrm{lo}$ respectively. The high-level policy aims to maximize the external reward and generates a high-level action, i.e., a subgoal $g_t\sim\pi_\mathrm{hi}(g\,|\, s_t)\in\mathcal{G}$ every $k$ time steps when $t\equiv 0\,(\mathrm{mod}\ k)$, where $k>1$ is a pre-determined hyperparameter. The high-level policy modulates the behavior of the low-level policy by intrinsically rewarding the low-level for reaching these subgoals. The low-level aims to maximize the intrinsic reward provided by the high-level, and performs a primary action $a_t\sim\pi_\mathrm{lo}(a\,|\, s_t,g_t)\in\mathcal{A}$ at every time step. Following prior methods~\cite{nachum_data-efficient_2018,andrychowicz_hindsight_2017}, we consider a pre-defined goal space $\mathcal{G}$ with a mapping function $\varphi:\mathcal{S}\rightarrow\mathcal{G}$ and an inverse mapping function $\varphi^{-1}:\mathcal{G}\rightarrow\mathcal{S}$. When $t\not\equiv 0\,(\mathrm{mod}\ k)$, a pre-defined goal transition process $g_t = h(g_{t-1},s_{t-1},s_t)$ is utilized. We adopt the setting of directional subgoal that represents the difference between the desired state and the current state~\cite{vezhnevets_feudal_2017,nachum_data-efficient_2018}, i.e., the low-level agent is supposed to perform atomic actions to reach the state $s_{t+k}$ that is similar to $s_t + \varphi^{-1}(g_t)$, corresponding to the goal transition function $h(g_{t-1},s_{t-1},s_t) = g_{t-1} + \varphi(s_{t-1}) - \varphi(s_t)$. The reward of the high-level is given by
\begin{equation}
\begin{aligned}
r_{kt}^\mathrm{hi} \sim&\; R_\mathrm{hi}(s_{kt}, g_{kt}, \pi_\mathrm{lo})\\
\vcentcolon=&\; \sum_{i=kt}^{kt+k-1} R(r_{i}\,|\,s_i,a_i, s_{i+1}), \quad t = 0,\,1,\,\cdots,
\label{equ:r_high}
\end{aligned}
\end{equation}
which is the accumulation of the external reward in the time interval $[kt,kt+k-1]$. Note that the state\vspace{0.2em} value function of the high-level policy is then bounded by $\left[0, \frac{k R_\mathrm{max}}{1-\gamma}\right]$\vspace{0.2em} for high-level action frequency $k$ and discounting $\gamma\in[0,1)$, where the maximum is reached when the agent receives the maximal one-step reward $R_\mathrm{max}$ at every step.

While the high-level controller is motivated by the environmental reward, the low-level controller has no direct access to this external reward. Instead, the low-level is supervised by the intrinsic reward that describes subgoal-reaching performance, defined as $r_t^\mathrm{lo} \vcentcolon= -D\left(g_t, \varphi(s_{t+1})\right)$, where $D$ is a binary or continuous distance function. In practice, we employ Euclidean distance as $D$.

The goal-conditioned HRL framework enables us to train high-level and low-level policies concurrently in an end-to-end fashion. However, it often suffers from training inefficiency due to the unconstrained subgoal generation process, as we have mentioned in Section~\ref{sec:intro}. In the following section, we will introduce the $k$-step adjacency constraint to mitigate this problem.

\begin{figure}[t]
\centering
\includegraphics[width=0.8\linewidth]{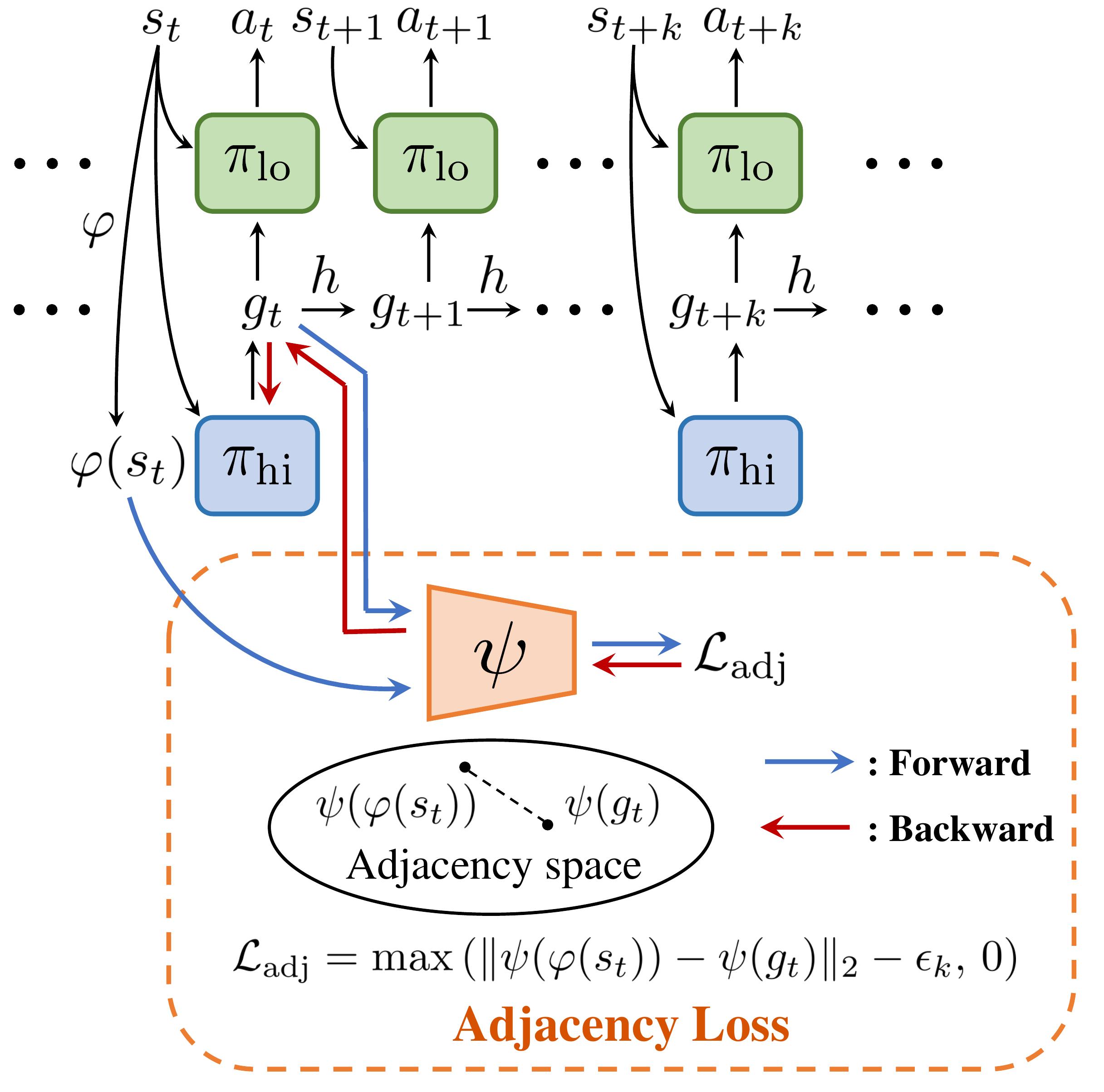}
\caption{The goal-conditioned HRL framework combined with the proposed $k$-step adjacency constraint, which is implemented by the adjacency network $\psi$ (dashed orange box).}
\label{fig:method}
\end{figure}

\section{Formulation and Theoretical Analysis}
\label{sec:theory}

In this section, we formalize the $k$-step adjacency constraint which has been intuitively explained in Section~\ref{sec:intro}. We will also provide our theoretical results, showing that the optimality can be fully (deterministic MDPs) or approximately preserved (stochastic MDPs) when learning a high-level policy with the adjacency constraint. We begin by introducing a distance measure that can decide whether a state is ``close'' to another state. In this regard, common distance functions such as Euclidean distance are not suitable, as they cannot reveal the transition structure of the MDP. Instead, we introduce \emph{shortest transition distance}, which equals the minimum number of steps required to reach a target state from a start state. In stochastic MDPs, the number of steps required is not a fixed number, but a random variable of which the distribution is conditioned on a specific policy. In this case, we resort to the notion of \emph{first hit time} from stochastic processes and define the shortest transition distance by minimizing the expected first hit time over all possible policies. 

\begin{definition}[Shortest transition distance]
Let $s,\,s'\in\mathcal{S}$. The shortest transition distance from $s$ to $s'$ is defined as:
\begin{equation}
\begin{aligned}
d_{\mathrm{st}}(s,s') \vcentcolon=&\, \inf_{\pi\in\Pi}\mathbb{E}[\mathcal{T}_{s,s'} \,|\, \pi] \\
=&\, \inf_{\pi\in\Pi} \sum_{t=0}^\infty\, t P(\mathcal{T}_{s,s'}=t\,|\,\pi),
\label{equ:dist}
\end{aligned}
\end{equation}
where $\Pi$ is set of all possible policies and $\mathcal{T}_{s,s'}$ denotes the first hit time from $s$ to $s'$, which is a random variable defined by
\begin{equation}
\mathcal{T}_{s,s'} \vcentcolon=\inf{\{t\in\mathbb{N}\mid s_t = s', s_0 = s\}}.
\end{equation}
Note that $\mathcal{T}_{s,s'}$ also depends on the policy.
\label{def:distance}
\end{definition}
The shortest transition distance is determined by a policy that connects states $s_1$ and $s_2$ in the most efficient way, which has also been studied by several prior works~\cite{florensa_self-supervised_2019,eysenbach_search_2019}. This policy is optimal in the sense that it requires the minimum number of steps to reach state $s_2$ from state $s_1$. Compared with the dynamical distance~\cite{hartikainen_dynamical_2020}, our definition here does not rely on a specific non-optimal policy. Also, we do not assume that the environment is reversible, i.e., $d_{\mathrm{st}}(s_1, s_2) = d_{\mathrm{st}}(s_2, s_1)$ may not hold for all pairs of states. Therefore, the shortest transition distance is a quasi-metric as it does not satisfy the symmetry condition. However, this limitation does not affect the following analysis as we only need to consider the transition from the start state to the goal state without the reversed transition. 

Given the definition of the shortest transition distance, we now formulate the property of an optimal (deterministic) goal-conditioned policy $\pi^*:\mathcal{S}\times\mathcal{G}\rightarrow\mathcal{A}$~\cite{schaul_universal_2015}. For every $s\in\mathcal{S}$ and $g\in\mathcal{G}$, we have:

\begin{equation}
\pi^*(s,g) \in \underset{a\in\mathcal{A}}{\arg\min} \sum_{s'\in\mathcal{S}}P(s'\,|\, s,a)\,d_{\mathrm{st}}\left(s',\varphi^{-1}(g)\right).
\label{equ:goal_conditioned}
\end{equation}

We then consider a goal-conditioned HRL framework with high-level action frequency $k$. Different from a flat goal-conditioned policy, in this setting, the low-level policy is required to reach the subgoals with $k$ limited steps. As a result, only a subset of the original states can be reliably reached even with an optimal goal-conditioned policy. We introduce the notion of \emph{$k$-step adjacent region} to describe the set of subgoals mapped from this ``adjacent'' subset of states. 

\begin{definition}[Average $k$-step adjacent region]
Let $s\in\mathcal{S}$. The average $k$-step adjacent region of $s$ is defined as:
\begin{equation}
\mathcal{G}_\mathrm{A}(s,k) \vcentcolon= \{ g\in \mathcal{G} \,|\, d_{\mathrm{st}}\left(s, \varphi^{-1}(g)\right)\le k \}.
\label{equ:adjacent_region}
\end{equation}
\end{definition}

In deterministic MDPs, average $k$-step adjacent region describes the goal subspace that incorporates all subgoals that has a shortest transition distance upper-bounded by $k$ from the starting state. Since the transitions between states and subgoals are deterministic, sampling subgoals from this subspace suffices for the high-level policy because all other subgoals can only be reached with probability zero due to the step limit. However, in stochastic MDPs where the transitions between states are stochastic, this region may exclude some distant subgoals that can also be reached in $k$ steps with a probability larger than zero, albeit reaching them is a rather rare event. Hence, for general cases we extend the notion of average $k$-step adjacent region as follows: 

\begin{definition}[Maximal $k$-step adjacent region]
\label{def:adjacent_region_max}
Let $s\in\mathcal{S}$. The maximal $k$-step adjacent region of $s$ is defined as:
\begin{equation}
\mathcal{G}_\mathrm{AM}(s,k) \vcentcolon= \left\{ g\in \mathcal{G} \;\middle|\; \sup_{\pi\in\Pi}P\left(\mathcal{T}_{s,\varphi^{-1}(g)} \le k \,|\, \pi \right) > 0 \right\}.
\label{equ:adjacent_region_max}
\end{equation}
\end{definition}
Note that the definition of maximal $k$-step adjacent region $\mathcal{G}_\mathrm{AM}$ generalizes the notion of average $k$-step adjacent region $\mathcal{G}_\mathrm{A}$: in stochastic MDPs, the maximal $k$-step adjacent region $\mathcal{G}_\mathrm{AM}$ includes all subgoals that can be possibly reached from $s$ within $k$ steps, while some of them may not be reached within an average number of steps $k$, thus falling out of the average $k$-step adjacent region $\mathcal{G}_\mathrm{A}$. Also, it is easy to verify that in deterministic MDPs, for all $s\in\mathcal{S}$ and $k\in\mathbb{N}_+$, we have $\mathcal{G}_\mathrm{AM}(s,k)=\mathcal{G}_\mathrm{A}(s,k)$.

Leveraging the tools defined above, we formulate the high-level objective incorporating this $k$-step adjacency constraint as:
\begin{equation}
  {\begin{aligned}
  \underset{\theta_\mathrm{hi}}{\mathrm{max}}\quad &\mathbb{E}_{\pi_\mathrm{hi}^{\theta_\mathrm{hi}}}\sum_{t=0}^{T-1}\gamma^{t} r_{kt}^\mathrm{hi}  \\
  \mathrm{subject\ to}\quad & g_{kt} \in \mathcal{G}_\mathrm{AM}(s_{kt}, k),\quad t = 0,\,1,\,\cdots,\,T-1
  \end{aligned}
  },
\label{equ:formulation}
\end{equation}
where $r_{kt}^\mathrm{hi}$ is the high-level reward defined by Equation~\eqref{equ:r_high} and $g_{kt}\sim\pi_\mathrm{hi}(g\,|\, s_{kt})$.

In the sequel, we will analyze the impact on the optimality of the $k$-step adjacency constraint formalized above. For ease of exposition, we will first focus on the deterministic case and then the general stochastic case. We hope that by first analyzing the deterministic case separately, we can help our readers build more intuition on our method before we delve deeper into the more complicated stochastic case.

\subsection{Analysis on Deterministic MDPs}
\label{sec:theory_det}

We begin by considering a rather simple case of deterministic MDPs, i.e., MDPs with deterministic transition functions. In this case, given a subgoal and a deterministic low-level policy, the agent's state trajectory is fixed. Then, by harnessing the property of $\pi^*$, we can show that in deterministic MDPs, given an optimal low-level policy $\pi^{*}_\mathrm{lo}=\pi^*$, subgoals that fall in the $k$-step adjacent region of the current state can ``represent'' all optimal subgoals in the whole goal space in terms of the induced $k$-step low-level action sequence. We summarize this result in the following lemma.

\begin{lemma}
Let $s\in\mathcal{S},\,g\in\mathcal{G}$ and $\pi^*$ an optimal goal-conditioned policy defined by Equation~\eqref{equ:goal_conditioned}. Under the assumption that the MDP is deterministic, for all $k\in\mathbb{N}_+$ satisfying $k \le d_{\mathrm{st}}(s,\varphi^{-1}(g))$, there exists a surrogate goal $\tilde{g}$ such that:
\label{theo:low}
\begin{equation}
\begin{aligned}
&\tilde{g} \in \mathcal{G}_\mathrm{A}(s,k), \\
&\pi^*(s_i, \tilde{g}) = \pi^*(s_i, g),\quad \forall s_i\in \tau\, (i\ne k), 
\end{aligned}
\end{equation}
where $\tau \vcentcolon= (s_0,s_1,\cdots,s_k)$ is the $k$-step state trajectory starting from state $s_0 = s$ under $\pi^*$ and $g$. 
\end{lemma}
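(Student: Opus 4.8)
The plan is to take the surrogate goal to be the image under $\varphi$ of the state reached after exactly $k$ steps of the optimal trajectory, i.e., $\tilde g \vcentcolon= \varphi(s_k)$, so that $\varphi^{-1}(\tilde g) = s_k$. The first claim, $\tilde g \in \mathcal{G}_\mathrm{A}(s,k)$, is then nearly immediate: the trajectory $\tau$ exhibits a $k$-step path from $s_0 = s$ to $s_k$ under $\pi^*$, so $d_{\mathrm{st}}(s, s_k) \le k$, which is exactly the membership condition in the definition of the average $k$-step adjacent region. The substance of the lemma is the second claim, the action agreement along $\tau$, which I would establish through the geometry of the shortest transition distance.

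First I would reduce the optimal policy to its greedy form in the deterministic setting. Writing $f(s,a)$ for the (deterministic) successor state, the inner expectation in Equation~\eqref{equ:goal_conditioned} collapses to a single term, so $\pi^*(s,g)\in\arg\min_a d_{\mathrm{st}}(f(s,a),\varphi^{-1}(g))$. Next I would record the Bellman-type recursion for the shortest transition distance, $d_{\mathrm{st}}(s,s^\dagger)=1+\min_a d_{\mathrm{st}}(f(s,a),s^\dagger)$ for $s\ne s^\dagger$, which holds because any shortest path spends one step reaching a successor and then proceeds optimally. Combining these two facts gives the key decrement property: along $\tau$, each optimal step toward $g$ lowers the distance to $\varphi^{-1}(g)$ by exactly one, i.e., $d_{\mathrm{st}}(s_{i+1},\varphi^{-1}(g)) = d_{\mathrm{st}}(s_i,\varphi^{-1}(g))-1$ for $0\le i<k$ (the hypothesis $k\le d_{\mathrm{st}}(s,\varphi^{-1}(g))$ guarantees $s_i\ne\varphi^{-1}(g)$ throughout, so the recursion applies). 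Telescoping yields $d_{\mathrm{st}}(s_i,\varphi^{-1}(g)) = d_{\mathrm{st}}(s,\varphi^{-1}(g))-i$.

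With this in hand I would prove sub-path optimality toward the new target $s_k$. The quasi-metric triangle inequality $d_{\mathrm{st}}(s_i,\varphi^{-1}(g))\le d_{\mathrm{st}}(s_i,s_k)+d_{\mathrm{st}}(s_k,\varphi^{-1}(g))$, together with the telescoped identity $d_{\mathrm{st}}(s_i,\varphi^{-1}(g))-d_{\mathrm{st}}(s_k,\varphi^{-1}(g))=k-i$ and the obvious upper bound $d_{\mathrm{st}}(s_i,s_k)\le k-i$ (the segment $s_i,\dots,s_k$ of $\tau$ realizes it), pins down $d_{\mathrm{st}}(s_i,s_k)=k-i$ exactly. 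Finally, for each $i<k$ the action $a_i\vcentcolon=\pi^*(s_i,g)$ moves the agent to $s_{i+1}$ with $d_{\mathrm{st}}(s_{i+1},s_k)=(k-i)-1=d_{\mathrm{st}}(s_i,s_k)-1$; since a single step can reduce the distance to $s_k$ by at most one, $a_i$ attains the greedy minimum, so $a_i\in\arg\min_a d_{\mathrm{st}}(f(s_i,a),\varphi^{-1}(\tilde g))$. Thus the very action $\pi^*$ selects toward $g$ is also optimal toward $\tilde g$ at every $s_i$ with $i\ne k$, giving the claimed agreement.

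I expect the main obstacle to be the potential non-uniqueness of the greedy minimizer: the argument above shows $a_i$ lies in the optimal action set for $\tilde g$, but the literal equality $\pi^*(s_i,\tilde g)=\pi^*(s_i,g)$ requires $\pi^*$ to resolve ties consistently across $g$ and $\tilde g$. I would address this by fixing, without loss of generality, a single deterministic tie-breaking rule for the greedy optimal policy, under which common membership in the same $\arg\min$ set forces the identical selection; absent ties the equality is automatic. The remaining pieces --- the collapse of the expectation, the Bellman recursion, and the triangle inequality --- are routine properties of the shortest transition distance in a communicating deterministic MDP, and I would state them briefly rather than belabor them.
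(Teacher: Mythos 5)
Your proof is correct and follows essentially the same route as the paper's: the same surrogate $\tilde g=\varphi(s_k)$, the same use of the triangle inequality to establish that the prefix $(s_0,\dots,s_k)$ is itself a shortest trajectory to $s_k$ (you derive $d_{\mathrm{st}}(s_i,s_k)=k-i$ by telescoping the greedy decrement where the paper argues by contradiction, a cosmetic difference), and the same conclusion that the action taken toward $g$ is also greedy-optimal toward $\tilde g$. Your explicit handling of tie-breaking in the $\arg\min$ is a point the paper's proof glosses over (it asserts $\pi^*(s_i,g)=\alpha(s_i,s_{i+1})$ outright), and your fix --- a fixed deterministic tie-breaking rule, or reading the conclusion as joint membership in the optimal action set --- is the right one.
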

\begin{proof}
See Appendix~\ref{app:proof1}.
\end{proof}

Lemma~\ref{theo:low} suggests that the $k$-step low-level action sequence generated by an optimal low-level policy conditioned on a distant subgoal can be induced using a subgoal that is closer. Naturally, we can generalize this result to a two-level goal-conditioned HRL framework, where the low-level is actuated not by a single subgoal, but by a subgoal sequence produced by the high-level policy.

\begin{theorem}
Given high-level action frequency $k$ and high-level planning horizon $T$, for $s\in\mathcal{S}$, let $\rho^* = (g_0,g_k,\cdots,g_{(T-1)k})$ be the high-level subgoal trajectory starting from state $s_0=s$ under an optimal high-level policy $\pi^{*}_\mathrm{hi}$. Also, let $\tau^* = (s_0,s_k,s_{2k},\cdots,s_{Tk})$ be the high-level state trajectory under $\rho^*$ and an optimal low-level policy $\pi^{*}_\mathrm{lo}$ as defined by Equation~\eqref{equ:goal_conditioned}. Then, under the assumption that the MDP is deterministic, there exists a surrogate subgoal trajectory $\tilde{\rho}^* = (\tilde{g}_0,\tilde{g}_k,\cdots,\tilde{g}_{(T-1)k})$ such that:
\begin{equation}
\begin{aligned}
&\tilde{g}_{kt} \in \mathcal{G}_\mathrm{A}({s_{kt},k}), \\
&Q^*(s_{kt},\tilde{g}_{kt}) = Q^*(s_{kt},g_{kt}),\quad t=0,\,1,\,\cdots,\,T-1,
\end{aligned}
\end{equation}
where $Q^*$ is the optimal high-level state-action value function. 
\label{theo:high}
\end{theorem}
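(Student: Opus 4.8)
The plan is to reduce the trajectory-level claim to a pointwise application of Lemma~\ref{theo:low} at each high-level decision point, and then to upgrade the low-level action equality guaranteed by that lemma into equality of the high-level state--action value $Q^*$ by exploiting determinism together with the Bellman recursion of the high level. Since the states $s_{kt}$ appearing in the statement are those of the fixed trajectory $\tau^*$ induced by $\rho^*$, it suffices to produce, for each $t\in\{0,\dots,T-1\}$ separately, a surrogate $\tilde g_{kt}$ at the state $s_{kt}$ that lies in $\mathcal{G}_\mathrm{A}(s_{kt},k)$ and satisfies $Q^*(s_{kt},\tilde g_{kt})=Q^*(s_{kt},g_{kt})$; collecting these over $t$ then assembles $\tilde\rho^*$.

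First I would fix $t$ and split on the position of the optimal subgoal $g_{kt}$ relative to the adjacent region. If $d_{\mathrm{st}}(s_{kt},\varphi^{-1}(g_{kt}))\le k$, then $g_{kt}\in\mathcal{G}_\mathrm{A}(s_{kt},k)$ already by its definition in Equation~\eqref{equ:adjacent_region}, so I simply set $\tilde g_{kt}=g_{kt}$ and both requirements hold trivially. Otherwise $k\le d_{\mathrm{st}}(s_{kt},\varphi^{-1}(g_{kt}))$, which is exactly the hypothesis of Lemma~\ref{theo:low}; applying it at $(s_{kt},g_{kt})$ yields a surrogate $\tilde g_{kt}\in\mathcal{G}_\mathrm{A}(s_{kt},k)$ with $\pi^*(s_i,\tilde g_{kt})=\pi^*(s_i,g_{kt})$ for every state $s_i$ of the $k$-step segment $(s_{kt},\dots,s_{(t+1)k})$ except the terminal one.

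Next I would convert this low-level action equality into equality of the quantities on which $Q^*$ actually depends. Because the matched actions are precisely those taken at $s_{kt},\dots,s_{(t+1)k-1}$ and the transition function is deterministic, the entire $k$-step state--action sub-trajectory emitted under $\pi^{*}_\mathrm{lo}=\pi^*$ conditioned on $\tilde g_{kt}$ coincides with the one under $g_{kt}$; in particular the terminal state $s_{(t+1)k}$ is identical. The action mismatch permitted at the terminal index is harmless, since that action belongs to the next high-level segment and influences neither the accumulated reward $r_{kt}^\mathrm{hi}$ of Equation~\eqref{equ:r_high} over $[kt,(t+1)k-1]$ nor the state $s_{(t+1)k}$. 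Consequently both the high-level reward $R_\mathrm{hi}(s_{kt},\cdot\,,\pi^{*}_\mathrm{lo})$ and the resulting next decision state are unchanged when $g_{kt}$ is replaced by $\tilde g_{kt}$. Writing the high-level Bellman relation $Q^*(s_{kt},g)=\mathbb{E}[R_\mathrm{hi}(s_{kt},g,\pi^{*}_\mathrm{lo})]+\gamma\,V^{*}_\mathrm{hi}(s_{(t+1)k})$, where $V^{*}_\mathrm{hi}$ is the optimal high-level value, every term on the right depends on $g$ only through this immediate return and this terminal state, both of which are preserved; hence $Q^*(s_{kt},\tilde g_{kt})=Q^*(s_{kt},g_{kt})$, and iterating over $t$ completes the construction.

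I expect the main obstacle to be this last conversion step rather than the invocation of the lemma: one must argue carefully that $Q^*$ factors through only the immediate $k$-step return and the terminal state, and that the single admitted action discrepancy at the segment boundary does not leak into either of these. Everything else follows from the determinism of the transition function, which lifts equality of actions into equality of whole trajectories, and from the fact that the same universal policy $\pi^*$ serves as the optimal low level for every subgoal, so that Lemma~\ref{theo:low} can be applied uniformly at each decision point.
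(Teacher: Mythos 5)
Your proposal is correct and follows essentially the same route as the paper's own proof: a case split on whether $g_{kt}$ already lies in $\mathcal{G}_\mathrm{A}(s_{kt},k)$, a pointwise invocation of Lemma~\ref{theo:low} at each decision state, determinism to conclude that the $k$-step sub-trajectory, the accumulated reward $r^{\mathrm{hi}}_{kt}$, and the terminal state $s_{(t+1)k}$ are preserved, and the high-level Bellman optimality equation to lift this to equality of $Q^*$. Your explicit remark that the permitted action mismatch at index $i=k$ does not affect the segment's return or its terminal state is a detail the paper leaves implicit, but it does not change the argument.
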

\begin{proof}
See Appendix~\ref{app:proof2}.
\end{proof}

Theorem~\ref{theo:high} shows that when the MDP is deterministic, we can constrain the high-level action space to state-wise $k$-step adjacent regions without the loss of optimality, which matches our intuition as presented in Section~\ref{sec:intro}.

\subsection{Analysis on Stochastic MDPs}
\label{sec:theory_sto}

In this section, we extend our theoretical results from deterministic MDPs to stochastic MDPs. Before the concrete analysis, we first provide some explanations to help build the intuition on the reason that the result in deterministic MDPs does not apply to stochastic MDPs directly. In Lemma~\ref{theo:low}, we have shown that for each non-adjacent subgoal, there exists a surrogate adjacent subgoal given that the low-level goal-conditioned policy is optimal. Intuitively, this is because an optimal low-level policy can only reach an intermediate state in the trajectory to the non-adjacent subgoal due to the step limit. By the construction of setting the surrogate subgoal as the subgoal defined by the reached state, it is clear that both the original subgoal and the surrogate subgoal induce the same $k$-step state trajectory since the dynamics of the MDP is deterministic. However, in the sequel, we will show that such construction does not preserve the optimality in stochastic MDPs since an adjacent subgoal may still not be reached within $k$ steps when state transitions are non-deterministic, even with an optimal low-level policy. Therefore, unlike the deterministic case, in the stochastic case the adjacency constraint does not fully preserve the optimality of the original high-level action space (though we will show that the suboptimality can be upper-bounded).

We now begin our analysis. Our general idea is to compare the state value induced by an optimal hierarchical policy without the $k$-step adjacency constraint and the optimal policy after the constraint is imposed. We will show that if we allow the high-level policy to generate non-deterministic subgoals (i.e., subgoal distributions), then there exists a subgoal distribution that induces a bounded state value sub-optimality while satisfying the $k$-step adjacency constraint. First, we note that for every state $s\in\mathcal{S}$, given a high-level policy $\pi_\mathrm{hi}$ and a low-level policy $\pi_\mathrm{lo}$, the high-level state value function under $\pi_\mathrm{hi}$ can be written as:
\begin{equation}
\begin{aligned}
V^{\pi_\mathrm{hi}, \pi_\mathrm{lo}}(s) = \sum_{g\in\mathcal{G}} \;&\pi_\mathrm{hi}(g\,|\, s)\bigg(r_\mathrm{hi}(s,g, \pi_\mathrm{lo}) \,+ \\
& \gamma\sum_{s'\in\mathcal{S}} P^k(s'\,|\, s,g, \pi_\mathrm{lo}) V^{\pi_\mathrm{hi}}(s')\bigg),
\end{aligned}
\end{equation}
where $r_\mathrm{hi}(s,g, \pi_\mathrm{lo}) \vcentcolon= \mathbb{E}[R_\mathrm{hi}(s, g, \pi_\mathrm{lo})]$ defines the expected $k$-step high-level reward, and $P^k(s'\,|\, s,g, \pi_\mathrm{lo})$ denotes the $k$-step transition probability. Similar to the deterministic case, here we are interested in analyzing the suboptimality when the low-level policy $\pi_\mathrm{lo}$ is an optimal goal-conditioned policy $\pi_\mathrm{lo}^*$ as defined by Equation~\eqref{equ:goal_conditioned}. For brevity, in what follows we use $r_\mathrm{hi}(s, g)$ and $P^k(s'\,|\,s,g)$ as shorthand for $r_\mathrm{hi}(s, g, \pi_\mathrm{lo}^*)$ and $P^k(s'\,|\,s, g, \pi_\mathrm{lo}^*)$ respectively, and write the high-level state-value function for $\pi_\mathrm{hi}$ and an optimal low-level policy $\pi_\mathrm{lo}^*$ as:
\begin{equation}
\begin{aligned}
V^{\pi_\mathrm{hi}}(s) \vcentcolon=&\, V^{\pi_\mathrm{hi}, \pi_\mathrm{lo}^*}(s)\\
=&\,\sum_{g\in\mathcal{G}} \;\pi_\mathrm{hi}(g\,|\, s)\bigg(r_\mathrm{hi}(s,g) \,+\\
&\quad\gamma\sum_{s'\in\mathcal{S}} P^k(s'\,|\, s,g) V^{\pi_\mathrm{hi}}(s')\bigg).
\label{eq:value}
\end{aligned}
\end{equation}
Note that for the high-level policy, the subgoal $g$ represents the desired state that it would like to reach, while the intermediate low-level state and action details are inaccessible. Therefore, given a fixed low-level policy, it is natural to assume that the high-level reward only depends on the state where the agent starts and the state where the agent arrives, which we formalize as follows:
\begin{assumption}
For all $s\in\mathcal{S}$ and $g\in\mathcal{G}$, the expected $k$-step high-level reward can be written as the following form:
\begin{equation}
r_\mathrm{hi}(s, g) = \sum_{s'\in\mathcal{S}}P^k(s'\,|\, s,g)\,\widetilde{r}_\mathrm{hi}(s,s'),
\label{eq:high_r}
\end{equation}
where $\widetilde{r}_\mathrm{hi}:\mathcal{S}\times\mathcal{S}\rightarrow[0, kR_\mathrm{max}]$ is a $k$-step accumulated reward function defined only on starting and ending states of the $k$-step transition.
\end{assumption}
As we have mentioned above, this assumption is natural in the setting of goal-conditioned hierarchical reinforcement learning, since it is well-aligned with the spirit of subgoal-based task decomposition. Meanwhile, this assumption benefits further analysis on the state-value functions of different high-level policies by transforming the difference between immediate rewards to the difference between state transitions, as we will detail in the following. First, plugging Equation~\eqref{eq:high_r} into Equation~\eqref{eq:value} gives
\begin{equation}
\begin{aligned}
&V^{\pi_\mathrm{hi}}(s) \\
&= \sum_{g\in\mathcal{G}} \pi_\mathrm{hi}(g\,|\, s) \sum_{s'\in\mathcal{S}} P^k(s'\,|\, s,g) \left(\widetilde{r}_\mathrm{hi}(s,s') + \gamma V^{\pi_\mathrm{hi}}(s')\right).
\label{eq:value_simple}
\end{aligned}
\end{equation}
We then consider an optimal high-level policy $\pi_\mathrm{hi}^*$, which generates a subgoal distribution for state $s$. Each subgoal $g\in\mathcal{G}$ in this distribution can be mapped to a $k$-step transition $P^k(s'\,|\, s,g)$. According to Definition~\ref{def:adjacent_region_max}, this transition will result in a conditional state distribution defined on the maximal $k$-step adjacent region $\mathcal{G}_\mathrm{AM}$ of $s$, since the probability of arriving at other states from $s$ within $k$ steps is zero. Our key insight is that this state distribution \emph{itself} can be used as the subgoal distribution generated by an adjacency-constrained high-level policy. That is, we construct our adjacency-constrained high-level policy $\pi_\mathrm{hi}^\mathrm{adj}$ by setting
\begin{equation}
\pi_\mathrm{hi}^\mathrm{adj}(g\,|\, s) \vcentcolon= \sum_{g'\in\mathcal{G}}P^k\left(\varphi^{-1}(g)\,|\, s,g'\right)\pi_\mathrm{hi}^*(g'\,|\, s).
\label{eq:construction}
\end{equation}
Intuitively, we construct an adjacency-constrained high-level policy from an optimal high-level policy without the constraint, by substituting each raw subgoal generated by the policy with its induced state distribution. The state value function of $\pi_\mathrm{hi}^\mathrm{adj}$ is then:
\begin{equation}
\begin{aligned}
&V^{\pi_\mathrm{hi}^\mathrm{adj}}(s) \\
&= \sum_{g\in\mathcal{G}} \pi_\mathrm{hi}^\mathrm{adj}(g\,|\, s)
\sum_{s'\in\mathcal{S}} P^k(s'\,|\, s,g) \left(\widetilde{r}_\mathrm{hi}(s,s') + \gamma V^{\pi_\mathrm{hi}^\mathrm{adj}}(s')\right) \\
&= \sum_{g\in\mathcal{G}} \sum_{g'\in\mathcal{G}}P^k\left(\varphi^{-1}(g)\,|\, s,g'\right)\pi_\mathrm{hi}^*(g'\,|\, s) \\
&\qquad\sum_{s'\in\mathcal{S}} P^k(s'\,|\, s,g) \left(\widetilde{r}_\mathrm{hi}(s,s') + \gamma V^{\pi_\mathrm{hi}^\mathrm{adj}}(s')\right).
\end{aligned}
\end{equation}
Since the state distribution $P^k(s'\,|\, s,g)$ is defined on the maximal $k$-step adjacent region $\mathcal{G}_\mathrm{AM}$ of the state, the constructed high-level policy must satisfy the adjacency constraint. Therefore, to compute the expected state value under $\pi_\mathrm{hi}^\mathrm{adj}$ we only need to consider the subgoals falling into $\mathcal{G}_\mathrm{AM}$. This gives
\begin{equation}
\begin{aligned}
V^{\pi_\mathrm{hi}^\mathrm{adj}}(s) &= \sum_{g\in\mathcal{G}_\mathrm{AM}(s,k)}\sum_{g'\in\mathcal{G}}P^k\left(\varphi^{-1}(g)\,|\, s,g'\right)\pi_\mathrm{hi}^*(g'\,|\, s) \\
&\qquad \sum_{s'\in\mathcal{S}} P^k(s'\,|\, s,g) \left(\widetilde{r}_\mathrm{hi}(s,s') + \gamma V^{\pi_\mathrm{hi}^\mathrm{adj}}(s')\right).
\end{aligned}
\end{equation}

Next, we would like to bound the state-value suboptimality induced by such construction. We first formalize a critical factor termed \emph{transition mismatch rate} that reflects the transition structure of the MDP and plays an important role in our analysis.
\begin{definition}[Transition mismatch rate]
\label{def:distribution_mismatch}
Let $\mathcal{M}$ be a goal-conditioned MDP as defined in Section~\ref{sec:pre} and $P^k(s'\,|\,s, g)$ its $k$-step transition probability under an optimal goal-conditioned policy as defined by Equation~\eqref{equ:goal_conditioned}. The $k$-step transition mismatch rate of $\mathcal{M}$ is defined as:
\begin{equation}
\mu_k \vcentcolon= \max_{s, g, s'} \left| P^k(s'\,|\,s,g) - \sum_{\widetilde{s}\in\mathcal{S}}P^k(s'\,|\,s,\varphi(\widetilde{s}))P^k(\widetilde{s}\,|\,s, g) \right|.
\label{equ:distribution_mismatch}
\end{equation}
\end{definition}
The above definition captures the transition structure of MDP in terms of ``goal-reaching reliability'': a small $\mu_k$ indicates that for every state, its adjacent goals can be reached with a high probability if the goal-reaching policy is itself optimal. For example, in deterministic MDPs, every subgoal that falls into the maximal $k$-step adjacent region $\mathcal{G}_\mathrm{AM}$ can be reliably reached with probability 1 given an optimal low-level policy since all transitions are deterministic; it is easy to verify that in this case, we have $\mu_k = 0$. Intuitively, this parameter should play an important role in bounding the suboptimality in subgoal conditioning: if all adjacent subgoals can be perfectly reached with probability 1, then our construction~\eqref{eq:construction} should fully preserve the optimality as in deterministic cases; when this is not true, then the ``harder'' adjacent subgoals can be reached, the larger the difference between the state distribution induced by our construction~\eqref{eq:construction} and that induced by the original subgoal. Fig.~\ref{fig:theory} is an illustration of the idea above.

Given Definition~\ref{def:distribution_mismatch}, we have the following theorem which provides a suboptimality upper bound.
\begin{figure}[t]
\centering
\includegraphics[width=0.99\linewidth]{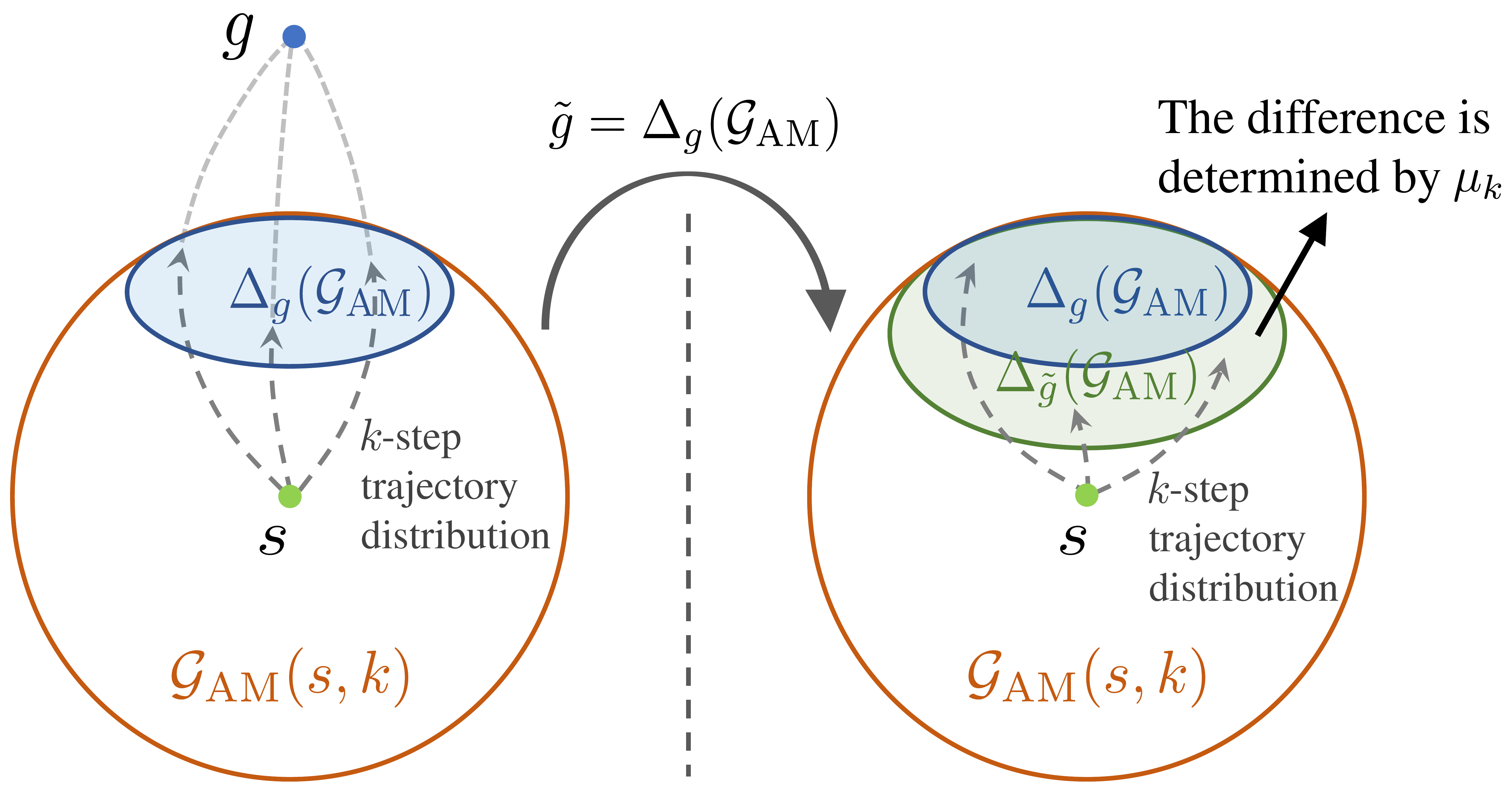}
\caption{A high-level illustration of our theoretical result in stochastic MDPs. $\Delta_g(\mathcal{G}_\mathrm{AM})$ and $\Delta_{\tilde{g}}(\mathcal{G}_\mathrm{AM})$ denote the $k$-step state distribution induced by the subgoal $g$ and the subgoal distribution $\tilde{g}$ respectively.}
\label{fig:theory}
\end{figure}

\begin{theorem} Let $V^{\pi_\mathrm{hi}}$ be the high-level state value function given a high-level policy $\pi_\mathrm{hi}$, under an optimal low-level policy as defined by Equation~\eqref{equ:goal_conditioned}. Let $\pi^*_\mathrm{hi}$ be an optimal high-level policy without the adjacency constraint. Then, there exists a high-level policy $\pi^\mathrm{adj}_\mathrm{hi}$ that satisfies the $k$-step adjacency constraint (that is, given a state $s\in\mathcal{S}$, all subgoals generated by $\pi^\mathrm{adj}_\mathrm{hi}$ fall into the maximal $k$-step adjacent region $\mathcal{G}_\mathrm{AM}(s,k)$), such that:
\begin{equation}
\left\lVert V^{\pi^*_\mathrm{hi}} - V^{\pi^\mathrm{adj}_\mathrm{hi}} \right\rVert_\infty \le \frac{\mu_k k R_\mathrm{max}}{2(1-\gamma)} + \frac{\gamma\mu_k k R_\mathrm{max}}{2(1-\gamma)^2},
\label{eq:main_bound}
\end{equation}
where $\mu_k$ is the $k$-step transition mismatch rate of the MDP.
\label{theo:subopt}
\end{theorem}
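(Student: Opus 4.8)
The plan is to prove \eqref{eq:main_bound} by a value-difference (simulation-lemma) argument that compares $\pi^*_\mathrm{hi}$ and $\pi^\mathrm{adj}_\mathrm{hi}$ through the effective, policy-averaged $k$-step transition kernels they induce, and to control the gap between these kernels using the transition mismatch rate $\mu_k$ of Definition~\ref{def:distribution_mismatch}.

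First I would put both value functions in a common form. Define the effective kernels $\bar P^*(s'\,|\,s):=\sum_{g'}\pi^*_\mathrm{hi}(g'\,|\,s)\,P^k(s'\,|\,s,g')$ and $\bar P^\mathrm{adj}(s'\,|\,s):=\sum_{g}\pi^\mathrm{adj}_\mathrm{hi}(g\,|\,s)\,P^k(s'\,|\,s,g)$, together with the effective rewards $\bar r^\star(s):=\sum_{s'}\bar P^\star(s'\,|\,s)\,\widetilde r_\mathrm{hi}(s,s')$ for $\star\in\{*,\mathrm{adj}\}$. By the assumption \eqref{eq:high_r} and the value form \eqref{eq:value_simple}, each value function obeys the one-step identity $V^\star(s)=\bar r^\star(s)+\gamma\sum_{s'}\bar P^\star(s'\,|\,s)\,V^\star(s')$. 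The crucial algebraic step is to substitute the construction \eqref{eq:construction} into $\bar P^\mathrm{adj}$: writing each adjacent subgoal as $g=\varphi(\widetilde s)$ and interchanging the order of summation shows $\bar P^\mathrm{adj}(s'\,|\,s)=\sum_{g'}\pi^*_\mathrm{hi}(g'\,|\,s)\sum_{\widetilde s}P^k(\widetilde s\,|\,s,g')\,P^k(s'\,|\,s,\varphi(\widetilde s))$. Subtracting from $\bar P^*$ then expresses the kernel gap as a $\pi^*_\mathrm{hi}$-weighted average of \emph{exactly} the per-subgoal mismatch terms of Definition~\ref{def:distribution_mismatch}, so that the aggregated discrepancy $\sum_{s'}\lvert\bar P^*(s'\,|\,s)-\bar P^\mathrm{adj}(s'\,|\,s)\rvert$ is controlled by $\mu_k$.

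Next I would apply the value-difference decomposition. Subtracting the two one-step identities and inserting $\pm\,\gamma\sum_{s'}\bar P^*(s'\,|\,s)\,V^{\pi^\mathrm{adj}_\mathrm{hi}}(s')$ splits $V^{\pi^*_\mathrm{hi}}(s)-V^{\pi^\mathrm{adj}_\mathrm{hi}}(s)$ into three pieces: (i) the reward gap $\bar r^*(s)-\bar r^\mathrm{adj}(s)$; (ii) a transition-gap term $\gamma\sum_{s'}[\bar P^*(s'\,|\,s)-\bar P^\mathrm{adj}(s'\,|\,s)]\,V^{\pi^\mathrm{adj}_\mathrm{hi}}(s')$; and (iii) the recursive term $\gamma\sum_{s'}\bar P^*(s'\,|\,s)\,[V^{\pi^*_\mathrm{hi}}(s')-V^{\pi^\mathrm{adj}_\mathrm{hi}}(s')]$. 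Both (i) and (ii) are inner products of the \emph{zero-sum} kernel gap against a bounded function, namely $\widetilde r_\mathrm{hi}\in[0,kR_\mathrm{max}]$ for (i) and $V^{\pi^\mathrm{adj}_\mathrm{hi}}\in[0,kR_\mathrm{max}/(1-\gamma)]$ for (ii). Here I would use a centering trick: since the kernel gap sums to zero over $s'$, I may subtract the midpoint of each range before bounding, which halves the effective range and produces the factor $1/2$. Combined with the mismatch control from the previous step, this yields $\lvert(\mathrm{i})\rvert\le \mu_k kR_\mathrm{max}/2$ and $\lvert(\mathrm{ii})\rvert\le \gamma\mu_k kR_\mathrm{max}/(2(1-\gamma))$.

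Finally, bounding (iii) by $\gamma\,\lVert V^{\pi^*_\mathrm{hi}}-V^{\pi^\mathrm{adj}_\mathrm{hi}}\rVert_\infty$ and taking the supremum over $s$ gives $(1-\gamma)\,\lVert V^{\pi^*_\mathrm{hi}}-V^{\pi^\mathrm{adj}_\mathrm{hi}}\rVert_\infty\le \mu_k kR_\mathrm{max}/2+\gamma\mu_k kR_\mathrm{max}/(2(1-\gamma))$; dividing through by $(1-\gamma)$ recovers the two-term bound of \eqref{eq:main_bound}. I expect the main obstacle to be the kernel-gap step, i.e.\ verifying that the construction \eqref{eq:construction} turns $\bar P^*-\bar P^\mathrm{adj}$ into precisely the mismatch expression of Definition~\ref{def:distribution_mismatch} (so that $\mu_k$ is the right quantity rather than a looser surrogate) and correctly tracking the centering so that the range of $V^{\pi^\mathrm{adj}_\mathrm{hi}}$ contributes the extra $1/(1-\gamma)$ that distinguishes the two terms. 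Once (i) and (ii) are in hand, the contraction argument closing the recursion is routine.
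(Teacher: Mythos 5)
Your proposal is correct and follows essentially the same route as the paper's proof: the same construction of $\pi^\mathrm{adj}_\mathrm{hi}$ via Equation~\eqref{eq:construction}, the same three-way value-difference decomposition (reward gap, transition gap against $V^{\pi^\mathrm{adj}_\mathrm{hi}}$, and a recursive contraction term), the same midpoint-centering trick to obtain the factors $kR_\mathrm{max}/2$ and $kR_\mathrm{max}/(2(1-\gamma))$, and the same H\"older-plus-$\mu_k$ control of the kernel gap. The final step of solving $(1-\gamma)\lVert V^{\pi^*_\mathrm{hi}}-V^{\pi^\mathrm{adj}_\mathrm{hi}}\rVert_\infty\le \mu_k kR_\mathrm{max}/2+\gamma\mu_k kR_\mathrm{max}/(2(1-\gamma))$ is exactly how the paper concludes.
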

\begin{proof}
See Appendix~\ref{app:proof3}.
\end{proof}

Theorem~\ref{theo:subopt} indicates a direct dependence on the transition mismatch rate $\mu_k$ when bounding the suboptimality induced by the adjacency constraint. As mentioned above, in deterministic MDPs we have $\mu_k = 0$, which yields an upper bound of 0 by Equation~\eqref{eq:main_bound}, recovering our theoretical results in deterministic MDPs in Section~\ref{sec:theory_det}.

\section{HRL with Adjacency Constraint}
\label{sec:method}
In this section, we will present our method of Hierarchical Reinforcement learning with $k$-step Adjacency Constraint (HRAC). First, since in practice our original formulation~\eqref{equ:formulation} is hard to optimize due to the strict constraint, we employ the relaxation technique and derive the following unconstrained optimizing objective:
\begin{equation}
  \underset{\theta_\mathrm{hi}}{\mathrm{max}}\quad \mathbb{E}_{\pi_\mathrm{hi}^{\theta_\mathrm{hi}}}\sum_{t=0}^{T-1}\Bigg[\gamma^{t} r_{kt}^\mathrm{hi} - \eta\cdot H\Big(d_{\mathrm{st}}\left(s_{kt},\varphi^{-1}(g_{kt})\right),k\Big)\Bigg] ,
  \label{equ:formulation_unconstrained}
\end{equation}
where $H(x,k) \vcentcolon= \max(\frac{x}{k}-1,0)$ is the hinge loss function and $\eta$ is a balancing coefficient.
However, the exact calculation of the shortest transition distance $d_{\mathrm{st}}(s_1,s_2)$ between two arbitrary states $s_1,s_2\in\mathcal{S}$ remains complex and non-differentiable.
In the sequel, we introduce a simple method to collect and aggregate the adjacency information from the environment interactions. We then train an adjacency network using the aggregated adjacency information to approximate the shortest transition distance in a parameterized form, which enables practical optimization of Equation~\eqref{equ:formulation_unconstrained}.

\subsection{Practical Implementation of Adjacency Constraint}
\label{subsec:adjacency}

As shown in prior research~\cite{pong_temporal_2018,florensa_self-supervised_2019,eysenbach_search_2019,hartikainen_dynamical_2020}, accurately computing the shortest transition distance is not easy and often has the same complexity as learning an optimal low-level goal-conditioned policy. However, from the perspective of goal-conditioned HRL, we do not need a perfect shortest transition distance measure or a low-level policy that can reach any distant subgoals. Instead, only a discriminator of $k$-step adjacency suffices, and it is enough to learn a low-level policy that can reliably reach nearby subgoals (more accurately, subgoals that fall into the $k$-step adjacent region of the current state) rather than all potential subgoals in the goal space. 

Given the analysis above, here we introduce a simple approach to determine whether a subgoal satisfies the $k$-step adjacency constraint. We first note that Equation~\eqref{equ:dist} can be approximated as follows:
\begin{equation}
d_{\mathrm{st}}(s_1,s_2) \approx \min_{\pi\in\{\pi_1,\pi_2,\cdots,\pi_n\}} \sum_{t=0}^\infty \,t P(\mathcal{T}_{s_1s_2}=t\,|\,\pi),
\label{equ:approximation}%
\end{equation}
where $\{\pi_1,\pi_2,\cdots,\pi_n\}$ is a finite policy set containing $n$ different deterministic policies. Obviously, if these policies are diverse enough, we can effectively approximate the shortest transition distance with a sufficiently large $n$. However,
training a set of diverse policies separately is costly,
and using one single policy to approximate the policy set ($n=1$)~\cite{savinov_semi-parametric_2018,savinov_episodic_2019} often leads to non-optimality. To handle this difficulty, we exploit the fact that the low-level policy itself changes over time during the training procedure.
We can thus build a policy set by sampling policies that emerge in different training timesteps.
To aggregate the adjacency information gathered by multiple policies, we propose to explicitly memorize the adjacency information by constructing a binary \emph{$k$-step adjacency matrix} of the explored states. The adjacency matrix has the same size as the number of explored states, and each element represents whether two states are $k$-step adjacent. In practice, we use the agent's trajectories, where the temporal distances between states can indicate their adjacency, to construct and update the adjacency matrix online. The detailed process is as follows.

\textit{Constructing and updating the adjacency matrix:}
the adjacency matrix is initialized to an empty matrix at the beginning of training. Each time when the agent explores a new state that it has never visited before, the adjacency matrix is augmented by a new row and a new column with zero elements, representing the $k$-step adjacent relation between the new state and explored states. When the temporal distance between two states in one trajectory is not larger than $k$, then the corresponding element in the adjacency matrix will be labeled to 1, indicating the adjacency. (The diagonal of the adjacency matrix will always be labeled to 1.) Although the temporal distance between two states based on a single trajectory is often larger than the real shortest transition distance, it can be easily shown that the adjacency matrix with this labeling strategy can converge to the optimal adjacency matrix asymptotically with sufficient trajectories sampled by different policies. In our implementation, we employ a trajectory buffer to store newly-sampled trajectories and update the adjacency matrix online in a fixed frequency using the stored trajectories. The trajectory buffer is cleared after each update.

In practice, using an adjacency matrix is insufficient as this procedure is non-differentiable and cannot generalize to newly-visited states.
Therefore, we further distill the adjacency information stored in a constructed adjacency matrix into an adjacency network $\psi_\phi$ parameterized by $\phi$.
The adjacency network learns a mapping from the goal space to an adjacency space, where the Euclidean distance between the state and the goal is consistent with their shortest transition distance:
\begin{equation}
\tilde{d}_{\mathrm{adj}}(s_1,s_2\,|\,\phi) \vcentcolon= \frac{k}{\epsilon_k}\lVert \psi_\phi(g_1) - \psi_\phi(g_2) \rVert_2,
\label{equ:adjacency_net}
\end{equation}
where $g_1=\varphi(s_1),\,g_2=\varphi(s_2)$ and $\epsilon_k\in\mathbb{R}_+$ is a scaling factor. As we have mentioned above, it is hard to regress the Euclidean distance in the adjacency space to the shortest transition distance accurately, and we only need to ensure a binary relation for implementing the adjacency constraint, i.e., $\lVert \psi_\phi(g_1) - \psi_\phi(g_2) \rVert_2 > \epsilon_k$ for $d_{\mathrm{st}}(s_1,s_2)>k$, and $\lVert \psi_\phi(g_1) - \psi_\phi(g_2) \rVert_2 < \epsilon_k$ for $d_{\mathrm{st}}(s_1,s_2)< k $, as shown in Fig.~\ref{fig:distance_learning}.
Inspired by the works in metric learning research~\cite{hadsell_dimensionality_2006}, we adopt a contrastive-like loss function for this distillation process:
\begin{equation}
\begin{aligned}
    \mathcal{L}_{\mathrm{dis}}(\phi) &= \mathbb{E}_{s_i,s_j\in\mathcal{S}} \left[\,l \cdot \max\left(\lVert \psi_\phi(g_i) - \psi_\phi(g_j) \rVert_2 - \epsilon_k,\,0\right)\right. \\
    +& \left.(1 - l) \cdot \max\left(\epsilon_k + \delta - \lVert \psi_\phi(g_i) - \psi_\phi(g_j) \lVert_2,\,0\right)\right],
\end{aligned}
\label{equ:contrastive}
\end{equation}
where $g_i = \varphi(s_i)$, $g_j = \varphi(s_j)$, and a hyperparameter $\delta>0$ is used to create a gap between the embeddings. $l\in\{0,1\}$ represents the label indicating $k$-step adjacency derived from the $k$-step adjacency matrix. Equation~\eqref{equ:contrastive} penalizes adjacent state embeddings ($l=1$) with large Euclidean distances in the adjacency space and non-adjacent state embeddings ($l=0$) with small Euclidean distances.

In practice, the adjacency network is trained by minimizing the objective defined by Equation~\eqref{equ:contrastive}. We use states uniformly sampled from the adjacency matrix (i.e., from the set of all explored states) to approximate the expectation, and train the adjacency network each time after the adjacency matrix is updated with new trajectories. Note that by explicitly aggregating the adjacency information using an adjacency matrix, we can perform uniform sampling over all explored states and thus achieve a nearly unbiased estimation of the expectation, which cannot be realized when we directly sample state-pairs from the trajectories (see the comparison with the work of Savinov et al.~\cite{savinov_semi-parametric_2018,savinov_episodic_2019} in Appendix~\ref{appsec:comp} for details).
\begin{figure}[t]
\centering
\includegraphics[width=0.7\linewidth]{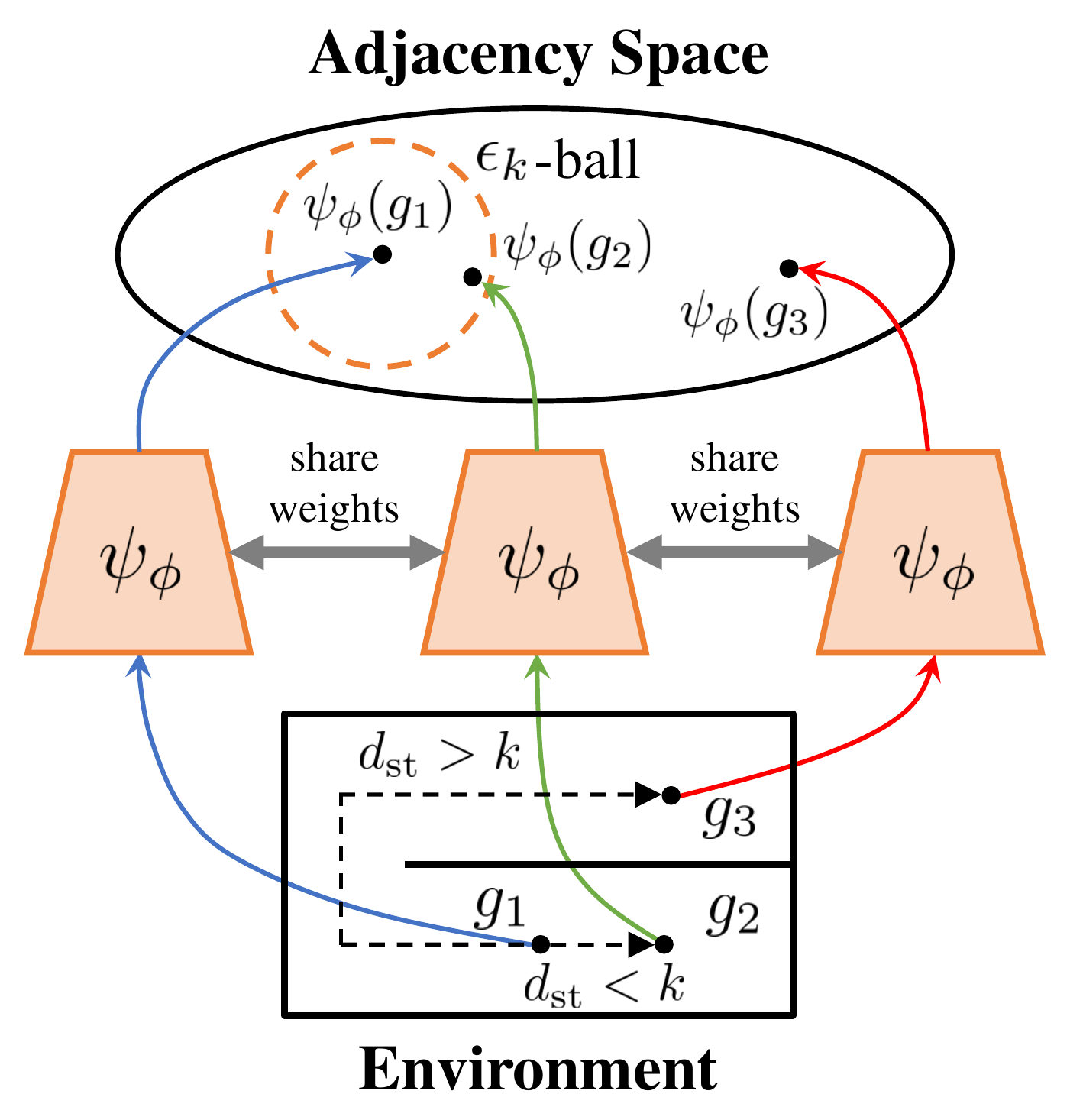}
\caption{The functionality of the adjacency network. The $k$-step adjacent region is mapped to an $\epsilon_k$-ball with Euclidean metric in the adjacency space, where $e_{g_i} = \psi_\phi(g_i),\,i=1,2,3$.}
\label{fig:distance_learning}
\end{figure}

Embedding all subgoals with a single adjacency network is enough to express adjacency when the environment is reversible. However, when this condition is not satisfied, it is insufficient to express directional adjacency using one adjacency network since the parameterized approximation defined by Equation~\eqref{equ:adjacency_net} is symmetric for $s_1$ and $s_2$. In this case, one can use two separate sub-networks to embed $g_1$ and $g_2$ in Equation~\eqref{equ:adjacency_net} respectively using the structure proposed in UVFA~\cite{schaul_universal_2015}.

Although the construction of an adjacency matrix limits our method to tasks with tabular state spaces, we can also handle continuous state spaces using goal space discretization (see our continuous control experiments in Section~\ref{sec:experiment}). For applications with vast state spaces, constructing a complete adjacency matrix in the raw goal space will be intractable, but it is still possible to scale our method to these scenarios using specific feature construction~\cite{ecoffet_first_2021} or dimension reduction methods like VQ-VAE~\cite{oord_neural_2017}, or replacing the distance learning procedure with more accurate distance learning algorithms~\cite{florensa_self-supervised_2019,eysenbach_search_2019} at the cost of some efficiency. We consider possible extensions in this direction as future work.

\begin{algorithm}[tb]
   \caption{HRAC}
   \label{algo:hrac}
\begin{algorithmic}[1]
   \REQUIRE High-level policy $\pi_\mathrm{hi}$ parameterized by $\theta_\mathrm{hi}$, low-level policy $\pi_\mathrm{lo}$ parameterized by $\theta_\mathrm{lo}$, adjacency netork $\psi_\phi$ parameterized by $\phi$, state-goal mapping function $\varphi$, goal transition function $h$, high-level action frequency $k$, number of training episodes $N$, adjacency learning frequency $C$, empty adjacency matrix $\mathcal{M}$, empty trajectory buffer $\mathcal{B}$.
   \STATE Sample and store trajectories in the trajectory buffer $\mathcal{B}$ using a random policy.
   \STATE Construct the adjacency matrix $\mathcal{M}$ using the trajectory buffer $\mathcal{B}$.
   \STATE Pre-train $\psi_\phi$ using $\mathcal{M}$ by minimizing Equation~\eqref{equ:contrastive}.
   \STATE Clear $\mathcal{B}$.
   \FOR {$n=1$ {\bfseries to} $N$}
       \STATE Reset the environment and sample the initial state $s_0$.
       \STATE $t = 0$.
       \REPEAT
       \IF {$t\equiv 0\,(\mathrm{mod}\ k)$}
         \STATE Sample subgoal $g_t \sim \pi_\mathrm{hi}(g\,|\,s_t)$.
       \ELSE
           \STATE Perform subgoal transition $g_t = h(g_{t-1},s_{t-1},s_t)$.
       \ENDIF
       \STATE Sample low-level action $a_t \sim \pi_\mathrm{lo}(a\,|\,s_t,g_t)$.
       \STATE Sample next state $s_{t+1} \sim P(s\,|\,s_t,a_t)$.
       \STATE Sample reward $r_t \sim R(r\,|\,s_t,a_t, s_{t+1})$.
       \STATE Sample episode end signal $done$.
       \STATE $t = t+1$.
     \UNTIL {$done$ is $true$.}
     \STATE Store the sampled trajectory in $\mathcal{B}$.
     \STATE Update the parameters of the high-level policy $\theta_\mathrm{hi}$ according to Equation~\eqref{equ:total_loss} and~\eqref{equ:goal_loss}.
     \STATE Update the parameters of the low-level policy $\theta_\mathrm{lo}$.
     \IF {$n\equiv 0\,(\mathrm{mod}\ C)$}
         \STATE Update the adjacency matrix $\mathcal{M}$ using the trajectory buffer $\mathcal{B}$.
         \STATE Fine-tune $\psi_\phi$ using $\mathcal{M}$ by minimizing Equation~\eqref{equ:contrastive}.
         \STATE Clear $\mathcal{B}$.
     \ENDIF
   \ENDFOR
\end{algorithmic}
\end{algorithm}

\subsection{Combining Adjacency Constraint and HRL}

With a learned adjacency network $\psi_\phi$, we can now incorporate the adjacency constraint into the goal-conditioned HRL framework.
According to Equation~\eqref{equ:formulation_unconstrained}, we introduce an \emph{adjacency loss} $\mathcal{L}_\mathrm{adj}$ to replace the original strict adjacency constraint and minimize the following high-level objective:
\begin{equation}
\mathcal{L}_\mathrm{high}(\theta_\mathrm{hi}) = -\mathbb{E}_{\pi_\mathrm{hi}^{\theta_\mathrm{hi}}}\sum_{t=0}^{T-1}\left(\gamma^{t} r_{kt}^\mathrm{hi} - \eta\cdot\mathcal{L}_\mathrm{adj}\right),
\label{equ:total_loss}
\end{equation}
where $\eta$ is the balancing coefficient, and the adjacency loss $\mathcal{L}_\mathrm{adj}$ is derived by replacing $d_{\mathrm{st}}$ with $\tilde{d}_{\mathrm{st}}$ defined by Equation~\eqref{equ:adjacency_net} in the second term of Equation~\eqref{equ:formulation_unconstrained}:
\begin{equation}
\begin{aligned}
\mathcal{L}_\mathrm{adj}(\theta_\mathrm{hi}) &= H\left(\tilde{d}_{\mathrm{adj}}\left(s_{kt},\varphi^{-1}(g_{kt})\,|\,\phi\right),k\right)\\
&\propto \max\left(\lVert \psi_\phi(\varphi(s_{kt})) - \psi_\phi(g_{kt}) \rVert_2 - \epsilon_k,\,0\right),
\label{equ:goal_loss}
\end{aligned}
\end{equation}
where $g_{kt}\sim \pi_\mathrm{hi}(g\,|\,s_{kt})$. Equation~\eqref{equ:goal_loss} will output a non-zero value when the generated subgoal and the current state have a Euclidean distance larger than $\epsilon_k$ in the adjacency space, indicating non-adjacency. It is thus consistent with the $k$-step adjacency constraint. In practice, we plug $\mathcal{L}_{\mathrm{adj}}$ as an extra loss term into the original policy loss term of a specific high-level RL algorithm, e.g., TD error for temporal-difference learning methods. In the loss backpropagation phase, we keep the adjacency network fixed, and use the gradients only to update policy networks. We provide Algorithm~\ref{algo:hrac} to detail the training procedure of HRAC.

\section{Experimental Evaluation}
\label{sec:experiment}
We have presented the formulation and implementation of HRAC.
Our experiments are designed to answer the following questions: (1)~Can HRAC promote the generation of adjacent subgoals? (2)~Can HRAC improve the sample efficiency and the overall performance of goal-conditioned HRL? (3)~Can HRAC outperform other strategies that may also improve the learning efficiency of hierarchical agents, e.g., low-level hindsight experience replay~\cite{andrychowicz_hindsight_2017}?

\subsection{Environment Setup}
\label{subsec:setup}
We employed two types of tasks with discrete and continuous state and action spaces to evaluate the effectiveness of our method. Discrete control tasks include Key-Chest and Maze, where the agents are spawned in grid worlds with injected stochasticity and need to accomplish tasks that require both low-level control and high-level planning, as shown in Fig.~\ref{fig:discrete_env}. Continuous control tasks include two task suites. The first suite is a collection of quadrupedal robot locomotion tasks including Ant Gather, Ant Maze, Ant Push, and Ant Maze Sparse, where the first three tasks are widely-used benchmarks in the HRL community~\cite{duan_benchmarking_2016,florensa_stochastic_2017,nachum_data-efficient_2018,nachum_near-optimal_2019,levy_learning_2019}, and the last task is a more challenging locomotion task with sparse rewards, as shown in Fig.~\ref{fig:ant_env}. The second suite contains two robot arm manipulation tasks with sparse rewards, including FetchPush and FetchPickAndPlace introduced by~\cite{plappert_multi-goal_2018}, as depicted in Fig.~\ref{fig:fetch_env}. In discrete tasks and quadrupedal robot locomotion tasks, we used a pre-defined 2-dimensional goal space that represents the $(x,y)$ position of the agent; in manipulation tasks, we used a pre-defined 3-dimensional goal space representing the $(x,y,z)$ position of the gripper. The details of each environment are as follows:

\textit{Key-Chest:} this environment is a grid world with size $13\times17$, as shown in Fig.~\ref{subfig:keychest}. In this environment, the agent (A) starts from a random position and needs to pick up the key (K) first, then uses the key to open the chest (C). The environment has a discrete 3-dimensional state space, where the first two dimensions represent the $(x,y)$ position of the agent respectively, and the third dimension represents whether the agent has picked up the key (1 if the agent has the key and 0 otherwise). The action space is discrete with size 4, containing actions moving towards four directions. The agent is provided with sparse rewards of $+1$ and $+5$, respectively for picking up the key and opening the chest. Each episode ends if the agent opens the chest or runs out of the step limit of 500. Environmental stochasticity is introduced by replacing the action of the agent with a random action each step with a probability of 0.25.

\textit{Maze:} this environment is a grid world with size $13\times17$, as shown in Fig.~\ref{subfig:maze}. The agent (A) starts from a fixed position and needs to reach the final goal (G) in the middle of the maze. The environment has a discrete 2-dimensional state space representing the $(x,y)$ position of the agent. The action space is the same as the Key-Chest environment. The agent is provided with dense rewards to facilitate exploration, i.e., $+0.1$ each step if the agent moves closer to the goal, and $-0.1$ each step if the agent moves farther. Each episode has a maximum length of 200. The random action probability of the environment is 0.25.

\textit{Ant Gather:} this environment defines a quadrupedal robot locomotion task, as shown in Fig.~\ref{subfig:antgather}. The ant robot is spawned at the center of the map and needs to gather apples while avoiding bombs. Both apples and bombs are randomly placed in the environment at the beginning of each episode. The environment has a continuous state space including the current position and velocity of the robot, the current time step $t$, and the depth readings defined by the standard Gather environment~\cite{duan_benchmarking_2016}. The depth readings represent the Euclidean distances between the agent and the nearby apples and bombs. Following the settings in prior works~\cite{florensa_stochastic_2017,nachum_data-efficient_2018}, we set the activity range of the sensor to 10 and the sensor span to $2\pi$. We use the ant robot pre-defined by Rllab, with an 8-dimensional continuous action space. The agent receives a positive reward of $+1$ for each apple and a negative reward of $-1$ for each bomb. Each episode terminates at 500 time steps. No random action is applied.

\begin{figure}[t]
    \centering
    \subfigure[]{\includegraphics[width=0.37\linewidth]{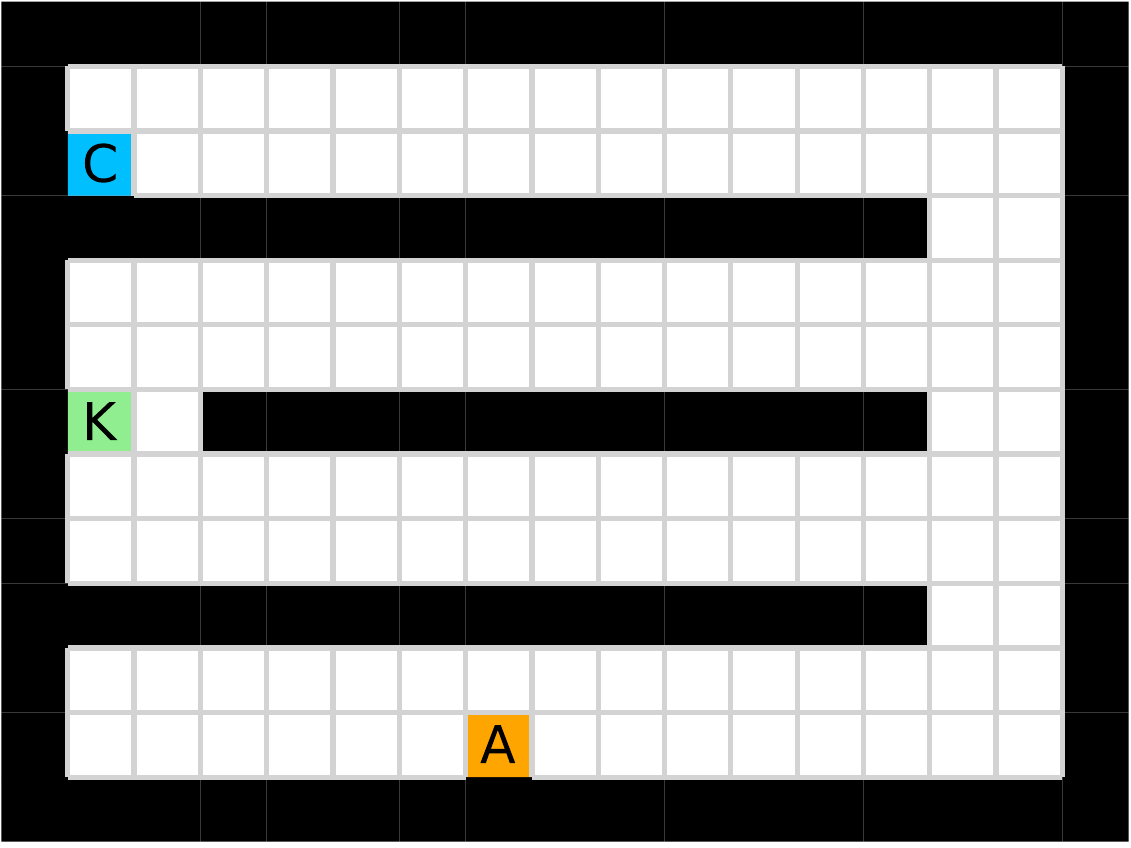}\label{subfig:keychest}}
    \hspace{1em}
    \subfigure[]{\includegraphics[width=0.37\linewidth]{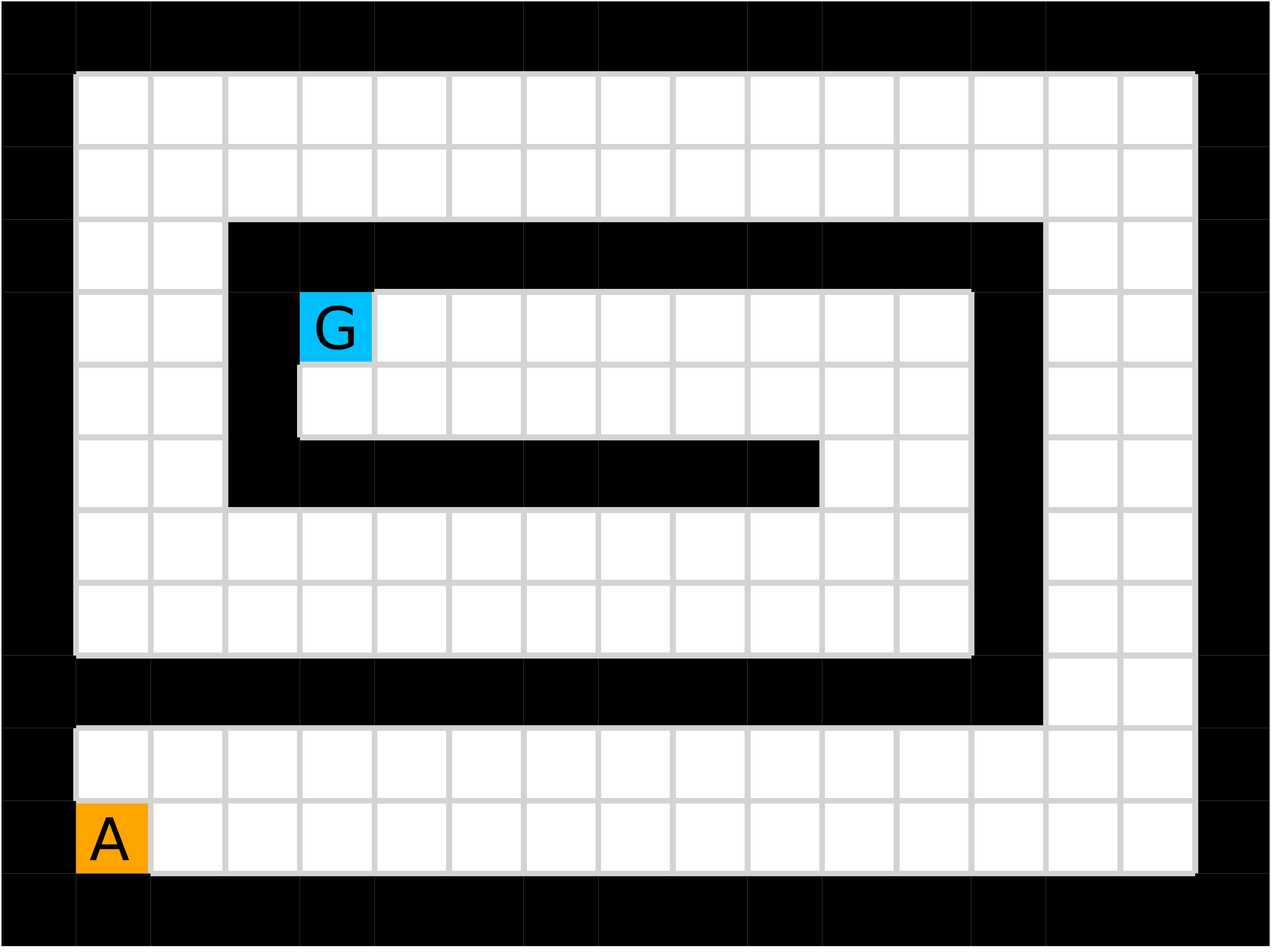}\label{subfig:maze}}
    \caption{Discrete control environments used in our experiments. (a)~Key-Chest. (b)~Maze.}
    \label{fig:discrete_env}
\end{figure}

\begin{figure}[t]
    \centering
    \subfigure[]{\includegraphics[width=0.37\linewidth]{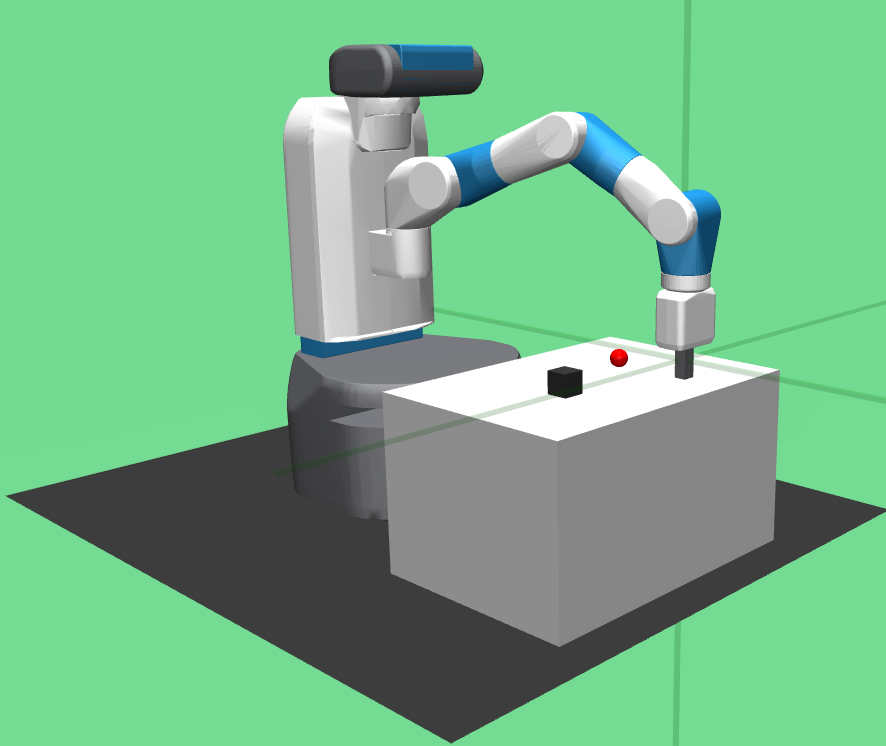}\label{subfig:fetchpush}}
    \hspace{1em}
    \subfigure[]{\includegraphics[width=0.363\linewidth]{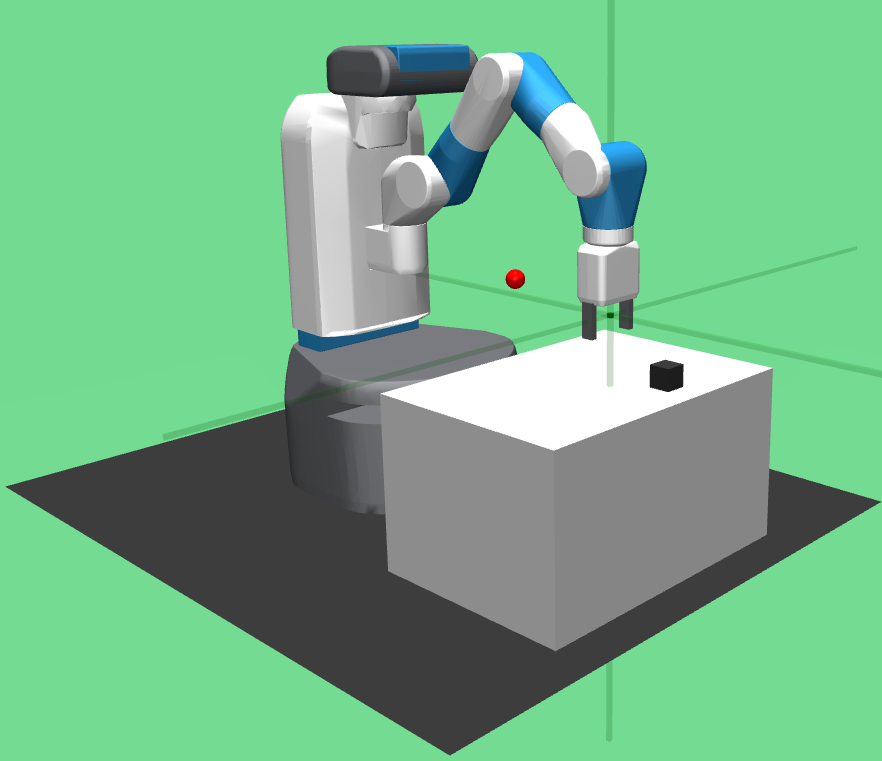}\label{subfig:fetchpick}}
    \caption{Robot arm manipulation environments used in our experiments. (a)~FetchPush. (b)~FetchPickAndPlace.}
    \label{fig:fetch_env}
\end{figure}

\textit{Ant Maze:} this environment defines a quadrupedal robot locomotion task, as shown in Fig.~\ref{subfig:antmaze}. The ant robot starts from the bottom left of a maze and needs to reach a target position. The environment has a continuous state space including the current position and velocity, the target location, and the current time step $t$. In the training stage, the environment randomly samples a target position at the beginning of each episode, and the agent receives a dense reward at each time step according to its negative Euclidean distance from the target position with a scaling of 0.1. At the evaluation stage, the target position is fixed to a hard goal at the top left of the maze, and success is defined as being within a Euclidean distance of 5 from the target. Each episode ends at 500 time steps. No random action is applied.

\textit{Ant Maze Sparse:} this environment defines a quadrupedal robot locomotion task with sparse rewards, as shown in Fig.~\ref{subfig:antmazesparse}. The ant robot starts from a random position in a maze and needs to reach a target position at the center of the maze. The environment has the same state and action spaces as the Ant Maze environment. The agent is rewarded by $+1$ only if it reaches the goal, which is defined as having a Euclidean distance smaller than 1 from the goal. At the beginning of each episode, the agent is randomly placed in the maze except at the goal position. Each episode is terminated if the agent reaches the goal or after 500 steps. No random action is applied.

\textit{Ant Push:} this environment defines a quadrupedal robot locomotion task with object manipulation, as shown in Fig.~\ref{subfig:antpush}. The ant robot starts at the bottom of the maze and needs to reach a target position at the top. Since the goal is initially blocked by a movable block (red), the agent needs to first push the block away before reaching the goal. The environment has the same state and action spaces as the Ant Maze environment. During training, the agent receives a dense reward at each time step according to its negative Euclidean distance from the target position with a scaling of 0.1. At the evaluation stage, success is defined as being within a Euclidean distance of 5 from the target. Each episode ends at 500 time steps. No random action is applied.

\textit{FetchPush:} this environment defines a robot arm manipulation task with sparse rewards. In this environment, an object is placed in front of the robot and the goal is to move it to a target location on the table, as shown in Fig.~\ref{subfig:fetchpush}. The environment has a continuous state space that includes the Cartesian positions and the velocities of the gripper and the object as well as their relative positions and velocities, as detailed in~\cite{plappert_multi-goal_2018}. The action space is 4-dimensional, with the first 3 dimensions specifying the desired gripper movement in Cartesian coordinates and the last dimension representing the opening and closing status of the gripper. The environment has a 3-dimensional goal space that describes the desired position of the object, which is randomized at the beginning of each episode. The agent receives a sparse reward of $0$ if the object is at the target location within a tolerance of 5 cm and $-1$ otherwise. No random action is applied.

\textit{FetchPickAndPlace:} this environment defines a robot arm manipulation task with sparse rewards. In this environment, an object is placed in front of the robot and the goal is to grasp the box and move it to the target location which may be on the table or in the air above the table, as shown in Fig.~\ref{subfig:fetchpick}. The state space, action space, goal space and reward setting are the same as the FetchPush task. Compared with FetchPush, this task is more challenging since the robot often needs to grasp the object first and then move it to the target position before obtaining the final reward, which poses difficulty on exploration. No random action is applied.

\begin{figure}[t]
    \centering
    \subfigure[]{\includegraphics[width=0.33\linewidth]{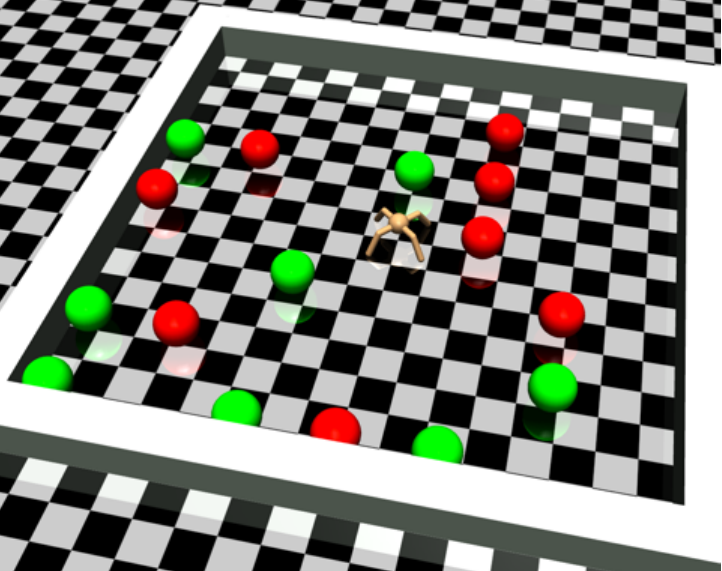}\label{subfig:antgather}}
    \hspace{1em}
    \subfigure[]{\includegraphics[width=0.41\linewidth]{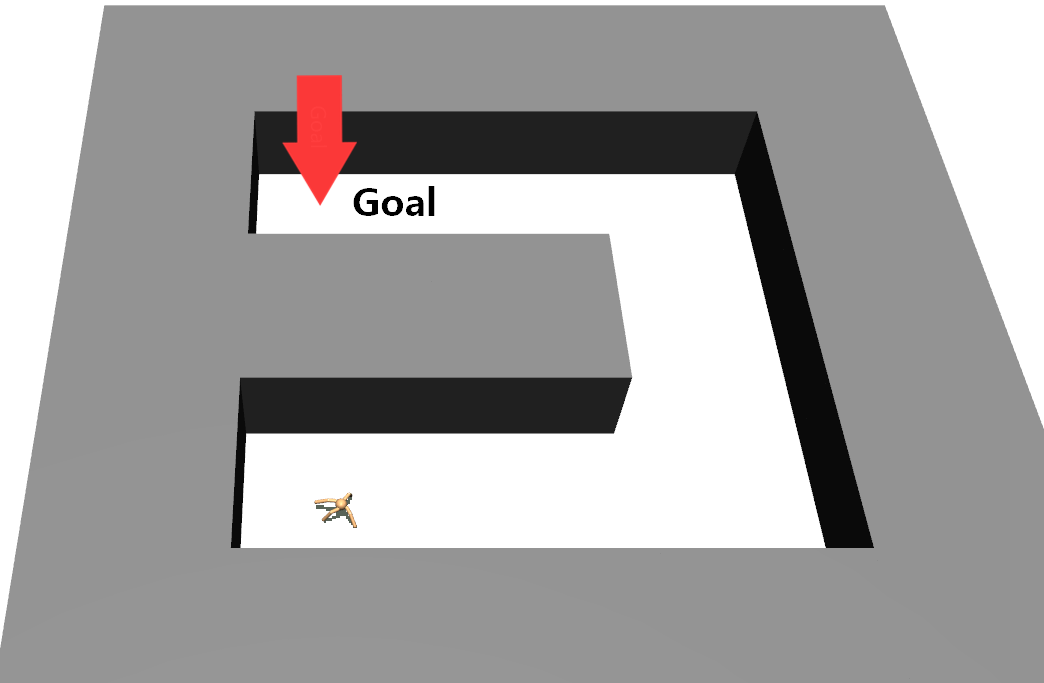}\label{subfig:antmaze}}\\
    \subfigure[]{\includegraphics[width=0.38\linewidth]{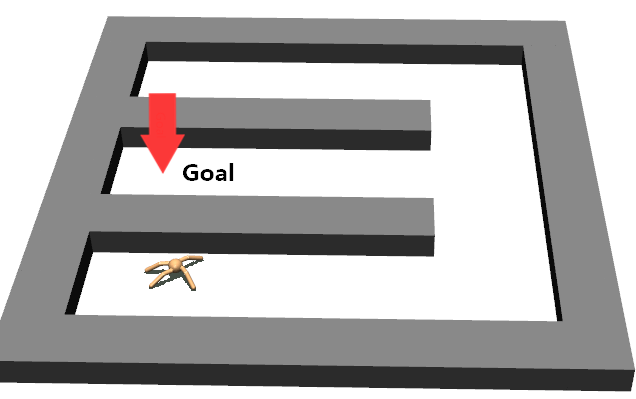}\label{subfig:antmazesparse}}
    \hspace{1em}
    \subfigure[]{\includegraphics[width=0.4\linewidth]{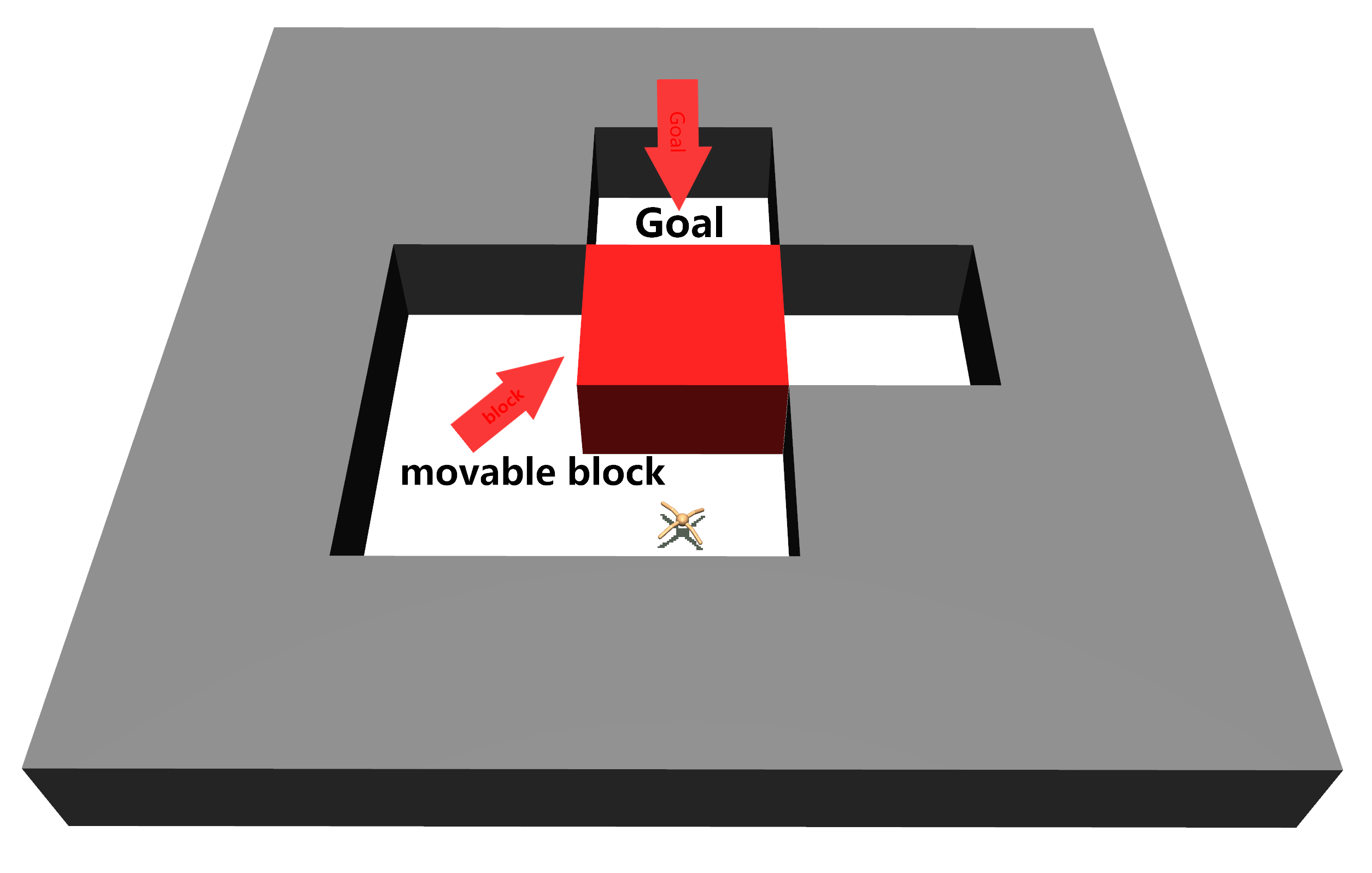}\label{subfig:antpush}}
    \caption{Quadrupedal robot locomotion environments used in our experiments. (a)~Ant Gather (adapted from Duan et al.~\cite{duan_benchmarking_2016}). (b)~Ant Maze. (c)~Ant Maze Sparse. (d)~Ant Push.}
    \label{fig:ant_env}
\end{figure}

\subsection{Comparative Experiments}
\label{subsec:comp_exp}

To comprehensively evaluate the performance of HRAC with different HRL implementations, we employed different HRL instances on different tasks. On discrete tasks, we used off-policy TD3~\cite{fujimoto_addressing_2018} for high-level training and on-policy A2C, the synchronous variant of A3C~\cite{mnih_asynchronous_2016}, for the low-level. On quadrupedal robot locomotion tasks, we used TD3 for both the high-level and the low-level training, following prior work~\cite{nachum_data-efficient_2018}, and discretized the goal space to $1\times1$ grids for adjacency learning. For robot arm manipulation tasks, since the action space of Fetch environments directly represents the desired difference of coordinates (on quadrupedal robot locomotion tasks, the original action space represents the torque of each joint of the ant robot), training a low-level goal-reaching policy is no longer necessary. Instead, we implement our hierarchical agent with a high-level policy network. We repeat the 4-dimension action output by the network for $K$ steps (with subgoal transitions on the first 3 dimensions) to introduce temporal abstraction. The base training algorithm is DDPG~\cite{lillicrap_continuous_2016}, which matches~\cite{plappert_multi-goal_2018}. The precision of goal space discretization is $0.1$. More implementation details, network architectures and hyperparameters are in Appendix~\ref{appsec:imp}.
\begin{figure}
    \centering
    \includegraphics[width=0.4\linewidth]{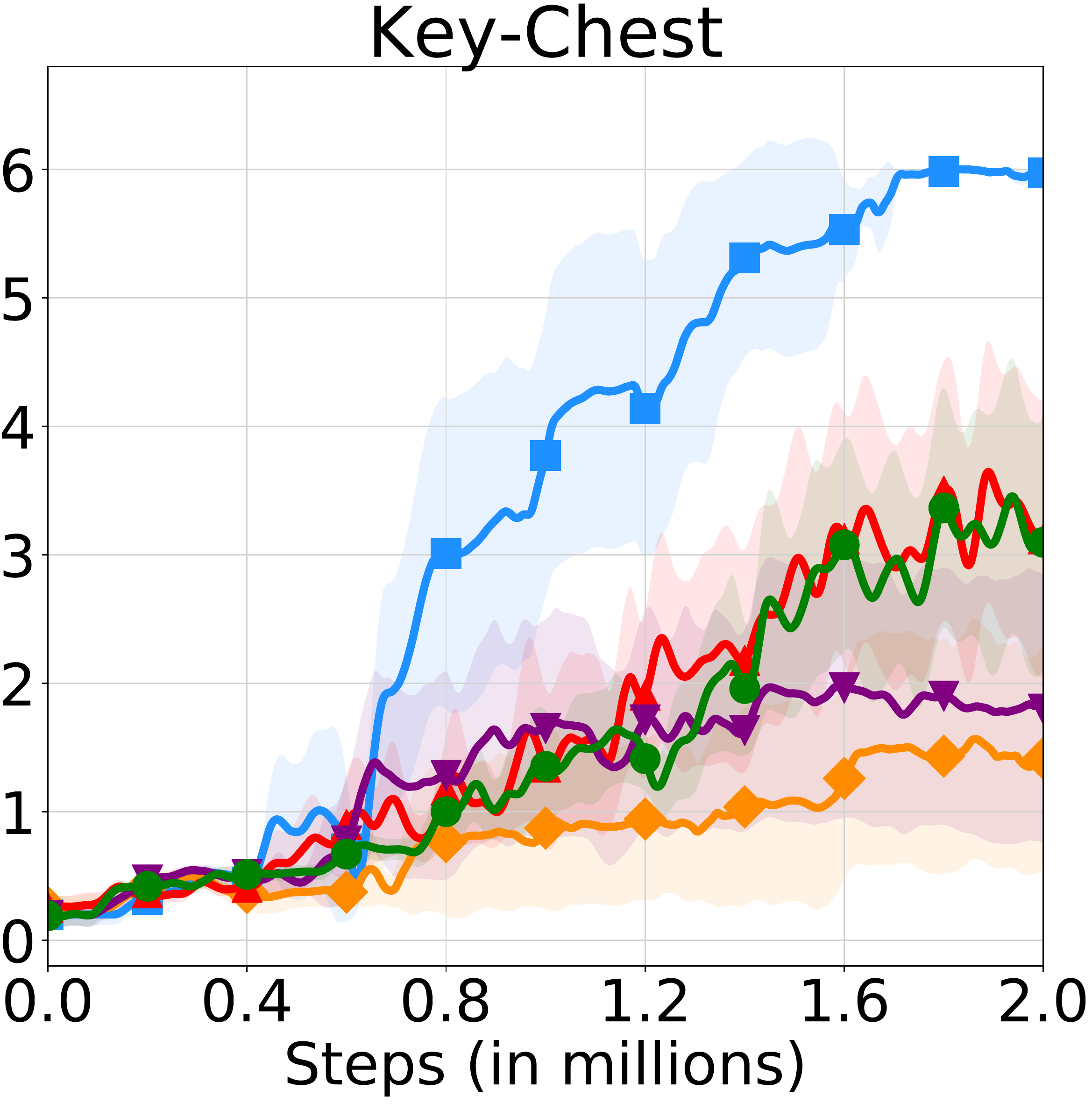}    
    \hspace{0.8em}
    \includegraphics[width=0.4\linewidth]{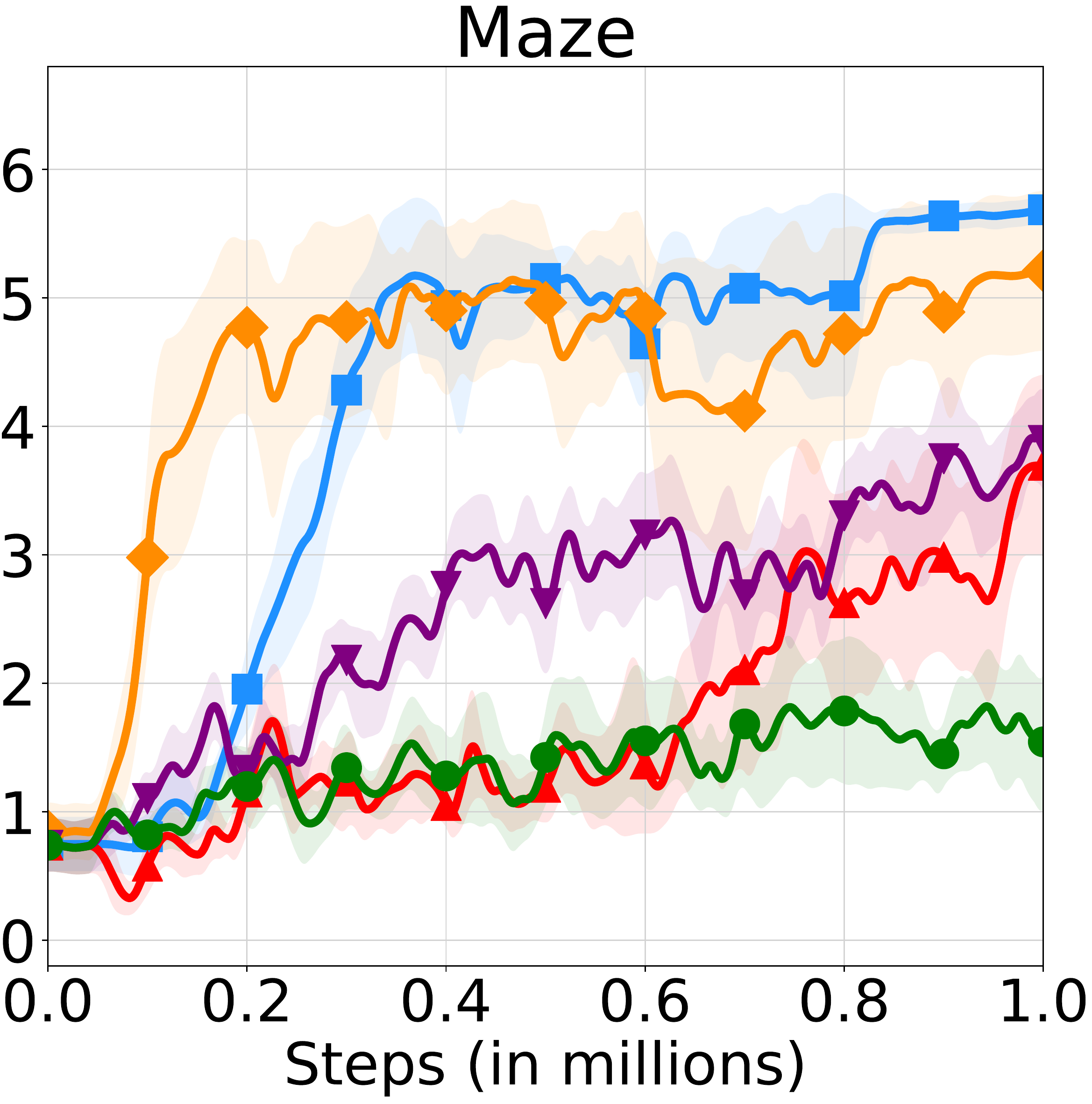} \\
    \vspace{0.8em}
    \includegraphics[width=0.99\linewidth]{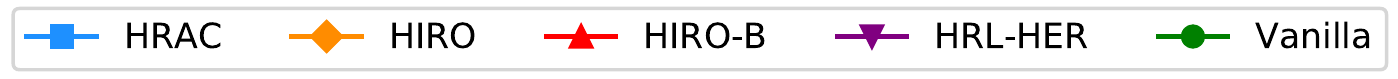}
    \caption{Learning curves of HRAC and baselines on discrete control tasks.}
    \label{fig:discrete_results}
\end{figure}

\begin{figure}
    \centering
    \includegraphics[width=0.40\linewidth]{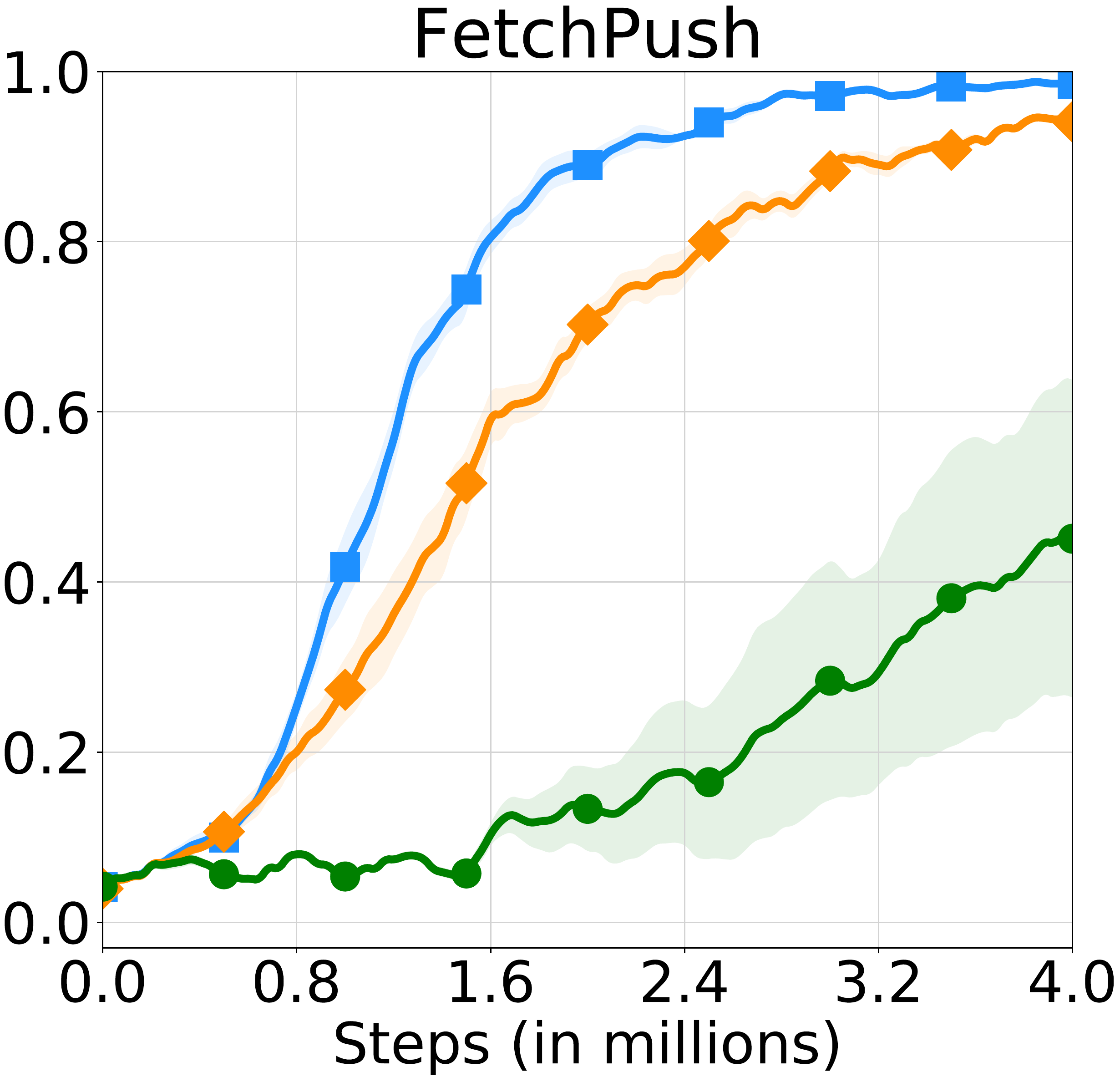}    
    \hspace{0.5em}
    \includegraphics[width=0.40\linewidth]{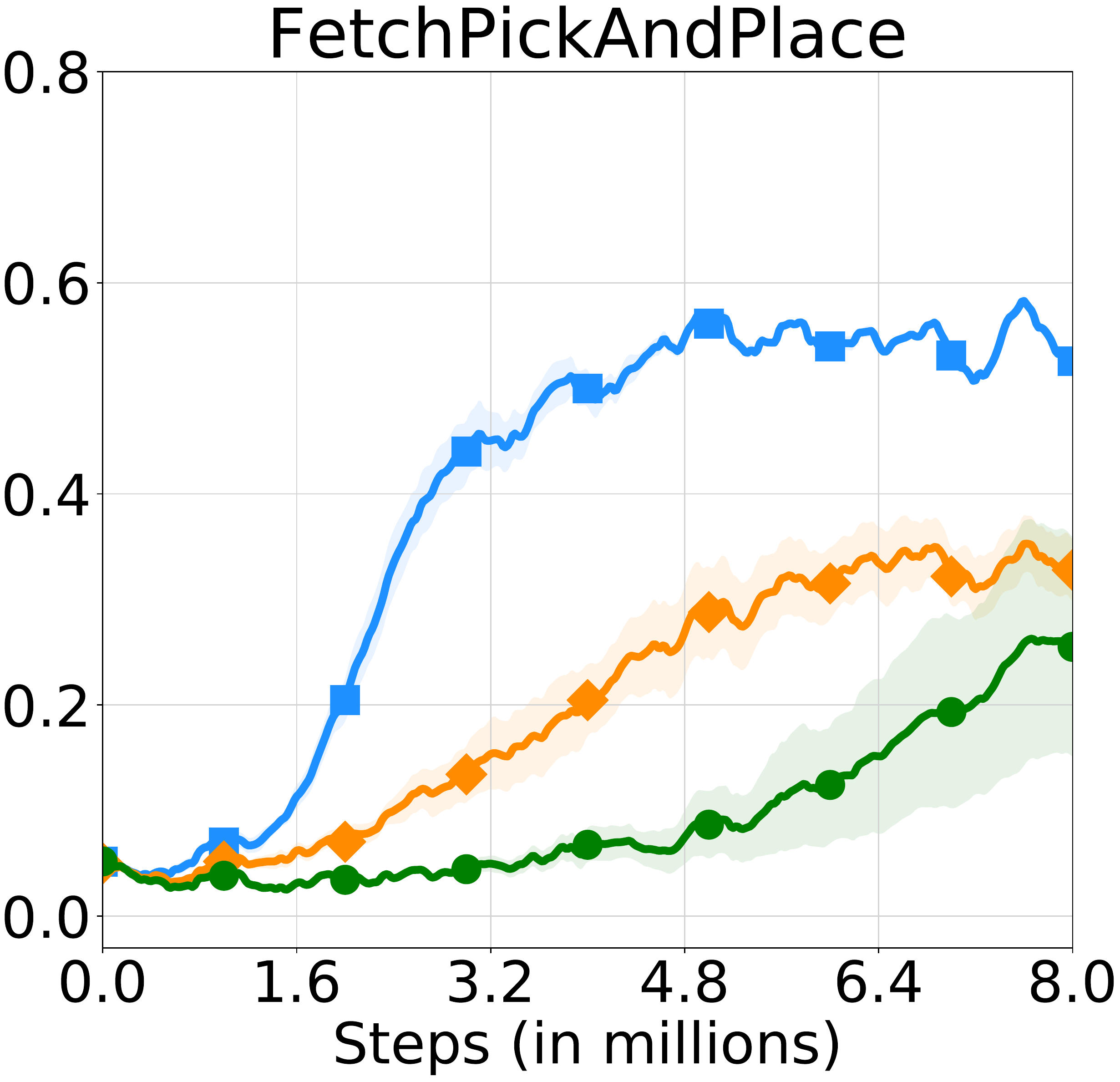}\\
    \vspace{0.5em}
    \includegraphics[width=0.55\linewidth]{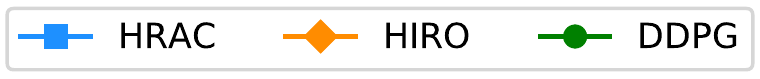}
    \caption{Learning curves of HRAC and baselines on manipulation tasks.}
    \label{fig:fetch_results}
\end{figure}
\subsubsection{Discrete Control Tasks}

We compared HRAC with the following baselines.

\textit{HIRO~\cite{nachum_data-efficient_2018}:} one of the state-of-the-art goal-conditioned HRL approaches.
By limiting the range of directional subgoals generated by the high-level, HIRO can roughly control the Euclidean distance between the absolute subgoal and the current state in the raw goal space rather than the learned adjacency space.

\textit{HIRO-B:} a baseline analogous to HIRO, using binary intrinsic reward for subgoal reaching instead of the shaped reward used by HIRO.

\textit{HRL-HER:} a baseline that employs hindsight experience replay (HER)~\cite{andrychowicz_hindsight_2017} to produce alternative successful subgoal-reaching experiences as complementary low-level learning signals~\cite{levy_learning_2019}.

\textit{Vanilla:} Kulkarni et al.~\cite{kulkarni_hierarchical_2016} used absolute subgoals instead of directional subgoals and adopted a binary intrinsic reward setting.

The learning curves of HRAC and baselines across all tasks are plotted in Fig.~\ref{fig:discrete_results}. Each curve and its shaded region represent mean episode reward and standard error of the mean respectively, averaged over 5 independent trials. All curves have been smoothed equally for visual clarity. In the Maze task with dense rewards, HRAC achieves comparable performance with HIRO and outperforms other baselines, while in the Key-Chest task HRAC achieves better convergency speed and final performance. We can also identify the effect of reward shaping and hindsight: the baseline HIRO that uses reward shaping achieves a very fast convergence rate in the Maze task with dense external rewards, but its performance severely degrades in more difficult Key-Chest and Ant Maze Sparse tasks with sparse reward. Analagous phenomena can be found in the use of the hindsight strategy. Finally, HRAC consistently outperforms the vanilla baseline among all tasks, demonstrating the effectiveness of the introduced $k$-step adjacency constraint.

\subsubsection{Quadrupedal Robot Locomotion Tasks}
We considered the same baselines as discrete control tasks, including HIRO, HIRO-B, HRL-HER, and Vanilla.

The learning curves of HRAC and baselines across all tasks are plotted in Fig.~\ref{fig:ant_results}. Each curve and its shaded region represent mean episode reward (for Ant Maze) or mean success rate (for others) and standard error of the mean respectively, averaged over 5 independent trials. All curves have been smoothed equally for visual clarity. Across all tasks, HRAC consistently surpasses all baselines in terms of both sample efficiency and asymptotic performance. We note that the performance of the baseline HRL-HER matches the results in the previous study~\cite{nachum_data-efficient_2018} where introducing hindsight techniques often degrades the performance of HRL, potentially due to the additional burden introduced on low-level training. 

\begin{figure}
    \centering
    \includegraphics[width=0.4\linewidth]{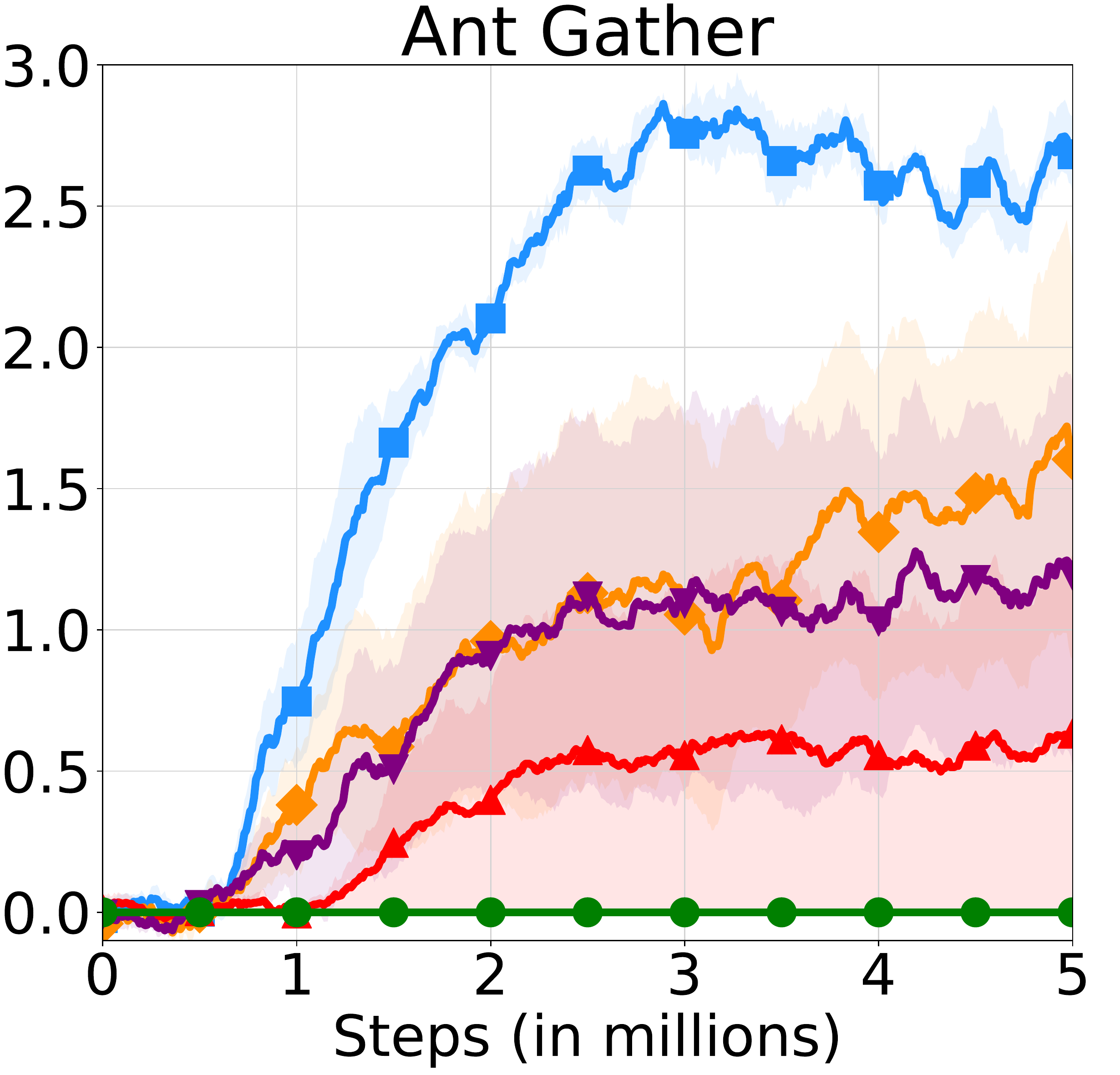}
    \hspace{0.8em}
    \includegraphics[width=0.4\linewidth]{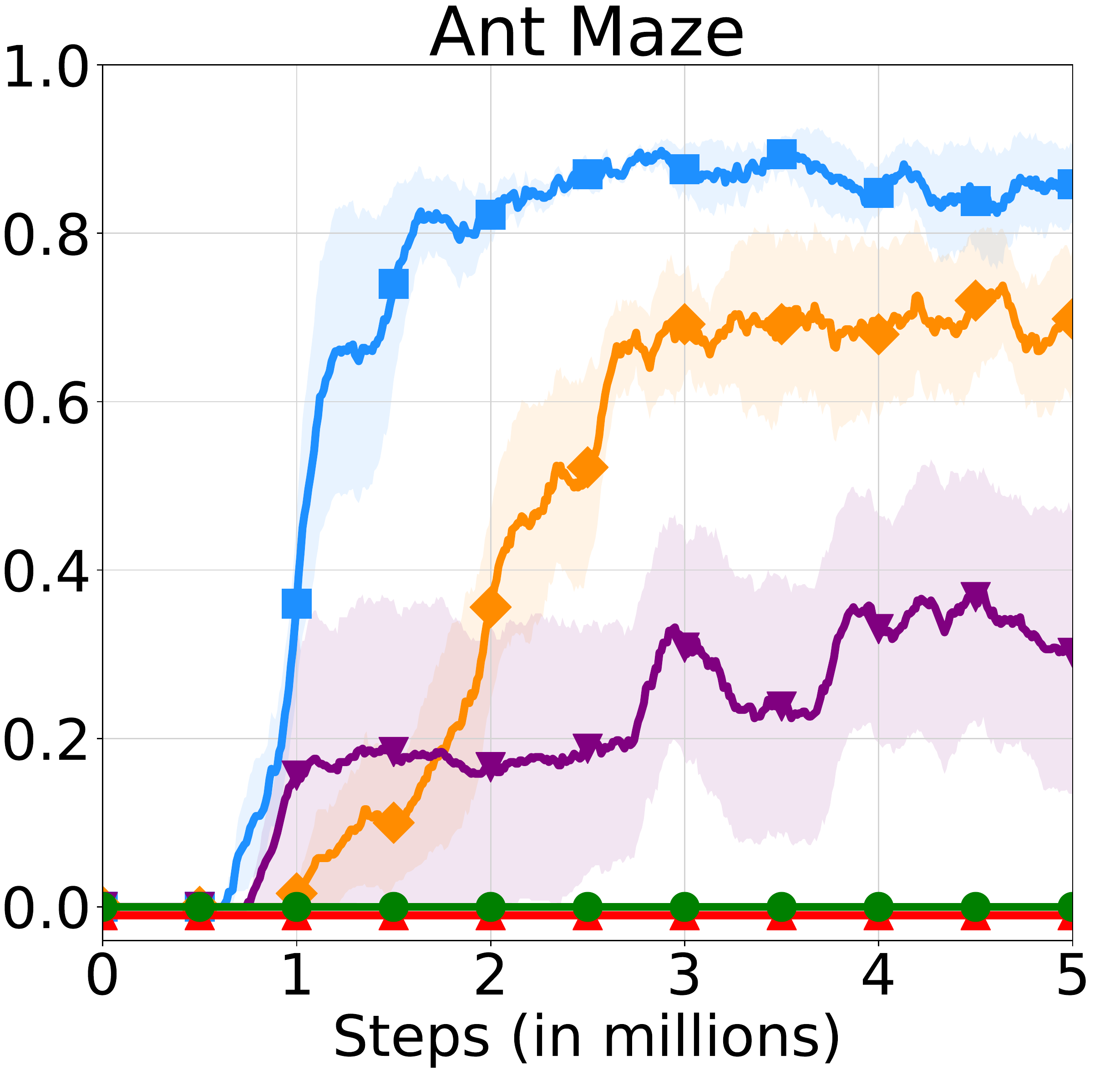}\\
    \includegraphics[width=0.4\linewidth]{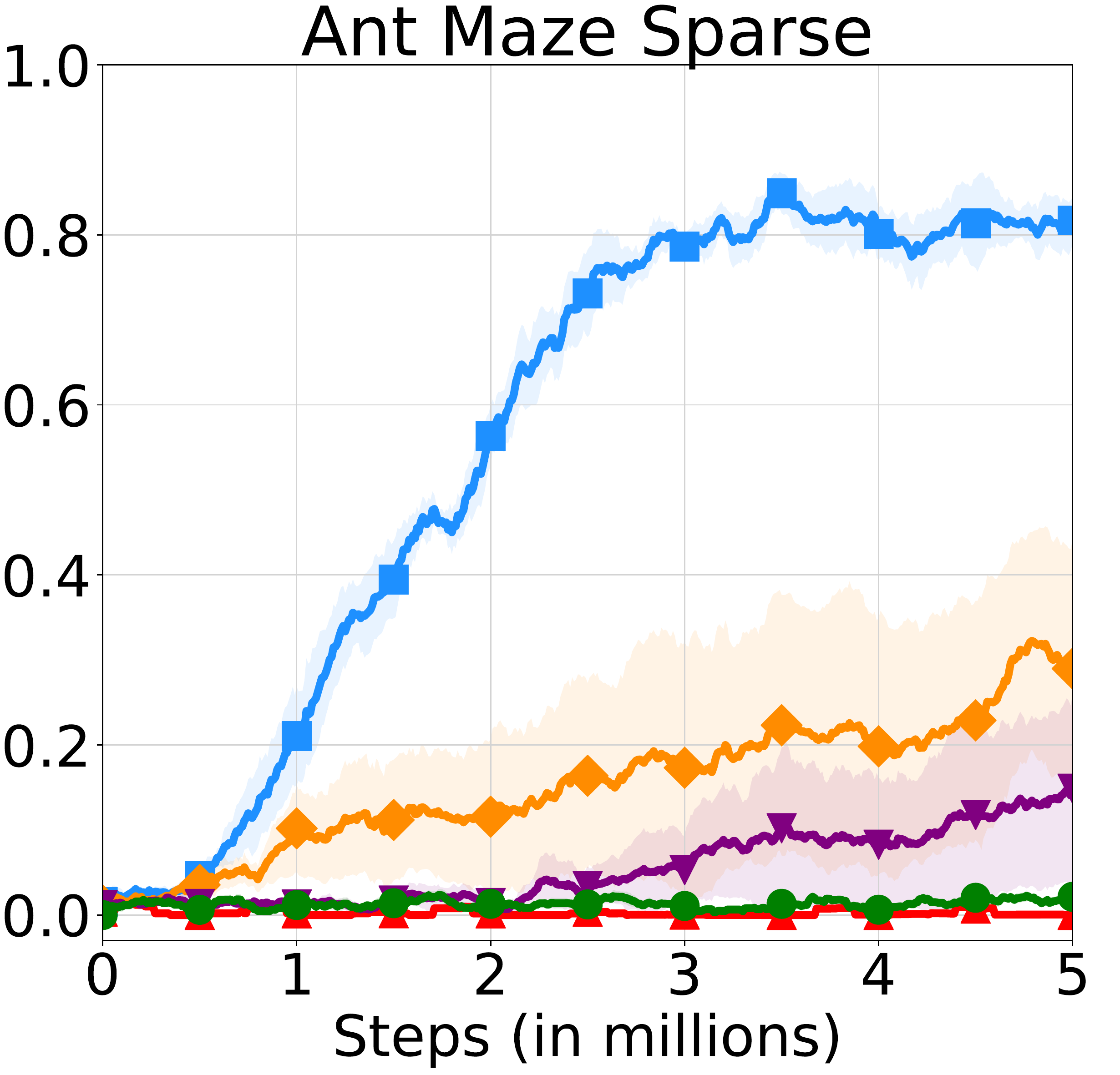}
    \hspace{0.8em}
    \includegraphics[width=0.4\linewidth]{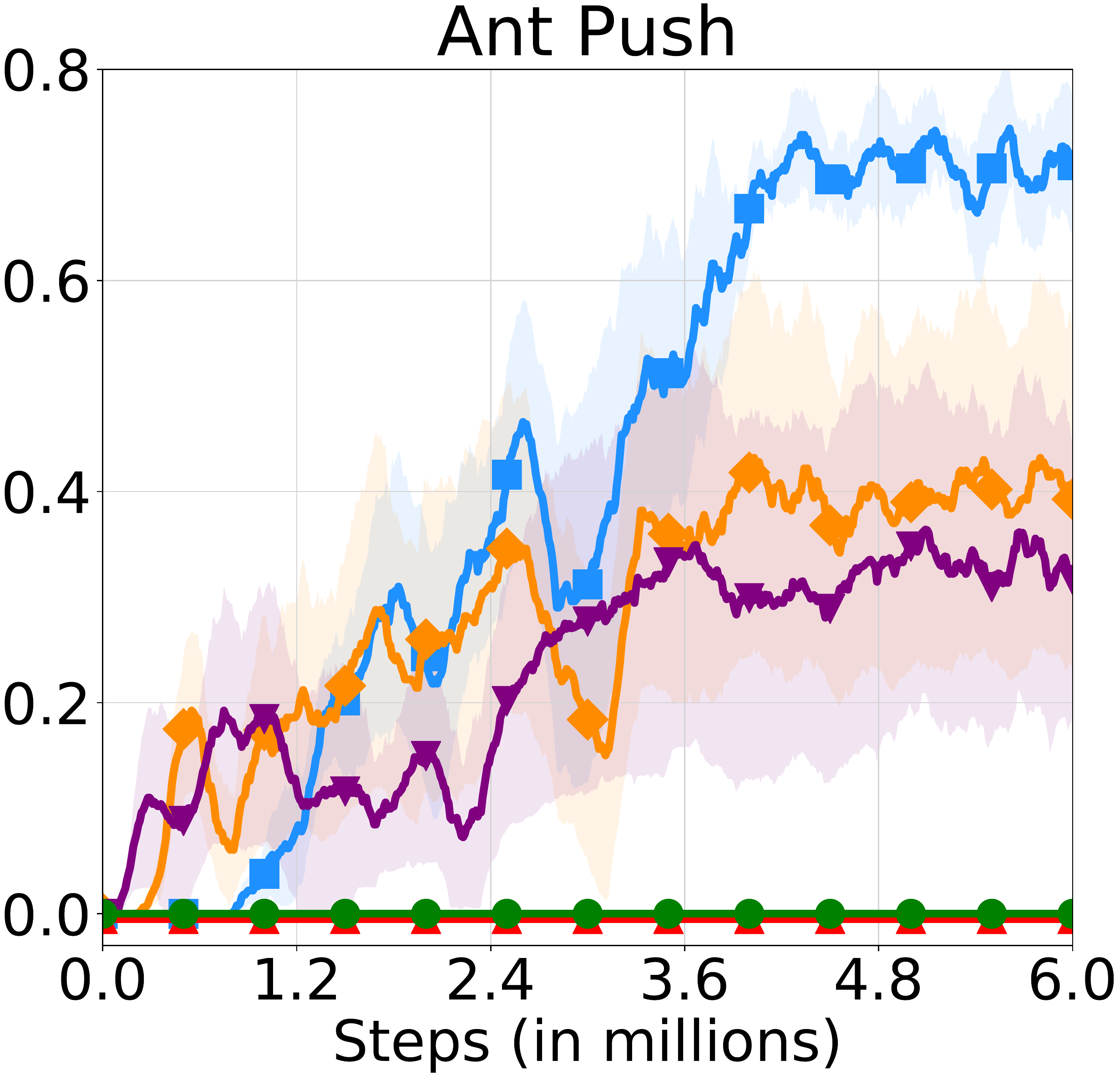}\\
    \vspace{0.8em}
    \includegraphics[width=0.99\linewidth]{legend.pdf}
    \caption{Learning curves of HRAC and baselines on locomotion tasks.}
    \label{fig:ant_results}
\end{figure}

\subsubsection{Robot Arm Manipulation Tasks}
We compare HRAC with HIRO and DDPG~\cite{lillicrap_continuous_2016}, of which the latter is a common, non-hierarchical reinforcement learning method in this task suite. For all methods, we do not use the (high-level) HER technique to better underline the efficacy of policy hierarchy in sparse reward tasks.

\begin{figure}
\centering
\includegraphics[width=0.31\linewidth]{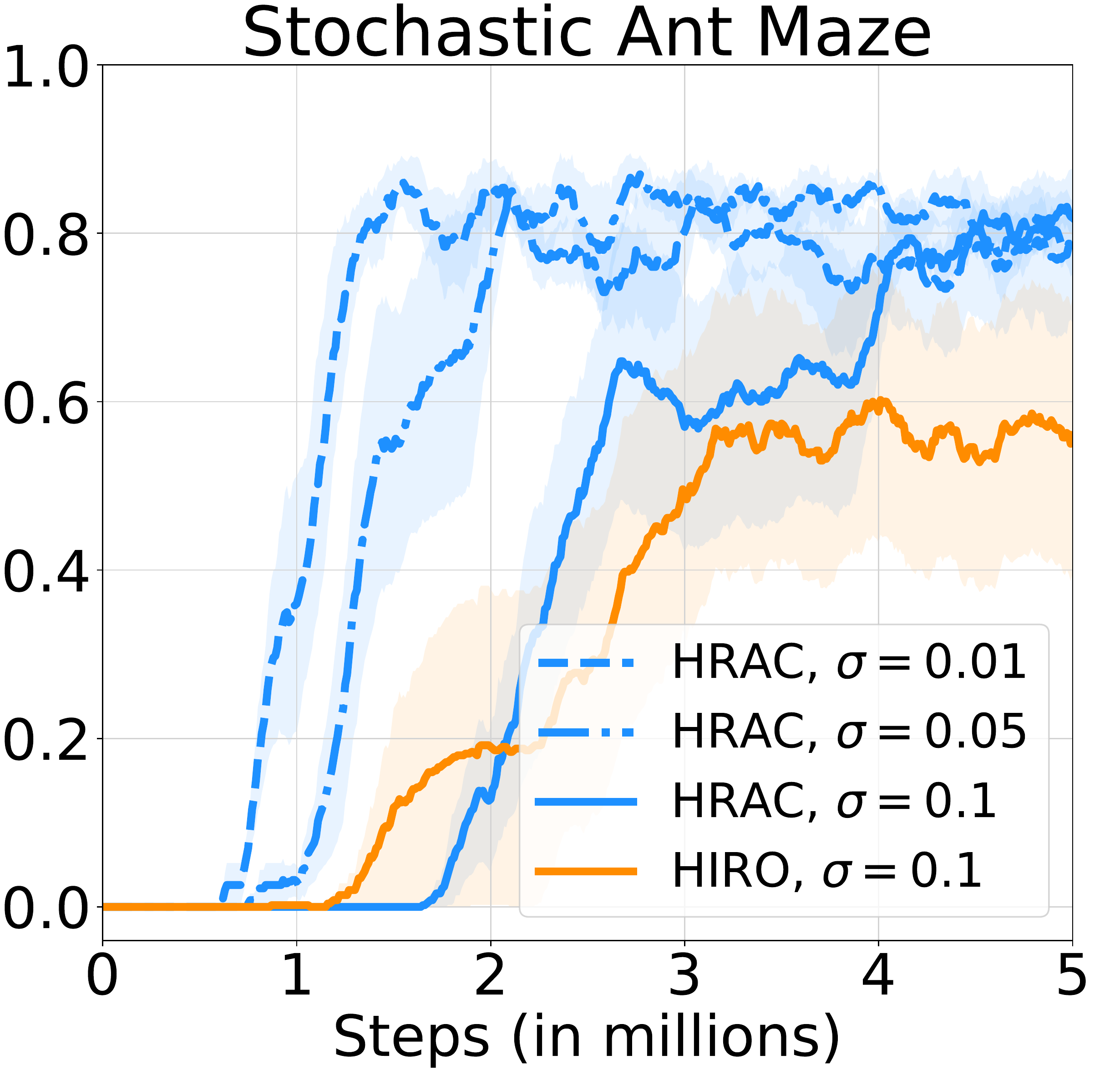}
\hspace{0.4em}
\includegraphics[width=0.31\linewidth]{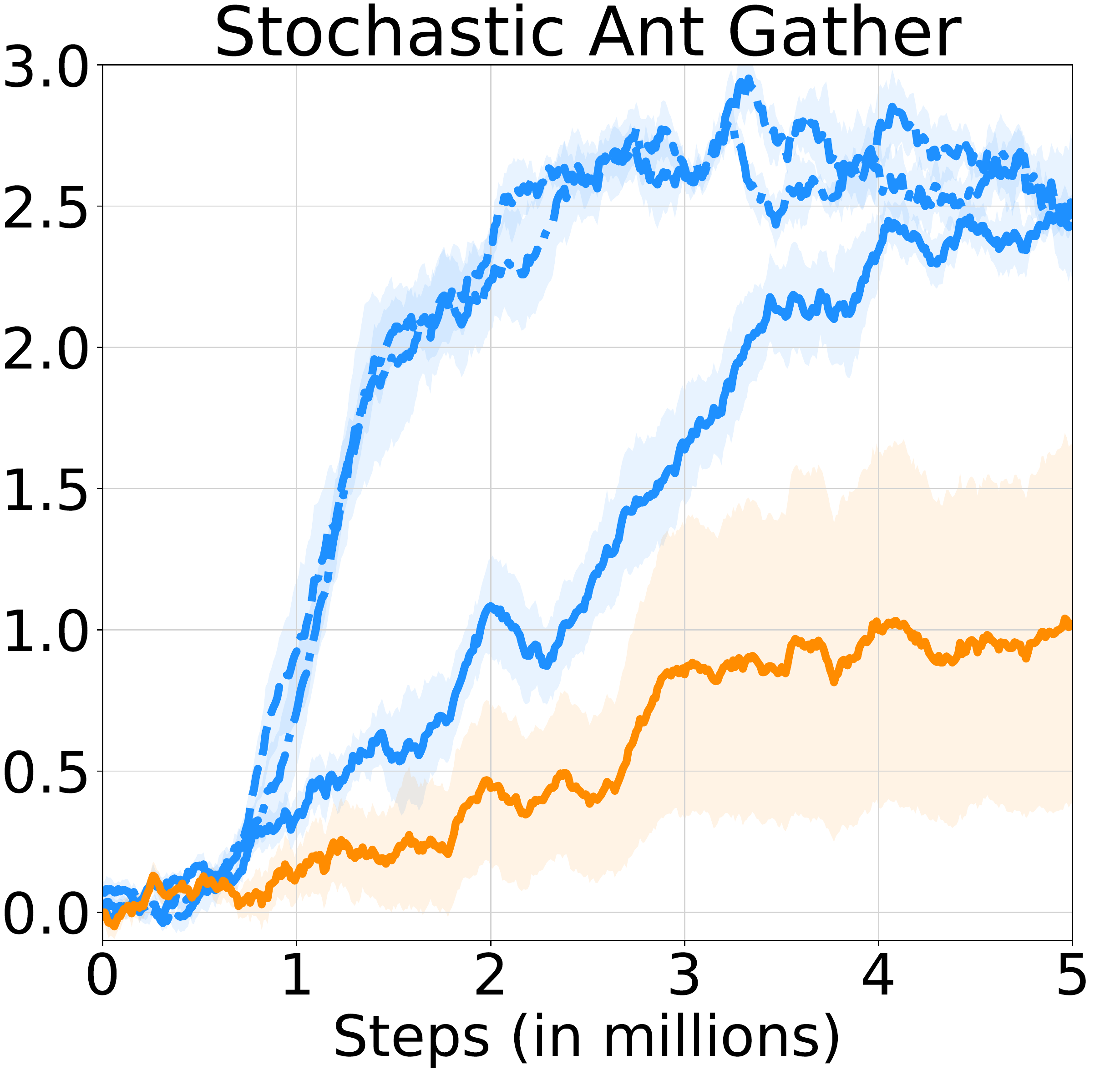}
\hspace{0.4em}
\includegraphics[width=0.31\linewidth]{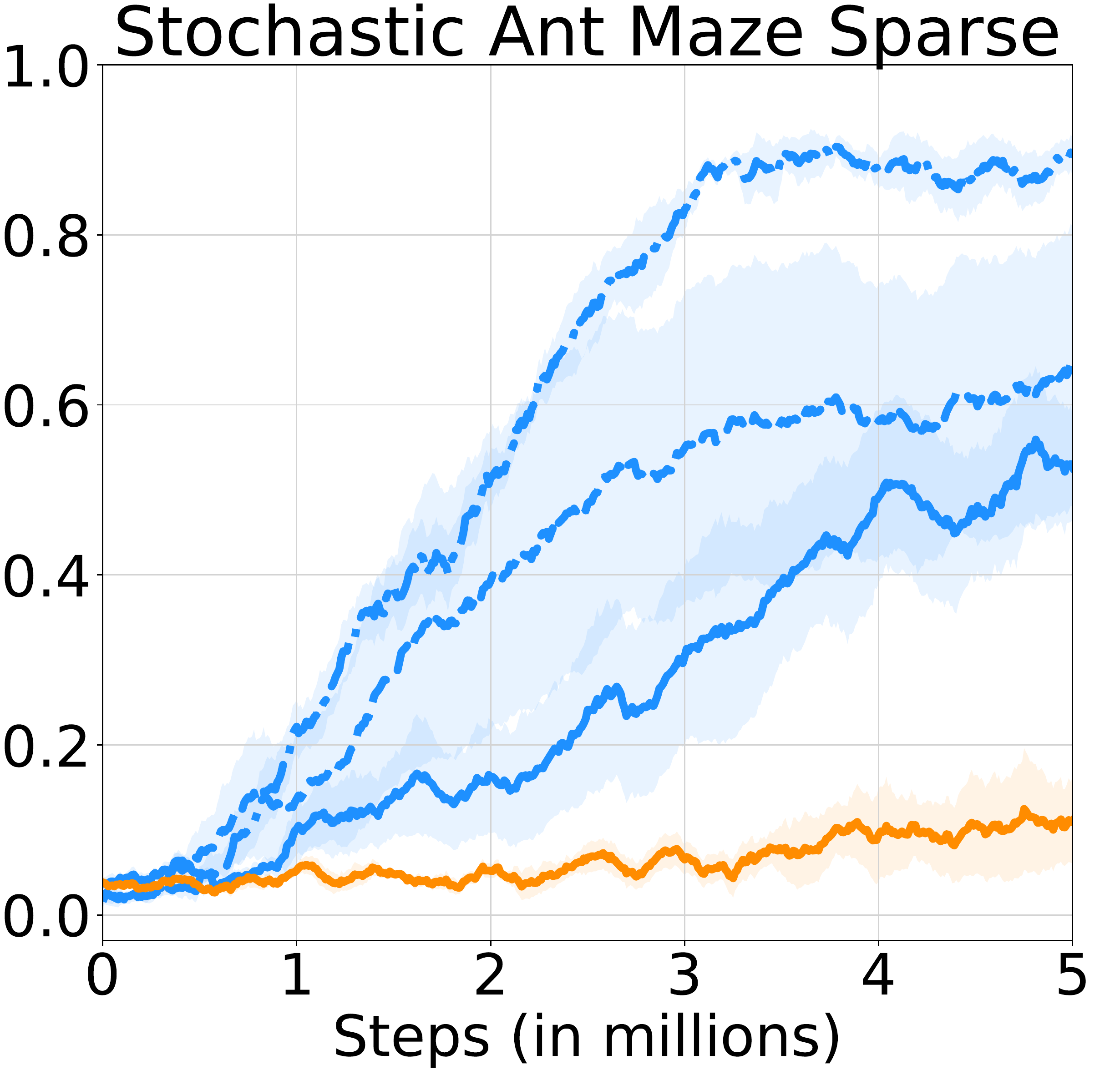}
\caption{Learning curves in stochastic Ant Maze, stochastic Ant Gather, and stochastic Ant Maze Sparse environments.}
\label{fig:stochastic}
\end{figure}

\begin{figure}[t]
\centering
\includegraphics[width=0.3\linewidth]{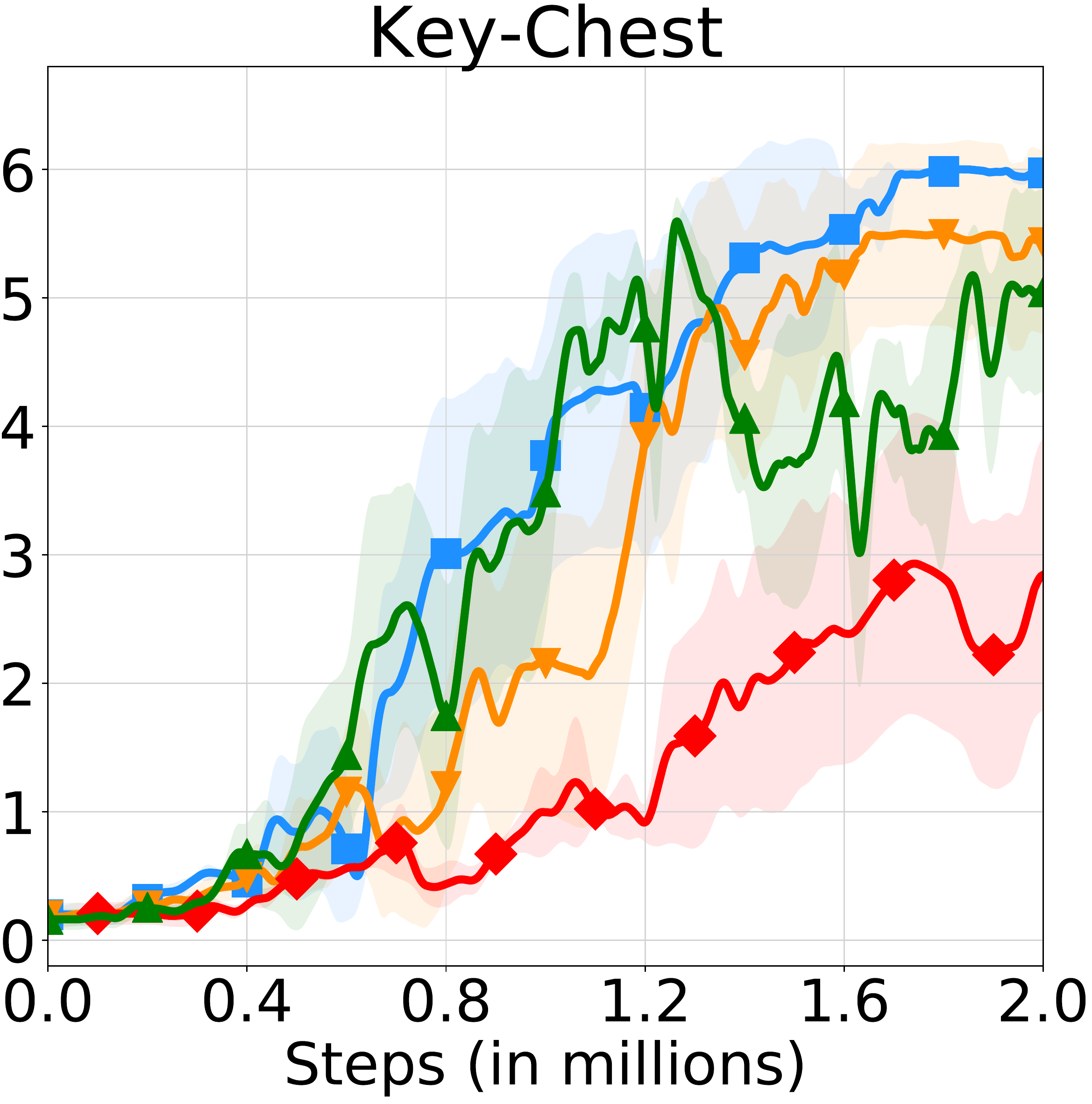}
\hspace{0.1em}
\includegraphics[width=0.3\linewidth]{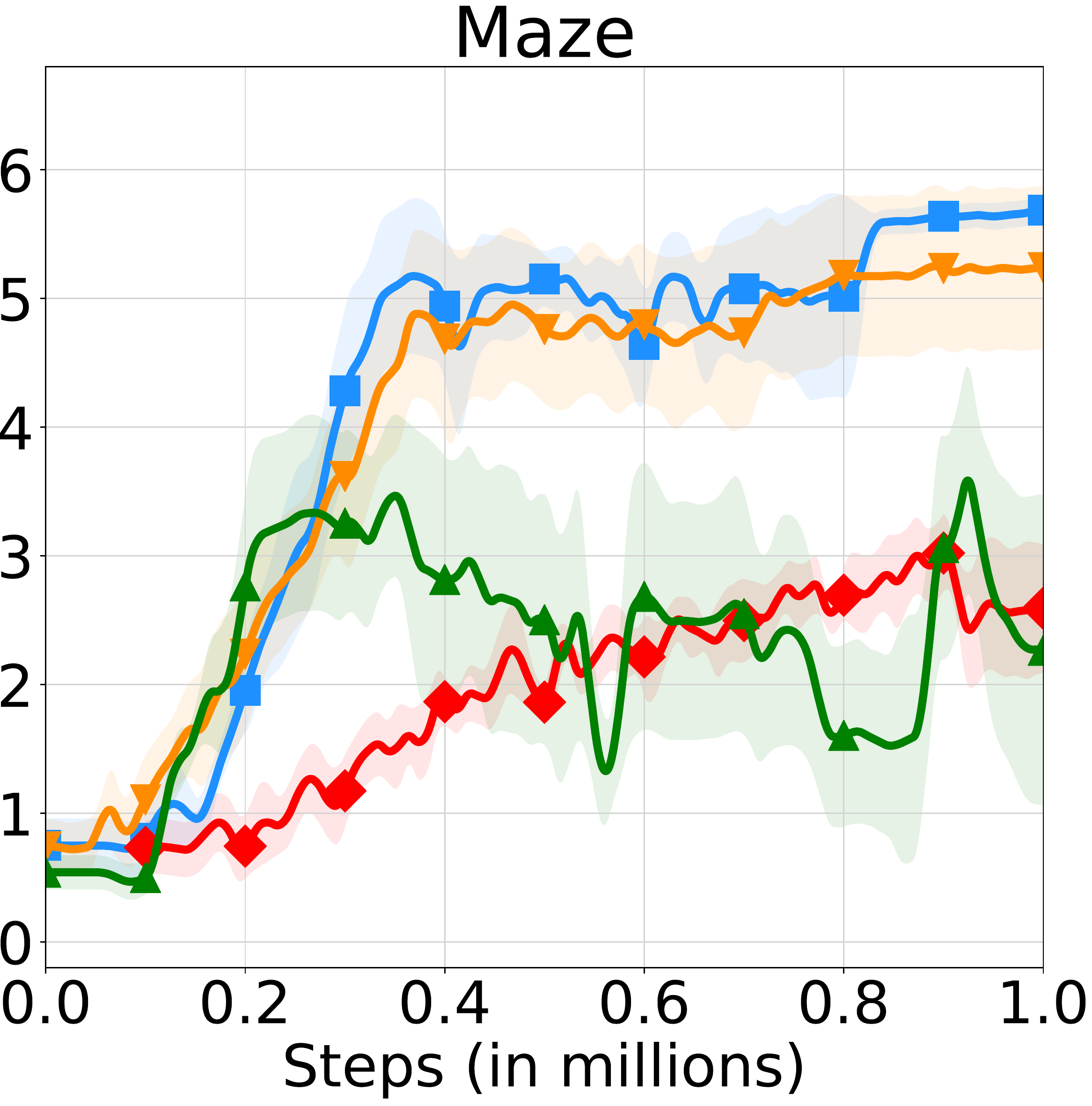}
\hspace{0.1em}
\includegraphics[width=0.31\linewidth]{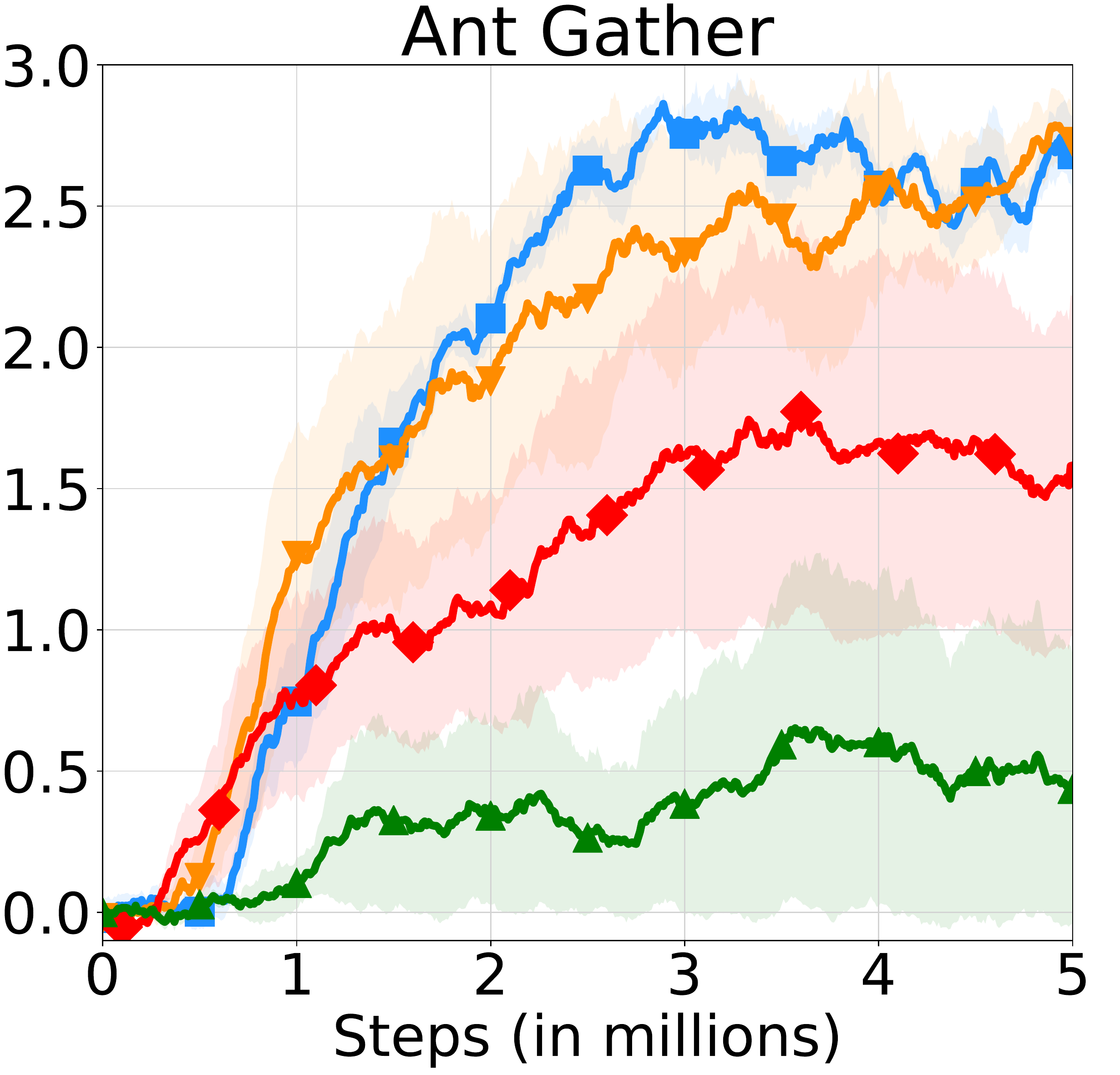} \\
\vspace{0.5em}
\includegraphics[width=0.85\linewidth]{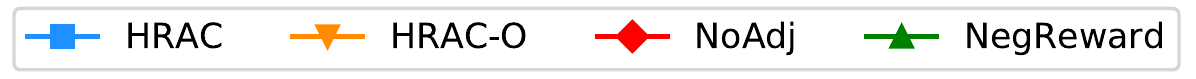}
\caption{Learning curves in the ablation study.}
\label{fig:ablation}
\end{figure}

\begin{figure}[t]
\centering
\includegraphics[width=0.4\linewidth]{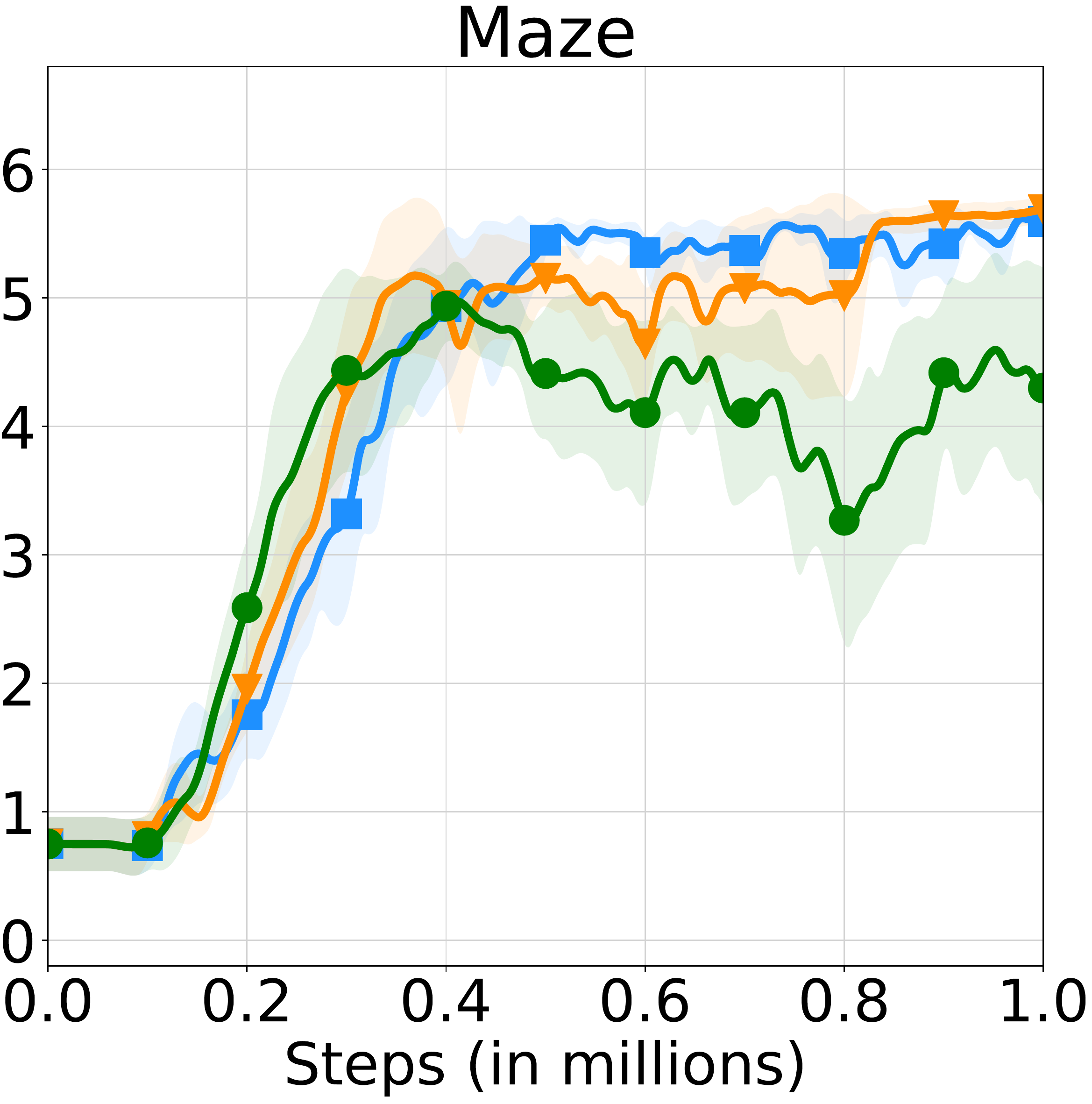}
\hspace{0.5em}
\includegraphics[width=0.41\linewidth]{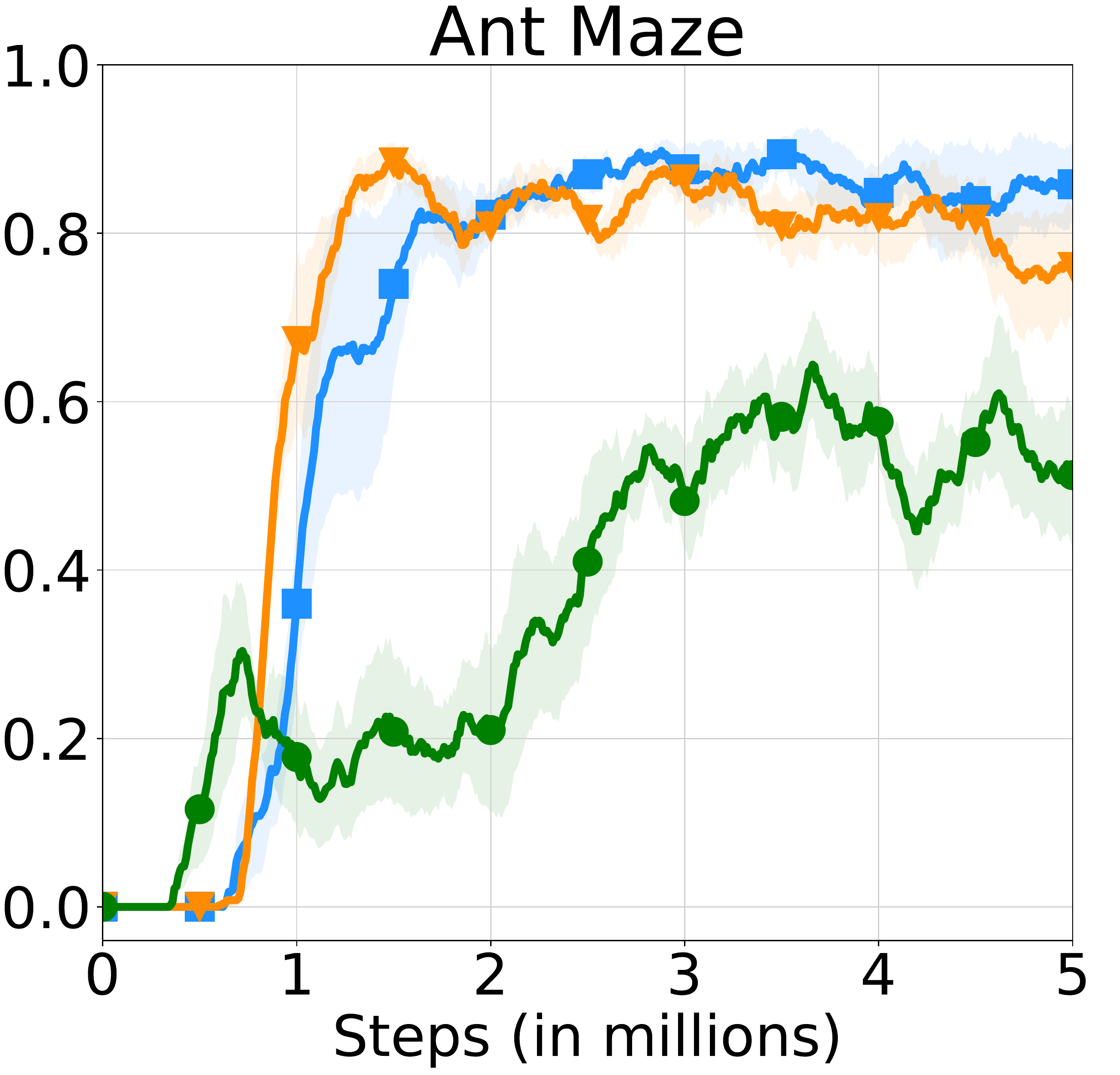} \\
\vspace{0.5em}
\includegraphics[width=0.6\linewidth]{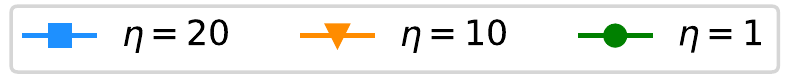}
\caption{Learning curves with different balancing coefficients.}
\label{fig:param}
\end{figure}

The learning curves of HRAC and baselines across all tasks are plotted in Fig.~\ref{fig:fetch_results}. Each curve and its shaded region represent mean success rate and standard error of the mean respectively, averaged over 5 independent trials. All curves have been smoothed equally for visual clarity. In both tasks, HRAC consistently surpasses HIRO and DDPG. Note that HIRO also outperforms DDPG, which indicates the superiority of HRL in better exploration due to the introduced temporal abstraction.

\subsection{Ablation Study}
\label{subsec:ablation}
We also compared HRAC with several variants to investigate the effectiveness of each component in our method.

\textit{HRAC-O:} An oracle variant that uses a perfect adjacency matrix directly obtained from the environment. We note that compared to other methods, this variant uses additional information that is not available in many applications

\textit{NoAdj:} A variant that uses an adjacency training method analogous to the work of Savinov et al.~\cite{savinov_semi-parametric_2018,savinov_episodic_2019}, where no adjacency matrix is maintained. The adjacency network is trained using state pairs directly sampled from stored trajectories, under the same training budget as HRAC.

\textit{NegReward:} This variant implements the $k$-step adjacency constraint by penalizing the high-level with a negative reward when it generates non-adjacent subgoals, which is used by HAC~\cite{levy_learning_2019}.

We provide learning curves of HRAC and these variants in Fig.~\ref{fig:ablation}. In all tasks, HRAC yields similar performance with the oracle variant HRAC-O while surpassing the NoAdj variant by a large margin, exhibiting the effectiveness of our adjacency learning method. Meanwhile, HRAC achieves better performance than the NegReward variant, suggesting the superiority of implementing the adjacency constraint using a differentiable adjacency loss, which provides stronger supervision than a reward-based penalty. We also empirically studied the effect of different balancing coefficients $\eta$. Results are shown in Fig.~\ref{fig:param}, which suggests that generally, a large $\eta$ can lead to better and more stable performance.

\subsection{Empirical Study in Stochastic Environments with Additive State Noise}

To empirically verify the robustness of HRAC, we applied it to a set of stochastic tasks, including stochastic Ant Gather, stochastic Ant Maze, and stochastic Ant Maze Sparse. These tasks are modified from the original ant tasks respectively. Although in Section~\ref{subsec:setup} we have mentioned that discrete tasks also have inherent stochasticity implemented as random actions, the reason for additionally conducting an empirical analysis here is that the discrete environments we used are generally toy domains, which tend to be insufficient to accurately reflect the performance of the algorithm in the face of stochasticity. In contrast, the testbeds that we consider in this section are adapted from widely-used, challenging locomotion tasks, which we believe are more suitable for benchmarking.

Concretely, we added Gaussian noise with different standard deviations $\sigma$ to the $(x,y)$ position of the ant robot at every step, including $\sigma=0.01,\sigma=0.05$ and $\sigma=0.1$, representing increasing environmental stochasticity. This simulates some common real-world scenarios such as in robotics, where the stochasticity in state transitions is often induced by the noise in the environment or in the sensors and actuators of the robots~\citep{peng_sim--real_2018}. In these tasks, we compare HRAC with the baseline HIRO, which has exhibited generally better performance than other baselines, in the noisiest scenario when $\sigma=0.1$. As displayed in Fig.~\ref{fig:stochastic}, HRAC achieves similar asymptotic performances with different noise magnitudes in stochastic Ant Gather and Ant Maze tasks and consistently outperforms HIRO, exhibiting robustness to stochastic environments.

\subsection{Subgoal and Adjacency Metric Visualization}

We visualize the subgoals generated by the high-level policy and the adjacency heatmaps in Fig.~\ref{fig:visualization_maze}. The visualization indicates that the agent does learn to generate adjacent and interpretable subgoals, and verifies the effectiveness of our adjacency metric learning strategy. Additionally, in Fig.~\ref{fig:visualization_antmaze} we visualize state and subgoal distributions on the Ant Maze task by randomly sampling transitions from the replay buffer at different training stages, illustrating the impact of the adjacency constraint by comparing the subgoals generated by HRAC and HIRO. As shown in the figure, the subgoal distribution generated by HRAC is ``closer'' to the state distribution compared to HIRO without the $k$-step adjacency constraint, significantly reducing undesirable subgoals (e.g., subgoals that correspond to unreachable wall states) that hinders efficient exploration. The results suggest that thanks to the adjacency constraint, the subgoals generated by HRAC exhibit higher quality compared to HIRO and lead to better exploration and more efficient learning.

\begin{figure*}
\centering
\includegraphics[width=0.145\linewidth]{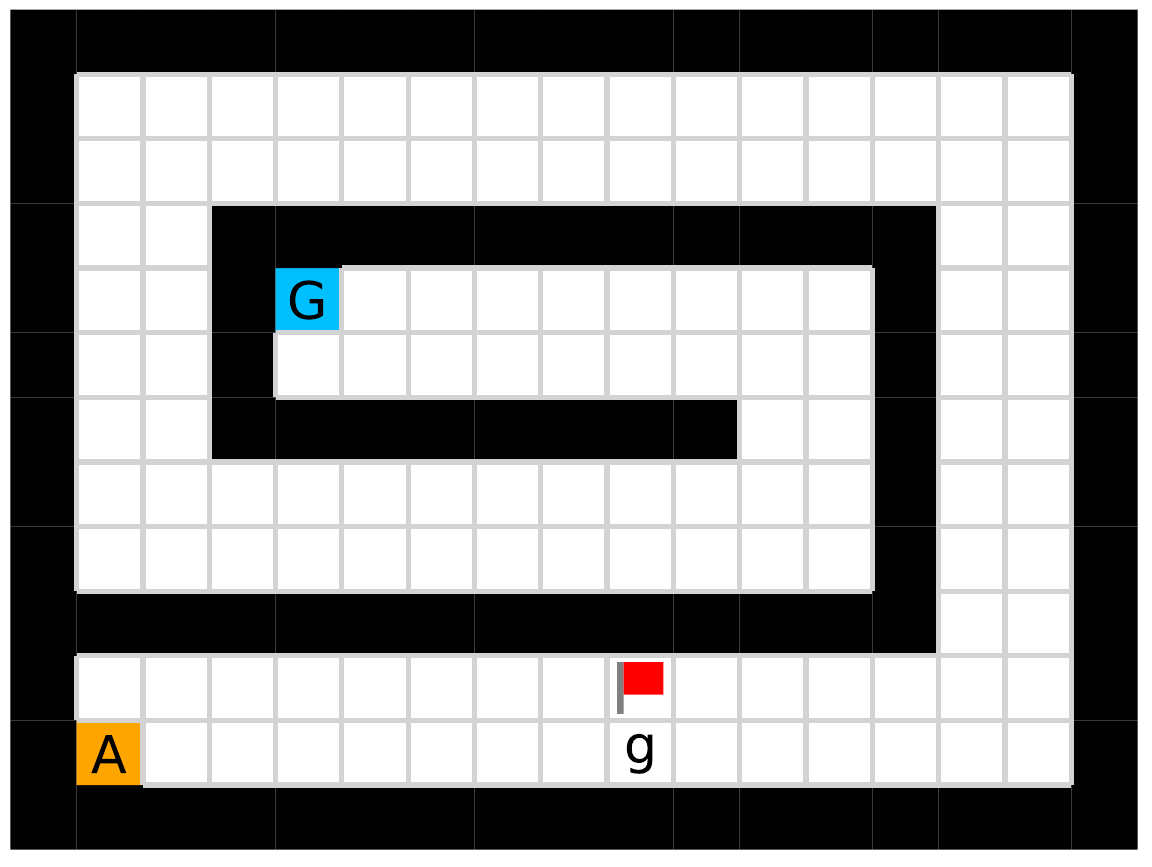}
\hspace{0.4em}
\vspace{0.5em}
\includegraphics[width=0.145\linewidth]{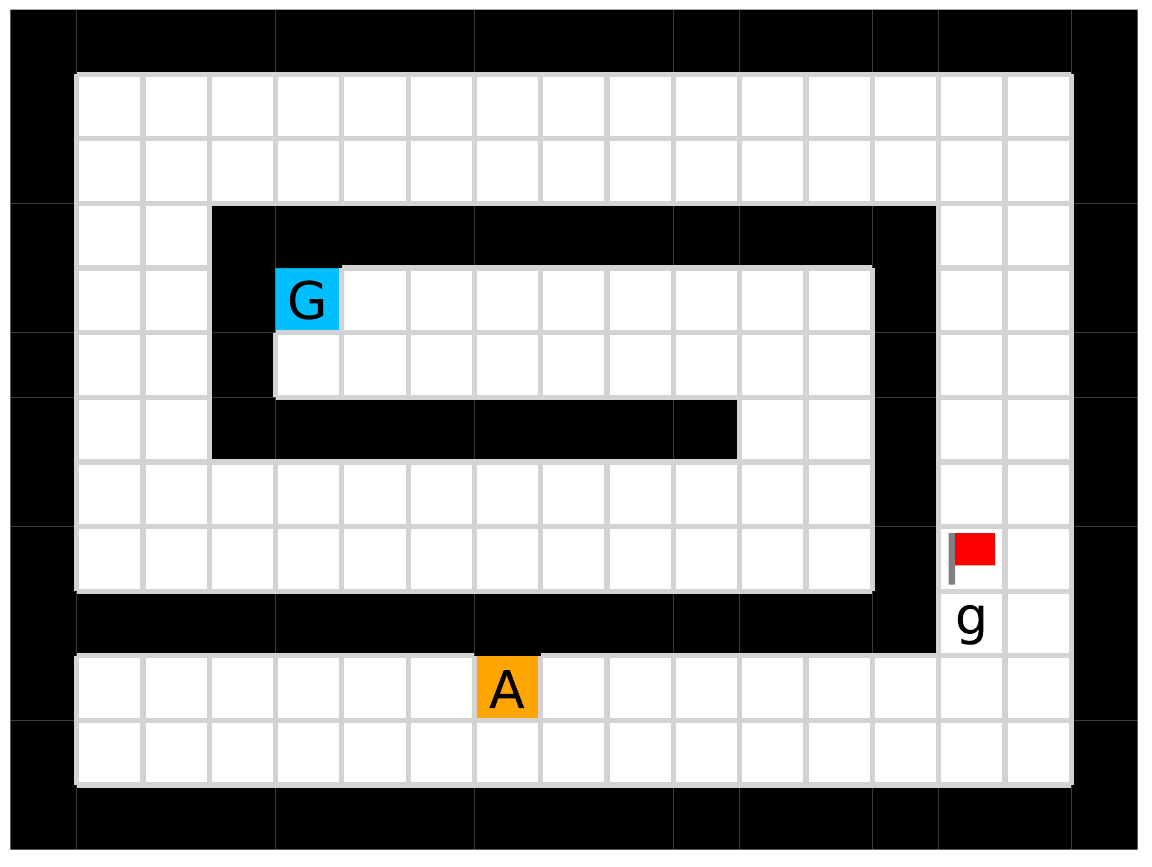}
\hspace{0.4em}
\includegraphics[width=0.145\linewidth]{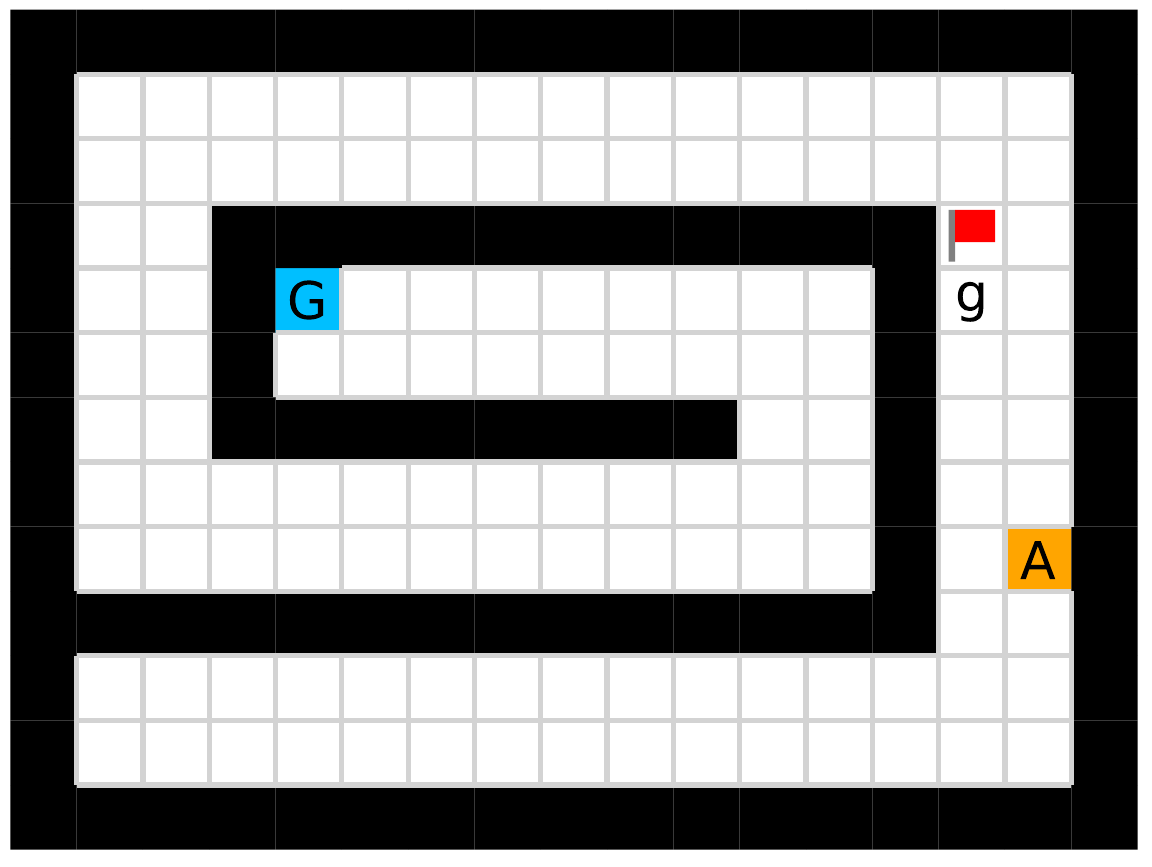}
\hspace{0.4em}
\includegraphics[width=0.145\linewidth]{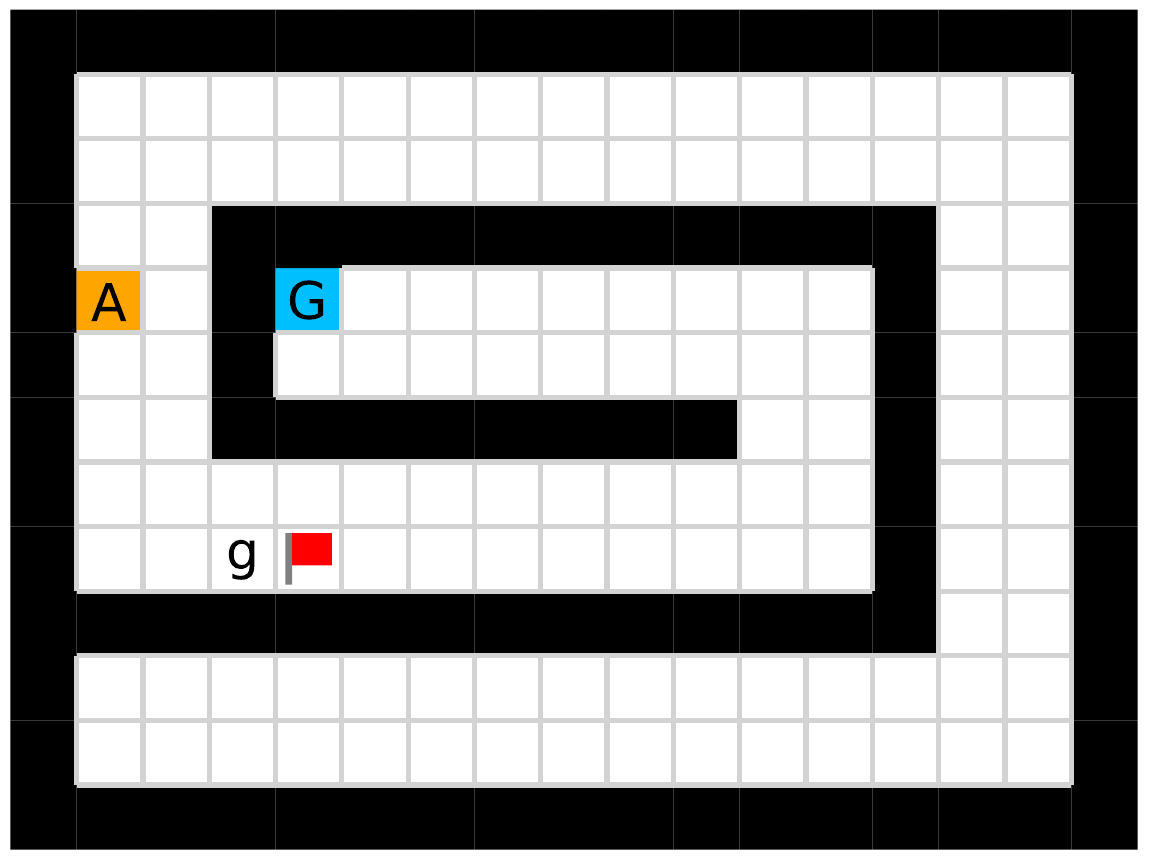}
\hspace{0.4em}
\includegraphics[width=0.145\linewidth]{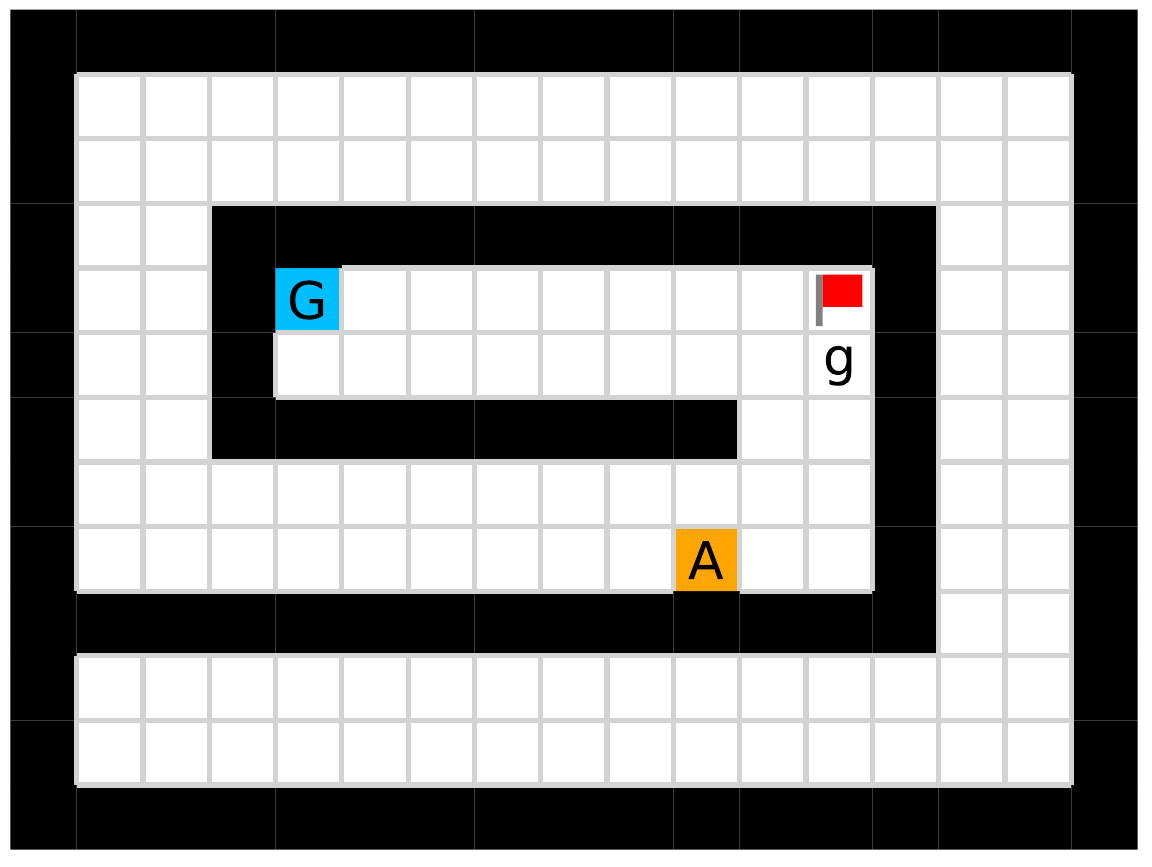}\\
\includegraphics[width=0.145\linewidth]{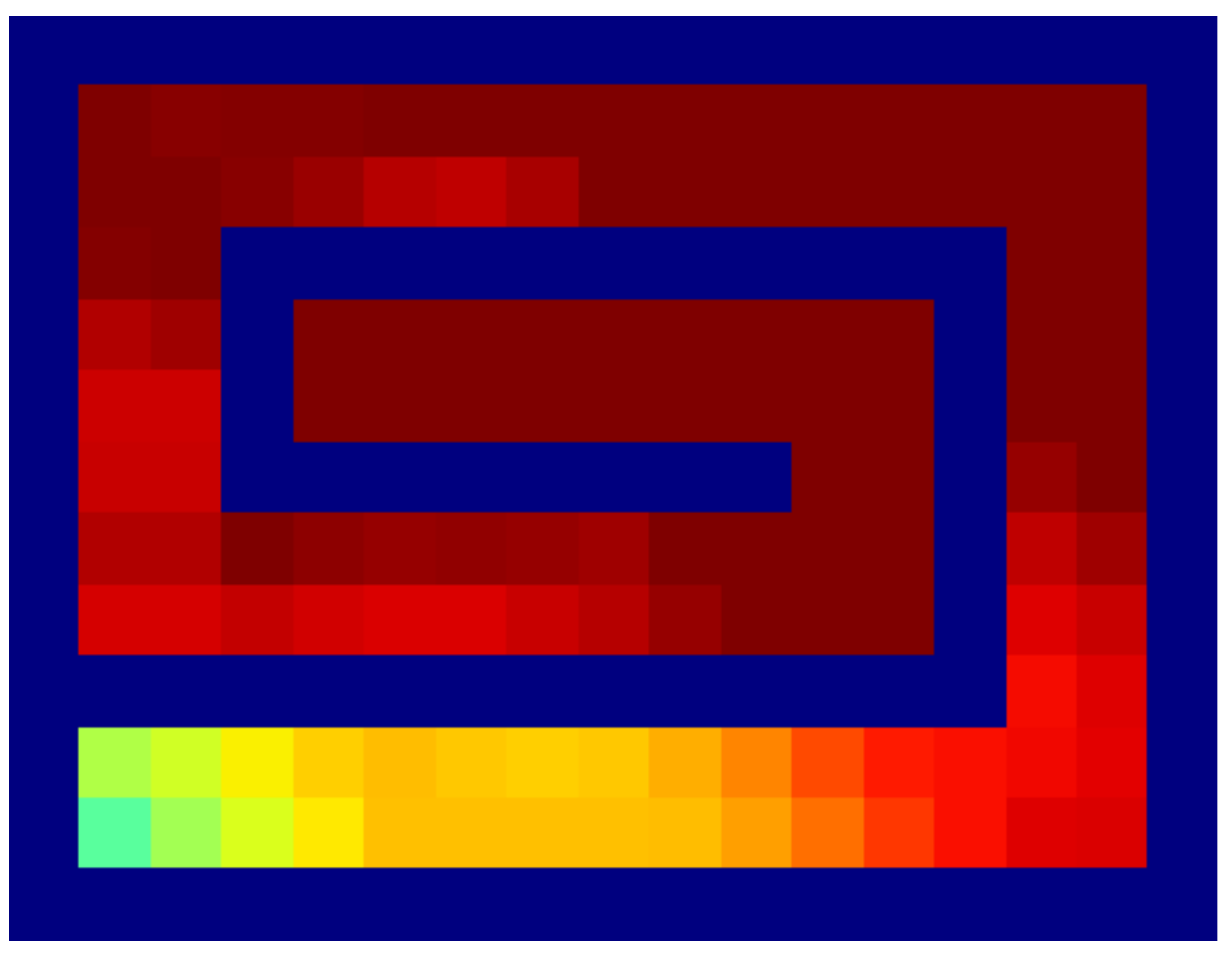}
\hspace{0.4em}
\vspace{2.5em}
\includegraphics[width=0.145\linewidth]{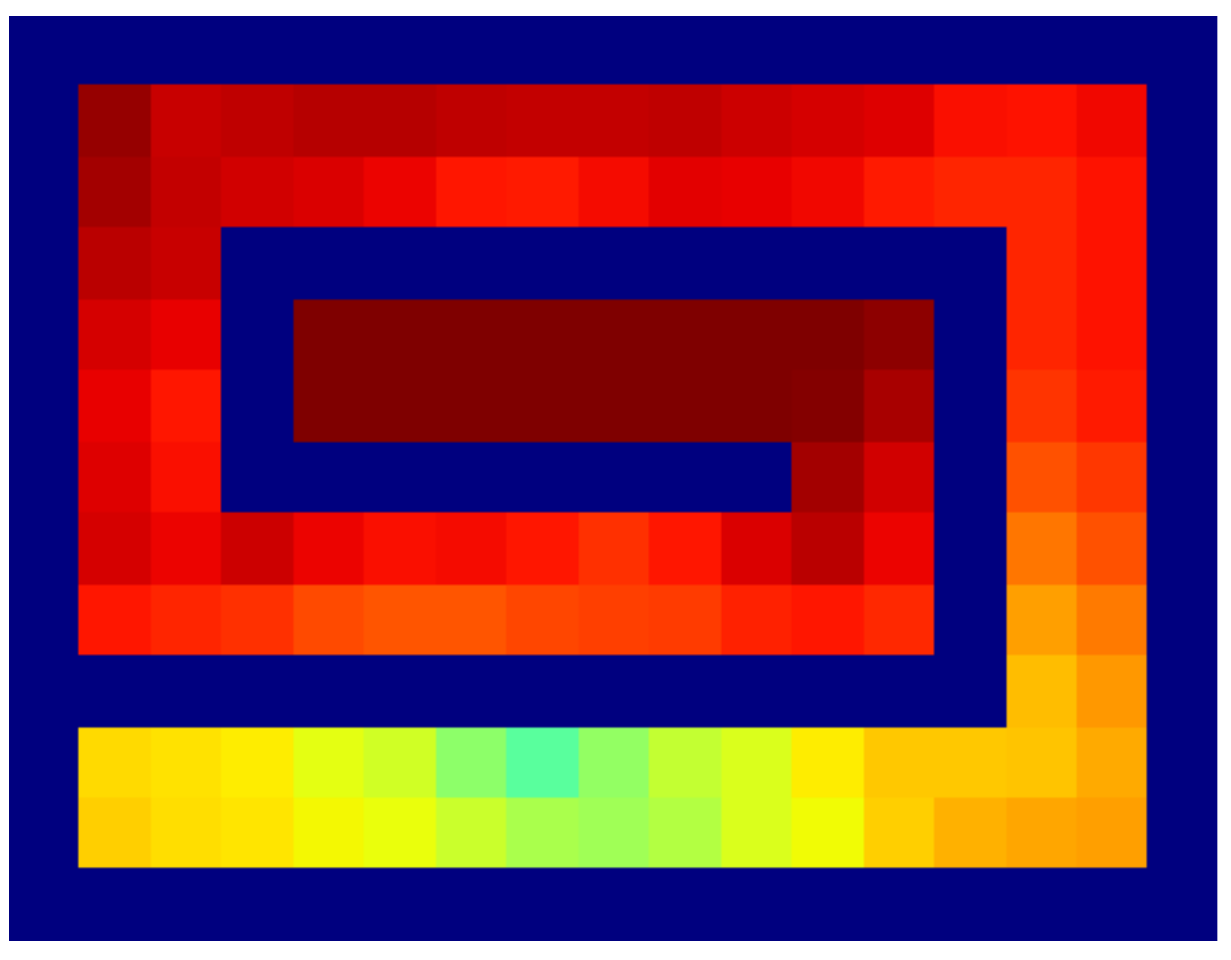}
\hspace{0.4em}
\includegraphics[width=0.145\linewidth]{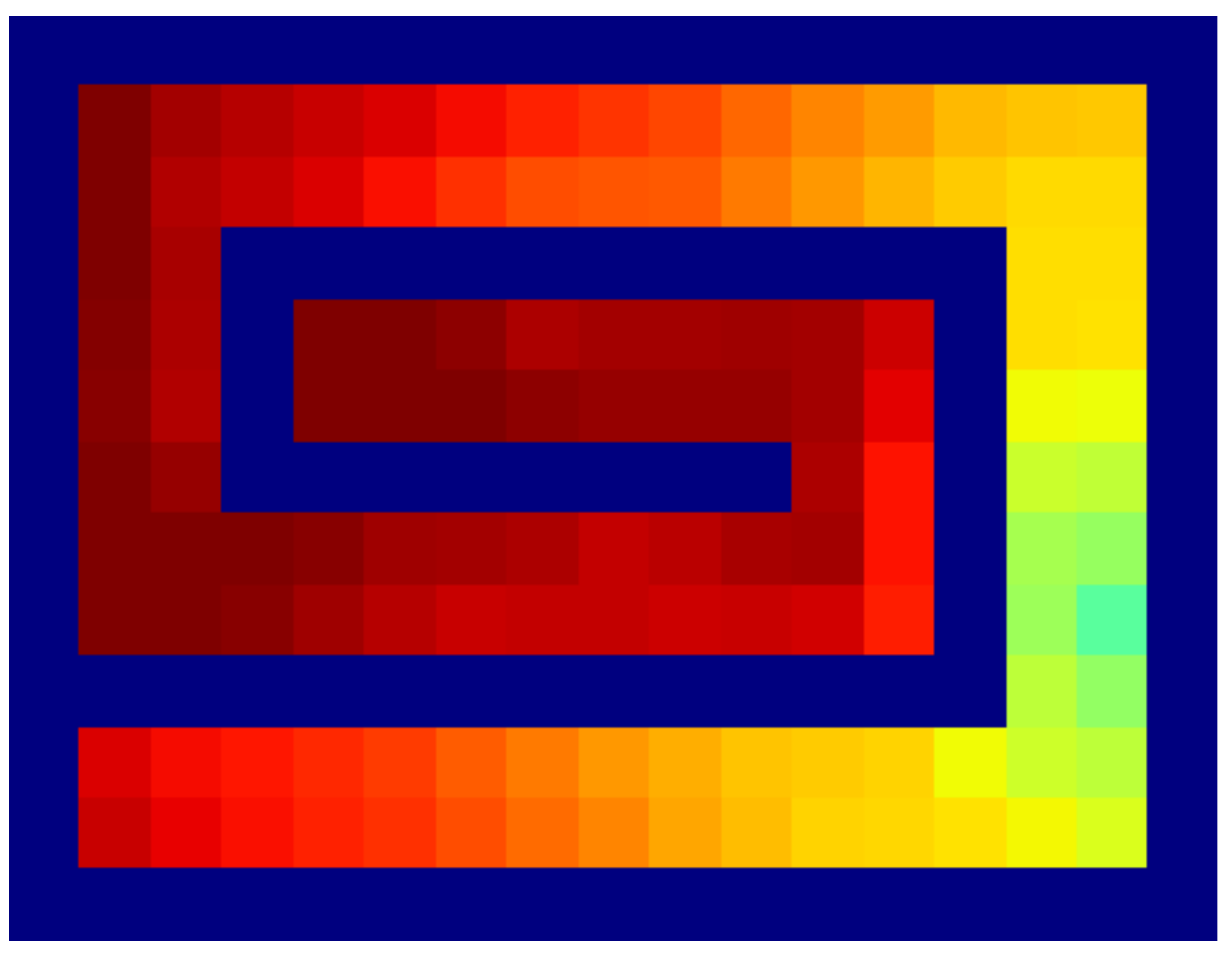}
\hspace{0.4em}
\includegraphics[width=0.145\linewidth]{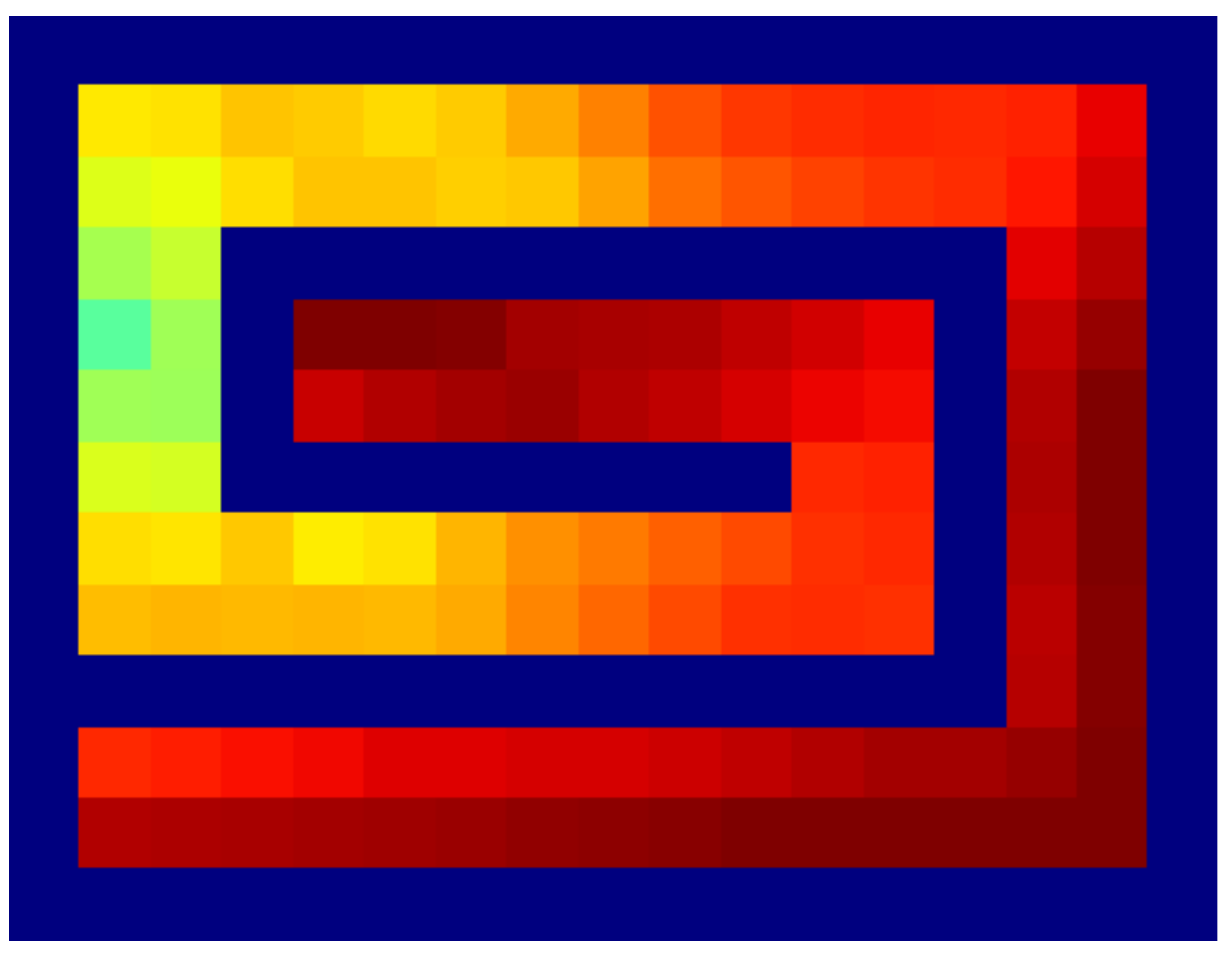}
\hspace{0.4em}
\includegraphics[width=0.145\linewidth]{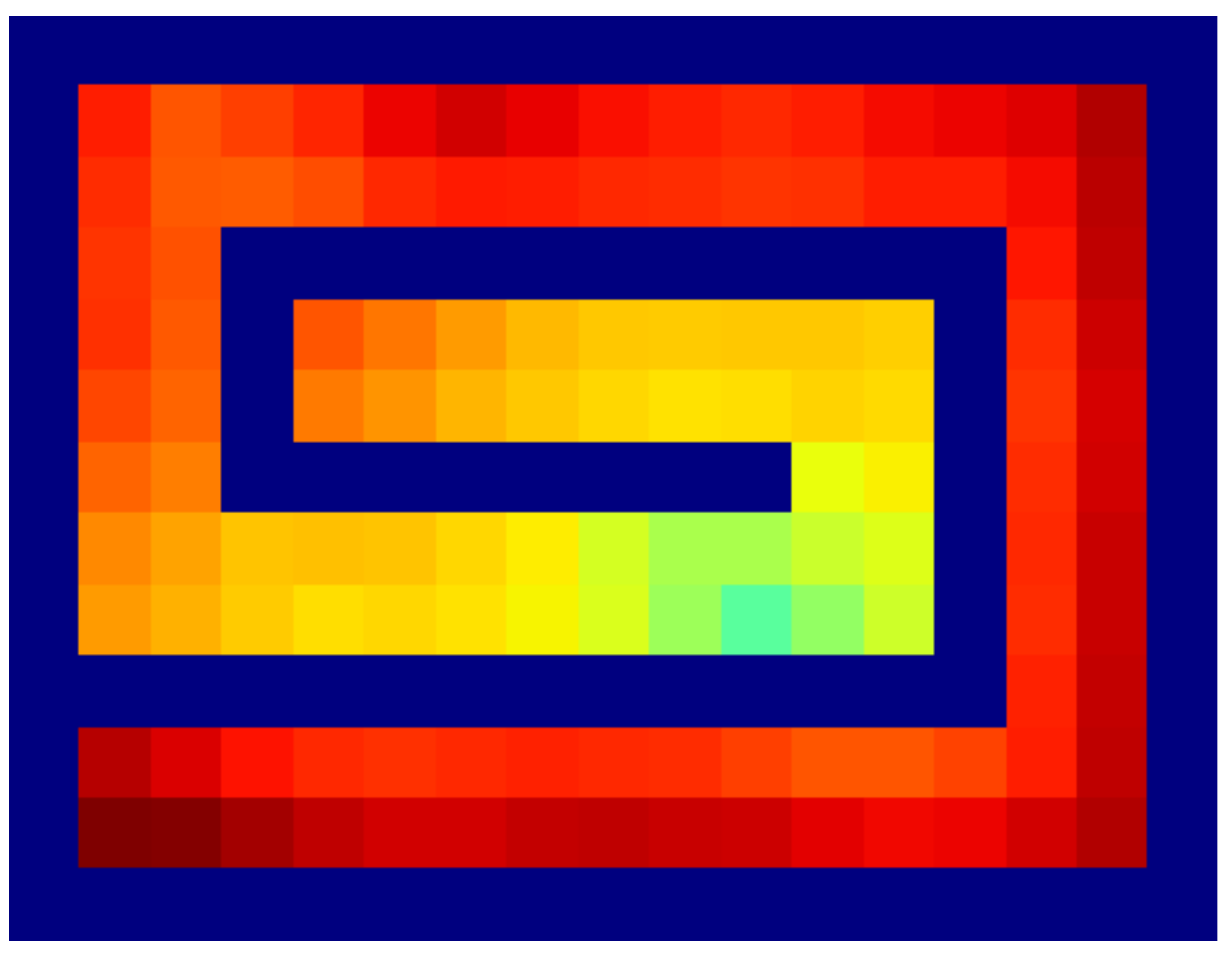}\\
\vspace{-2em}
\hspace{1.2em}\includegraphics[width=0.88\linewidth]{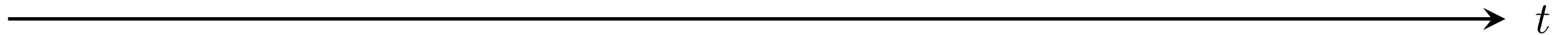}
\caption{Subgoal and adjacency heatmap visualizations of the Maze task, based on a single evaluation run. The agent (A), goal (G), and subgoal (g) at different time steps in one episode are plotted.  Colder colors in the adjacency heatmaps represent smaller shortest transition distances.}
\label{fig:visualization_maze}
\end{figure*}

\begin{figure*}
    \centering
    \includegraphics[width=0.8\linewidth]{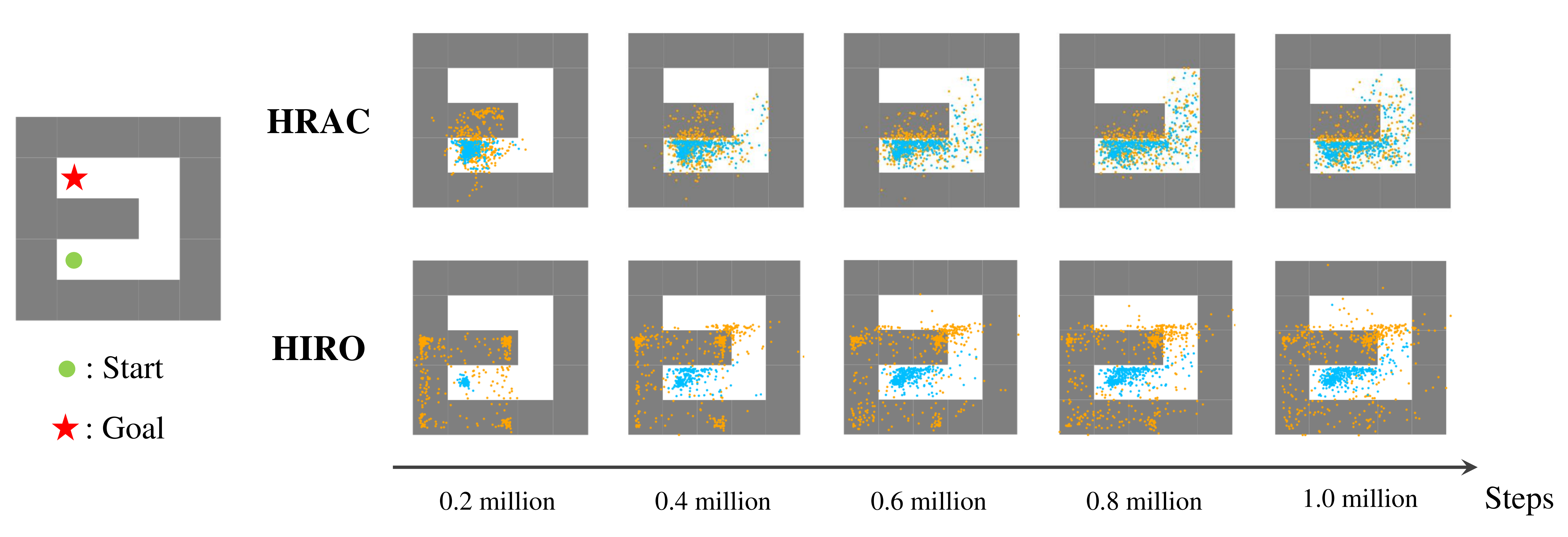}
    \caption{Subgoal Visualizations on the Ant Maze task. The state (blue) and subgoal (orange) distributions at different training periods are plotted. States and subgoals are uniformly sampled from the replay buffer.}
    \label{fig:visualization_antmaze}
\end{figure*}

\section{Related Work}
\label{sec:rel}

Effectively learning policies with multiple hierarchies has been a long-standing problem in RL.
Goal-conditioned HRL~\cite{dayan_feudal_1992,schmidhuber_planning_1993,kulkarni_hierarchical_2016,vezhnevets_feudal_2017,nachum_data-efficient_2018,levy_learning_2019} aims to resolve this problem with a framework that separates high-level planning and low-level control using subgoals.
Recent advances in goal-conditioned HRL mainly focus on improving the learning efficiency of this framework. Nachum et al.~\cite{nachum_data-efficient_2018,nachum_near-optimal_2019} proposed an off-policy correction technique to stabilize training, and addressed the problem of goal space representation learning using a mutual-information-based objective. However, the subgoal generation process in their approaches is unconstrained and supervised only by the external rewards, and thus these methods may still suffer from training inefficiency. Levy et al.~\cite{levy_learning_2019} used hindsight techniques~\cite{andrychowicz_hindsight_2017} to train multi-level policies in parallel and also penalized the high-level for generating subgoals that the low-level failed to reach. However, their method has no theoretical guarantee, and they directly obtain the reachability measure from the environment, using the environmental information that is not available in many scenarios.
There are also prior works focusing on the unsupervised acquisition of subgoals based on potentially pivotal states~\cite{mcgovern_automatic_2001,simsek_identifying_2005,kulkarni_deep_2016,savinov_semi-parametric_2018,rafati_unsupervised_2019,huang_mapping_2019}. However, these subgoals are not guaranteed to be well-aligned with the downstream tasks and thus are often sub-optimal.

Several prior works have constructed an environmental graph for high-level planning used search nearby graph nodes as reachable subgoals for the low-level~\cite{savinov_semi-parametric_2018,eysenbach_search_2019,huang_mapping_2019,zhang_composable_2018}. However, these approaches hard-coded the high-level planning process based on domain-specific knowledge, e.g., treat the planning process as solving a shortest-path problem in the graph instead of a learning problem, and thus are limited in scalability. Nasiriany et al.~\cite{nasiriany_planning_2019} used goal-conditioned value functions to measure the feasibility of subgoals, but a pre-trained goal-conditioned policy is required. A more general topic of goal generation in RL has also been studied in the literature~\cite{florensa_automatic_2018,nair_visual_2018,ren_exploration_2019}. However, these methods only have a flat architecture and therefore often struggle on tasks that require complex high-level planning.

Our method relates to prior research that studied transition distance or reachability~\cite{pong_temporal_2018,savinov_semi-parametric_2018,savinov_episodic_2019,florensa_self-supervised_2019,hartikainen_dynamical_2020}. Most of these works learn the transition distance based on RL~\cite{pong_temporal_2018,florensa_self-supervised_2019,hartikainen_dynamical_2020} with a high cost. Savinov et al.~\cite{savinov_semi-parametric_2018,savinov_episodic_2019} proposed a supervised learning approach for reachability learning. However, the metric they learned depends on a certain policy used for interaction and thus could be sub-optimal compared to our method and we provide detailed comparison and discussion in Appendix~\ref{appsec:comp}. There are also other metrics that can reflect state similarities in MDPs, such as successor represention~\cite{dayan_improving_1993,kulkarni_deep_2016} that depends on both the environmental dynamics and a specific policy, and bisimulation metrics~\cite{ferns_metrics_2004,castro_scalable_2020} that depend on both the dynamics and the rewards. Compared to these metrics, the shortest transition distance depends only on the dynamics and therefore may be seamlessly applied to multi-task settings.

\section{Conclusion}
\label{sec:con}

We present a novel $k$-step adjacency constraint for goal-conditioned HRL framework to mitigate the issue of training inefficiency, with the theoretical guarantee of bounded suboptimality in both deterministic and stochastic MDPs. We show that the proposed adjacency constraint can be practically implemented with an adjacency network. Experiments on several testbeds with discrete and continuous state and action spaces demonstrate the effectiveness and robustness of our method.

As one of the most promising directions for scaling up RL, goal-conditioned HRL provides an appealing paradigm for handling large-scale problems. However, some key issues involving how to devise effective and interpretable hierarchies remain to be solved, such as how to empower the high-level policy to learn and explore in a more semantically meaningful action space~\cite{nachum_why_2019}, and how to enable the subgoals to be shared and reused in multi-task settings. Other future work includes extending our method to tasks with high-dimensional state spaces, e.g., by encompassing modern representation learning schemes~\cite{higgins_beta-vae:_2017,nachum_near-optimal_2019,srinivas_curl_2020}, and leveraging the adjacency network to improve the learning efficiency in more general scenarios.

{
  \bibliography{ref}
  \bibliographystyle{IEEEtranN}
}

%

\begin{IEEEbiography}[{\includegraphics[width=1in,height=1.25in,clip,keepaspectratio]{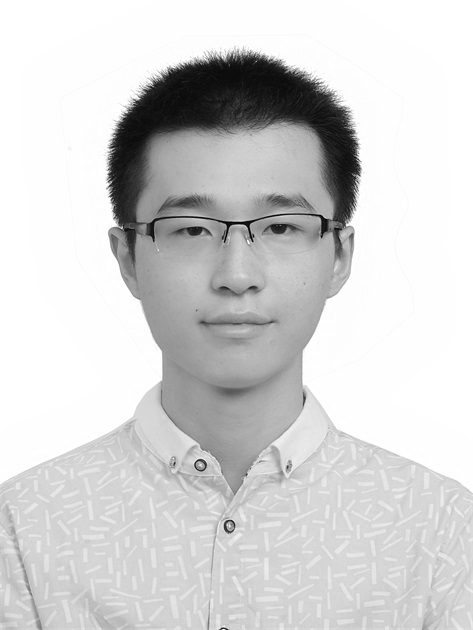}}]{Tianren Zhang}
received his B.S. degree from the department of automation, Tsinghua University, China, in 2019. He is currently pursuing his Ph.D. degree at the department of automation, Tsinghua University. His current research interests include generalization in machine learning, deep reinforcement learning, and learning theory.
\end{IEEEbiography}

\begin{IEEEbiography}[{\includegraphics[width=1in,height=1.25in,clip,keepaspectratio]{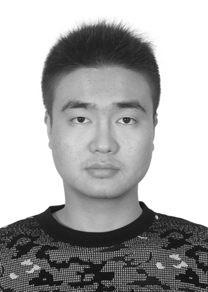}}]{Shangqi Guo} 
received a B.S. degree in mathematics and physics basic science from the University of Electronic Science and Technology of China, Chengdu, China, in 2015, and a Ph.D. degree at the Department of Automation, Tsinghua University, Beijing, China, in 2021. His current research interests include inference in artificial intelligence, brain-inspired computing, and reinforcement learning.
\end{IEEEbiography}

\begin{IEEEbiography}[{\includegraphics[width=1in,height=1.25in,clip,keepaspectratio]{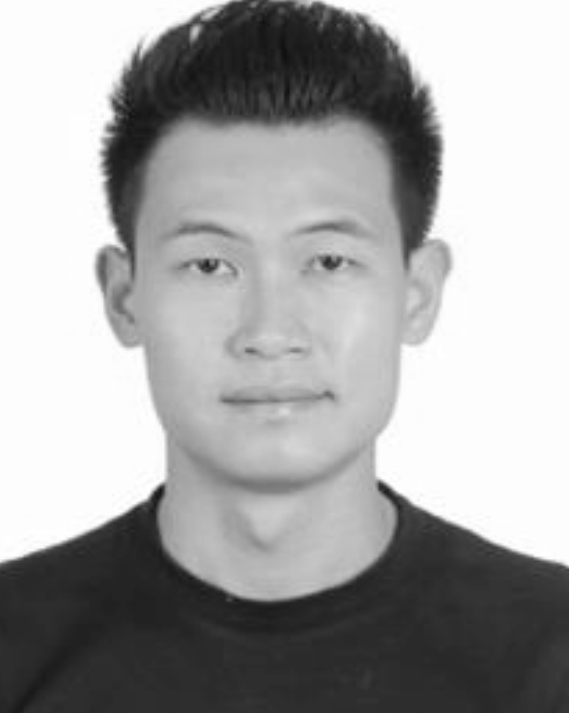}}]{Tian Tan}
Tian Tan received his B.S. degree in Telecommunications Engineering from Beijing University of Posts and Telecommunications, his M.S. degree and Ph.D. degree in Engineering and Ph.D. minor in Computer Science from Stanford University. His research interests broadly include topics in machine learning and algorithms, such as reinforcement learning, multi-task learning, gradient boosting decision trees, deep learning, and statistical learning theory.
\end{IEEEbiography}

\begin{IEEEbiography}[{\includegraphics[width=1in,height=1.25in]{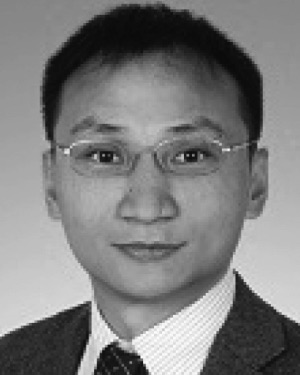}}]{Xiaolin Hu} (S'01-M'08-SM'13) received B.E. and M.E. degrees in automotive engineering from the Wuhan University of Technology, Wuhan, China, in 2001 and 2004, respectively, and a Ph.D. degree in automation and computer-aided engineering from the Chinese University of Hong Kong, Hong Kong, in 2007. He is currently an Associate Professor at the Department of Computer Science and Technology, Tsinghua University, Beijing, China. His current research interests include deep learning and computational neuroscience. At present, he is an Associate Editor of the IEEE Transactions on Image Processing. Previously he was an Associate Editor of the IEEE Transactions on Neural Networks and Learning Systems.
\end{IEEEbiography}

\begin{IEEEbiography}[{\includegraphics[width=1in,height=1.25in,clip,keepaspectratio]{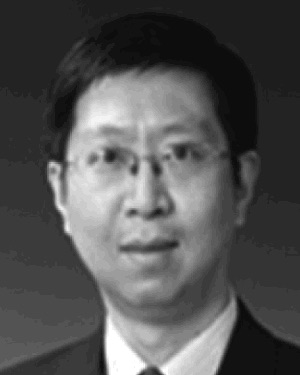}}]{Feng Chen} (M'06) received B.S. and M.S. degrees in automation from Saint-Petersburg Polytechnic University, Saint Petersburg, Russia, in 1994 and 1996, respectively, and a Ph.D. degree from the Automation Department, Tsinghua University, Beijing, China, in 2000. He is currently a Professor at Tsinghua University. His current research interests include computer vision, brain-inspired computing, and inference in graphical models.
\end{IEEEbiography}

\vfill





\vfill



%

\newpage
\setcounter{page}{1}
\appendices

\section{Proofs of Theorems}
\label{app:proof}

\subsection{Proof of Lemma~\ref{theo:low}}
\label{app:proof1}

\begin{proof}
Under the assumption that the MDP is deterministic and all states are strongly connected, there exists at least one shortest state trajectory from $s$ to $g$. Without loss of generality, we consider one shortest state trajectory $\tau^* = (s_0,s_1,s_2,\cdots,s_{n-1},s_n)$, where $s_0 = s,\,s_n = \varphi^{-1}(g)$ and $d_{\mathrm{st}}\left(s,\varphi^{-1}(g)\right) = n$. For all $k\in\mathbb{N}_+$ and $k\le d_{\mathrm{st}}\left(s,\varphi^{-1}(g)\right) = n$, let $\tilde{g} = \varphi(s_k)$, and let $\tau = (s_0,s_1,s_2,\cdots,s_k)$ be the $k$-step sub-trajectory of $\tau^*$ from $s_0$ to $s_k$. Since $s_0$ and $s_k$ is connected by $\tau$ in $k$ steps, we have that $d_{\mathrm{st}}\left(s_0,\varphi^{-1}(\tilde{g})\right) = d_{\mathrm{st}}\left(s_0,s_k \right) \le k$, i.e., $\tilde{g}\in\mathcal{G}_A(s,k)$. In the following, we will prove that $\pi^*(s_i,\tilde{g}) = \pi^*(s_i,g),\ \forall\, s_i\in \tau\,(i\ne k)$.

We first prove that the shortest transition distance $d_{\mathrm{st}}$ satisfies the triangle inequality, i.e., consider three arbitrary states $s_1,s_2,s_3\in\mathcal{S}$, then $d_{\mathrm{st}}(s_1,s_3) \le d_{\mathrm{st}}(s_1,s_2) + d_{\mathrm{st}}(s_2,s_3)$: let $\tau^*_{12}$ be one shortest state trajectory between $s_1$ and $s_2$ and let $\tau^*_{23}$ be one shortest state trajectory between $s_2$ and $s_3$. We can concatenate $\tau^*_{12}$ and $\tau^*_{23}$ to form a trajectory $\tau_{13} = (\tau^*_{12},\tau^*_{23})$ that connects $s_1$ and $s_3$. Then, by Definition~\ref{def:distance} we have $d_{\mathrm{st}}(s_1,s_3)\le d_{\mathrm{st}}(s_1,s_2) + d_{\mathrm{st}}(s_2,s_3)$.

Using the triangle inequality, we can prove that the sub-trajectory $\tau = (s_0,s_1,s_2,\cdots,s_k)$ is also a shortest trajectory from $s_0 = s$ to $s_k$: assume that this is not true and there exists a shorter trajectory from $s_0$ to $s_k$. Then, by Definition~\ref{def:distance} we have $d_{\mathrm{st}}(s_0,s_k) < k$. Since $(s_k,s_{k+1},\cdots,s_n)$ is a valid trajectory from $s_k$ to $s_n$, we have $d_{\mathrm{st}}(s_k,s_n) \le n-k$. Applying the triangle inequality, we have $d_{\mathrm{st}}(s_0,s_n) \le d_{\mathrm{st}}(s_0,s_k) + d_{\mathrm{st}}(s_k,s_n) < k + n - k = n$, which is in contradiction with $d_{\mathrm{st}}\left(s,\varphi^{-1}(g)\right) = d_{\mathrm{st}}(s_0,s_n) = n$. Thus, our original assumption must be false, and the trajectory $\tau = (s_0,s_1,s_2,\cdots,s_k)$ is a shortest trajectory from $s_0$ to $s_k$.

Finally, let $\alpha:\mathcal{S}\times\mathcal{S}\rightarrow\mathcal{A}$ be an inverse dynamics model, i.e., given state $s_t$ and the next state $s_{t+1}$, $\alpha(s_t,s_{t+1})$ outputs the action $a_t$ that is performed at $s_t$ to reach $s_{t+1}$. Then, employing Equation~\eqref{equ:goal_conditioned}, for $i=0,1,\cdots,k-1$ we have $\pi^*(s_i,g) = \alpha(s_i,s_{i+1})$ given that $\tau^*$ is a shortest trajectory from $s_0$ to $\varphi^{-1}(g)$, and $\pi^*(s_i,\tilde{g}) = \alpha(s_i,s_{i+1})$ given that $\tau$ is a shortest trajectory from $s_0$ to $\varphi^{-1}(\tilde{g})$. This indicates that $\pi^*(s_i,\tilde{g}) = \pi^*(s_i,g),\ \forall\, s_i\in \tau\,(i\ne k)$.
\end{proof}

\subsection{Proof of Theorem~\ref{theo:high}}
\label{app:proof2}

\begin{proof}
For each subgoal $g_{kt},\,t=0,1,\cdots,T-1$, if $k > d_\mathrm{st}(s_{kt},\varphi^{-1}(g_{kt}))$, then we have that $g_{kt}\in\mathcal{G}_A(s_{kt},k)$ and thus directly set $\tilde{g}_{kt} = g_{kt}$; otherwise, by using Lemma~\ref{theo:low}, we have that for each subgoal $g_{kt}$, there exists a subgoal $\tilde{g}_{kt}\in\mathcal{G}_A(s_{kt},k)$ that can induce the same low-level $k$-step action sequence as $g_{kt}$. This indicates that the agent's trajectory and the high-level reward $r_{kt}^h$ defined by Equation~\eqref{equ:r_high} remain the same for all $t$ when replacing $g_{kt}$ with $\tilde{g}_{kt}$. Then, using the high-level Bellman optimality equation for the optimal state-action value function
\begin{equation}
\begin{aligned}
Q^*(s_{kt},g_{kt}) &= r_{kt}^\mathrm{hi} + \gamma\max_{g\in\mathcal{G}} Q^*(s_{k(t+1)},g)\\
&= r_{kt}^\mathrm{hi} + \gamma Q^*(s_{k(t+1)},g_{k(t+1)}),\\
&\hspace{10em} t = 0,\,1\cdots,\,T-1
\end{aligned}
\end{equation}
and $Q^*(s_{kT},g) = 0,\ \forall\, g\in\mathcal{G}$ (since $s_{kT}$ is the final state in the trajectory $\tau^*$), we have $Q^*(s_{kt},\tilde{g}_{kt}) = Q^*(s_{kt}, g_{kt}),\,t=0,1,\cdots,T-1$.
\end{proof}

\subsection{Proof of Theorem~\ref{theo:subopt}}
\label{app:proof3}

\begin{proof}
For ease of exposition, we first introduce some shorthand: given a state $s\in\mathcal{S}$, let $P (s,g)\in [0,1]^{|\mathcal{S}|}$ with $\forall s\in\mathcal{S}, g\in\mathcal{G}, \lVert P(s,g)\rVert_1 = 1$ denote the probablistic distribution of the $k$-step successor state starting from $s$ under the subgoal $g$; let $\widetilde{r}(s)\in[0,kR_\mathrm{max}]^{|\mathcal{S}|}$ denote the $k$-step high-level reward vector for every successor state. We then rewrite the state value function of $\pi_\mathrm{hi}^*$ and $\pi_\mathrm{hi}^\mathrm{adj}$ respectively:
\begin{equation}
\begin{aligned}
V^{\pi_\mathrm{hi}^*}(s) &= \sum_{g\in\mathcal{G}} \pi_\mathrm{hi}^*(g\,|\, s) \sum_{s'\in\mathcal{S}} P^k(s'\,|\, s,g) \Big(R_\mathrm{hi}(s,s') \,+ \\
&\qquad \gamma V^{\pi_\mathrm{hi}^*}(s')\Big) \\
&= \left\langle \sum_{g\in\mathcal{G}} \pi_\mathrm{hi}^*(g\,|\, s)P(s,g), R(s) + \gamma V^{\pi_\mathrm{hi}^*} \right\rangle;
\end{aligned}
\end{equation}
\begin{equation}
\begin{aligned}
V^{\pi_\mathrm{hi}^\mathrm{adj}}(s) &= \sum_{g\in\mathcal{G}_\mathrm{AM}(s,k)}\sum_{g'\in\mathcal{G}}P(g\,|\, s,g')\pi_\mathrm{hi}^*(g'\,|\, s) \\
&\qquad\sum_{s'\in\mathcal{S}} P^k(s'\,|\, s,g) \left(R_\mathrm{hi}(s,s') + \gamma V^{\pi_\mathrm{hi}^\mathrm{adj}}(s')\right) \\
&= \sum_{g\in\mathcal{G}}\pi_\mathrm{hi}^*(g\,|\, s)\sum_{g'\in\mathcal{G}_\mathrm{AM}(s,k)} P(g'\,|\, s,g) \\
&\qquad\sum_{s'\in\mathcal{S}} P^k(s'\,|\, s,g') \left(R_\mathrm{hi}(s,s') + \gamma V^{\pi_\mathrm{hi}^\mathrm{adj}}(s')\right) \\
&= \Bigg\langle \sum_{g\in\mathcal{G}}\pi_\mathrm{hi}^*(g\,|\, s)\sum_{g'\in\mathcal{G}_\mathrm{AM}(s,k)} P(g'\,|\, s,g) P(s,g'), \\
&\qquad R(s) + \gamma V^{\pi_\mathrm{hi}^\mathrm{adj}} \Bigg\rangle \\
&= \Bigg\langle \sum_{g\in\mathcal{G}} \pi_\mathrm{hi}^*(g\,|\, s)P_\mathrm{AM}(s,g), R(s) + \gamma V^{\pi_\mathrm{hi}^\mathrm{adj}} \Bigg\rangle,
\end{aligned}
\end{equation}
where $\langle \cdot\,,\cdot \rangle$ denotes the inner product of vectors, and
\begin{equation}
P_\mathrm{AM}(s,g) \vcentcolon= \sum_{g'\in\mathcal{G}_\mathrm{AM}(s,k)} P(g'\,|\, s,g) P(s,g').
\end{equation}
The transition mismatch rate $\mu_k$~\eqref{equ:distribution_mismatch} can be rewritten as
\begin{equation}
\mu_k = \max_{s,g} \left\lVert P(s,g) - P_\mathrm{AM}(s,g) \right\rVert_\infty,
\end{equation}
which is presented as the infinite norm of the difference of the probabilistic vectors. Then, for every $s\in\mathcal{S}$, we have
\begin{equation}
\begin{aligned}
&\left| V^{\pi_\mathrm{hi}^*}(s) - V^{\pi_\mathrm{hi}^\mathrm{adj}}(s) \right| \\
&\quad=\Bigg| \left\langle \sum_{g\in\mathcal{G}} \pi_\mathrm{hi}^*(g\,|\, s)P(s,g), R(s) + \gamma V^{\pi_\mathrm{hi}^*} \right\rangle - \\
&\qquad\quad\left\langle \sum_{g\in\mathcal{G}}\pi_\mathrm{hi}^*(g\,|\, s)P_\mathrm{AM}(s,g), R(s) + \gamma V^{\pi_\mathrm{hi}^\mathrm{adj}} \right\rangle \Bigg| \\
&\quad\le\Bigg| \left\langle \sum_{g\in\mathcal{G}} \pi_\mathrm{hi}^*(g\,|\, s)P(s,g), R(s)\right\rangle - \\
&\qquad\quad\left\langle \sum_{g\in\mathcal{G}} \pi_\mathrm{hi}^*(g\,|\, s)P_\mathrm{AM}(s,g), R(s)\right\rangle \Bigg| \\
&\qquad+ \gamma\,\Bigg| \left\langle \sum_{g\in\mathcal{G}} \pi_\mathrm{hi}^*(g\,|\, s)P(s,g), V^{\pi_\mathrm{hi}^*} \right\rangle - \\
&\qquad\quad\left\langle \sum_{g\in\mathcal{G}}\pi_\mathrm{hi}^*(g\,|\, s)P_\mathrm{AM}(s,g), V^{\pi_\mathrm{hi}^\mathrm{adj}} \right\rangle \Bigg|.
\end{aligned}
\label{eq:decomposition}
\end{equation}
In the sequel we will bound both terms in the right hand side of Inequation~\eqref{eq:decomposition} respectively.
For the first term in the right hand side, we have

\begin{equation}
\begin{aligned}
&\Bigg| \left\langle \sum_{g\in\mathcal{G}} \pi_\mathrm{hi}^*(g\,|\, s)P(s,g), R(s)\right\rangle -\\
&\qquad\left\langle \sum_{g\in\mathcal{G}} \pi_\mathrm{hi}^*(g\,|\, s)P_\mathrm{AM}(s,g), R(s)\right\rangle \Bigg| \\
&\quad= \Bigg| \left\langle \sum_{g\in\mathcal{G}} \pi_\mathrm{hi}^*(g\,|\, s)P(s,g) - \sum_{g\in\mathcal{G}} \pi_\mathrm{hi}^*(g\,|\, s)P_\mathrm{AM}(s,g), R(s)\right\rangle \Bigg| \\
&\quad= \Bigg| \Bigg\langle \sum_{g\in\mathcal{G}} \pi_\mathrm{hi}^*(g\,|\, s)P(s,g) - \sum_{g\in\mathcal{G}} \pi_\mathrm{hi}^*(g\,|\, s)P_\mathrm{AM}(s,g), \\
&\qquad\qquad R(s) - \frac{kR_\mathrm{max}}{2}\cdot\bm{1}\Bigg\rangle \Bigg|\\
&\quad\le\left\lVert \sum_{g\in\mathcal{G}} \pi_\mathrm{hi}^*(g\,|\, s)P(s,g) - \sum_{g\in\mathcal{G}} \pi_\mathrm{hi}^*(g\,|\, s)P_\mathrm{AM}(s,g) \right\rVert_1\cdot \\
&\qquad\qquad\left\lVert R(s) - \frac{kR_\mathrm{max}}{2}\cdot\bm{1} \right\rVert_\infty \\
&\quad\le \max_{s,g}\left\lVert P(s,g) - P_\mathrm{AM}(s,g) \right\rVert_\infty \left\lVert R(s) - \frac{kR_\mathrm{max}}{2}\cdot\bm{1} \right\rVert_\infty \\
&\quad= \frac{\mu_k k R_\mathrm{max}}{2},
\end{aligned}
\label{eq:bound_term_1}
\end{equation}
where we leverage the fact that $\pi_\mathrm{hi}^*(g\,|\, s)$, $\sum_{g\in\mathcal{G}} \pi_\mathrm{hi}^*(g\,|\, s)P(s,g)$ and $\sum_{g\in\mathcal{G}} \pi_\mathrm{hi}^*(g\,|\, s)P_\mathrm{AM}(s,g)$ are all probability distributions and thus sum up to 1; $\bm{1}\in \mathbb{R}^{|\mathcal{S}|}$ is the all-one vector used to center the range of $R(s)$ around the origin.

The second term in the right hand side can be similarly bounded by
\begin{equation}
\begin{aligned}
&\gamma\,\Bigg| \left\langle \sum_{g\in\mathcal{G}} \pi_\mathrm{hi}^*(g\,|\, s)P(s,g), V^{\pi_\mathrm{hi}^*} \right\rangle -\\
&\qquad\left\langle \sum_{g\in\mathcal{G}}\pi_\mathrm{hi}^*(g\,|\, s)P_\mathrm{AM}(s,g), V^{\pi_\mathrm{hi}^\mathrm{adj}} \right\rangle \Bigg| \\
&\quad \le \gamma\, \Bigg| \left\langle \sum_{g\in\mathcal{G}} \pi_\mathrm{hi}^*(g\,|\, s)P(s,g), V^{\pi_\mathrm{hi}^*} \right\rangle - \\
&\qquad\qquad\quad\left\langle \sum_{g\in\mathcal{G}}\pi_\mathrm{hi}^*(g\,|\, s)P(s,g), V^{\pi_\mathrm{hi}^\mathrm{adj}} \right\rangle \Bigg| \\
&\qquad + \gamma\, \Bigg| \left\langle \sum_{g\in\mathcal{G}} \pi_\mathrm{hi}^*(g\,|\, s)P(s,g), V^{\pi_\mathrm{hi}^\mathrm{adj}} \right\rangle - \\
&\qquad\qquad\quad\left\langle \sum_{g\in\mathcal{G}}\pi_\mathrm{hi}^*(g\,|\, s)P_\mathrm{AM}(s,g), V^{\pi_\mathrm{hi}^\mathrm{adj}} \right\rangle \Bigg|,
\end{aligned}
\label{eq:bound_term_2}
\end{equation}
where both terms of the right hand side of Inequality~\eqref{eq:bound_term_2} can be respectively bounded by
\begin{equation}
\begin{aligned}
&\gamma\, \Bigg| \left\langle \sum_{g\in\mathcal{G}} \pi_\mathrm{hi}^*(g\,|\, s)P(s,g), V^{\pi_\mathrm{hi}^*} \right\rangle - \\
&\qquad\left\langle \sum_{g\in\mathcal{G}}\pi_\mathrm{hi}^*(g\,|\, s)P(s,g), V^{\pi_\mathrm{hi}^\mathrm{adj}} \right\rangle \Bigg| \\
&\quad\le \gamma \left\lVert V^{\pi^*_\mathrm{hi}} - V^{\pi^\mathrm{adj}_\mathrm{hi}} \right\rVert_\infty 
\end{aligned}
\end{equation}
\begin{equation}
\begin{aligned}
& \gamma\, \Bigg| \left\langle \sum_{g\in\mathcal{G}} \pi_\mathrm{hi}^*(g\,|\, s)P(s,g), V^{\pi_\mathrm{hi}^\mathrm{adj}} \right\rangle - \\
&\qquad\left\langle \sum_{g\in\mathcal{G}}\pi_\mathrm{hi}^*(g\,|\, s)P_\mathrm{AM}(s,g), V^{\pi_\mathrm{hi}^\mathrm{adj}} \right\rangle \Bigg|\\
&= \gamma\, \Bigg| \Bigg\langle \sum_{g\in\mathcal{G}} \pi_\mathrm{hi}^*(g\,|\, s)P(s,g) - \sum_{g\in\mathcal{G}} \pi_\mathrm{hi}^*(g\,|\, s)P_\mathrm{AM}(s,g), \\
&\qquad V^{\pi_\mathrm{hi}^\mathrm{adj}} - \frac{kR_\mathrm{max}}{2(1-\gamma)}\cdot\bm{1} \Bigg\rangle \Bigg| \\
&\le \gamma \left\lVert V^{\pi^*_\mathrm{hi}} - V^{\pi^\mathrm{adj}_\mathrm{hi}} \right\rVert_\infty +\\
&\qquad\gamma \left\lVert \sum_{g\in\mathcal{G}} \pi_\mathrm{hi}^*(g\,|\, s)P(s,g) - \sum_{g\in\mathcal{G}} \pi_\mathrm{hi}^*(g\,|\, s)P_\mathrm{AM}(s,g)\right\rVert_1\cdot\\
&\qquad\ \left\lVert V^{\pi_\mathrm{hi}^\mathrm{adj}} - \frac{kR_\mathrm{max}}{2(1-\gamma)}\cdot\bm{1} \right\rVert_\infty \\
&\le \gamma \left\lVert V^{\pi^*_\mathrm{hi}} - V^{\pi^\mathrm{adj}_\mathrm{hi}} \right\rVert_\infty +\\
&\qquad\gamma \max_{s,g}\left\lVert P(s,g) - P_\mathrm{AM}(s,g) \right\rVert_\infty \left\lVert V^{\pi_\mathrm{hi}^\mathrm{adj}} - \frac{kR_\mathrm{max}}{2(1-\gamma)}\cdot\bm{1} \right\rVert_\infty \\
&\le \gamma \left\lVert V^{\pi^*_\mathrm{hi}} - V^{\pi^\mathrm{adj}_\mathrm{hi}} \right\rVert_\infty + \frac{\gamma\mu_k k R_\mathrm{max}}{2(1-\gamma)}.
\end{aligned}
\end{equation}
Combining the bound in~\eqref{eq:bound_term_1} and~\eqref{eq:bound_term_2} yields
\begin{equation}
\begin{aligned}
&\left| V^{\pi_\mathrm{hi}^*}(s) - V^{\pi_\mathrm{hi}^\mathrm{adj}}(s) \right|\\
&\quad\le \frac{\mu_k k R_\mathrm{max}}{2} + \frac{\gamma\mu_k k R_\mathrm{max}}{2(1-\gamma)} + \gamma \left\lVert V^{\pi^*_\mathrm{hi}} - V^{\pi^\mathrm{adj}_\mathrm{hi}} \right\rVert_\infty.
\label{eq:bound}
\end{aligned}
\end{equation}
Since the bound~\eqref{eq:bound} holds for all $s\in\mathcal{S}$, we can also take the infinite norm on the left hand side, which gives
\begin{equation}
\begin{aligned}
&\left\lVert V^{\pi^*_\mathrm{hi}} - V^{\pi^\mathrm{adj}_\mathrm{hi}} \right\rVert_\infty\\
&\quad\le \frac{\mu_k k R_\mathrm{max}}{2} + \frac{\gamma\mu_k k R_\mathrm{max}}{2(1-\gamma)} + \gamma \left\lVert V^{\pi^*_\mathrm{hi}} - V^{\pi^\mathrm{adj}_\mathrm{hi}} \right\rVert_\infty.
\end{aligned}
\end{equation}
A simple transformation of the above inequality completes the proof.
\end{proof}

\section{Comparison with the Work of Savinov et al.}
\label{appsec:comp}
Savinov et al.~\cite{savinov_semi-parametric_2018,savinov_episodic_2019} also propose a supervised learning approach for learning the adjacency between states. The main differences between our method and theirs are 1) We use trajectories sampled by multiple policies to construct training samples, while they only use trajectories sampled by one specific policy; 2) We use an adjacency matrix to explicitly aggregate the adjacency information and sample training pairs based on the adjacency matrix, while they directly sample training pairs from trajectories. These differences lead to two advantages of our method: 1) By using multiple policies, we achieve a more accurate adjacency approximation, as shown by Equation~\eqref{equ:approximation}; 2) By maintaining an adjacency matrix, we can uniformly sample from the set of all explored states and realize a nearly unbiased estimation of the expectation in Equation~\eqref{equ:contrastive}, while the estimation by sampling state-pairs from trajectories is biased. As an example, consider a simple grid world in Fig.~\ref{subfig:gridworld}, where states are represented by their $(x,y)$ positions. In this environment, states $s_1$ and $s_2$ are non-adjacent since they are separated by a wall. However, it is hard for the method by Savinov et al. to handle this situation as these two states rarely emerge in the same trajectory due to the large distance, and thus the loss induced by this state-pair is very likely to be dominated by the loss of other nearer state-pairs. Meanwhile, our method treats the loss of all state pairs equally, and can therefore alleviate this phenomenon.
Empirically, we employed a random agent (since the random policy is stochastic, it can be viewed as multiple deterministic policies, and is enough for adjacency learning in this simple environment) to interact with the environment for $20,000$ steps, and trained the adjacency network with collected samples using both methods.
We visualize the LLE of state embeddings and two adjacency distance heatmaps by both methods respectively in Fig.~\ref{subfig:lle_ours} and~\ref{subfig:lle_savinov}. Visualizations validate our analysis, showing that our method does learn a better adjacency measure in this scenario.

\begin{figure}
\centering
\subfigure[]{
  \includegraphics[width=0.23\linewidth]{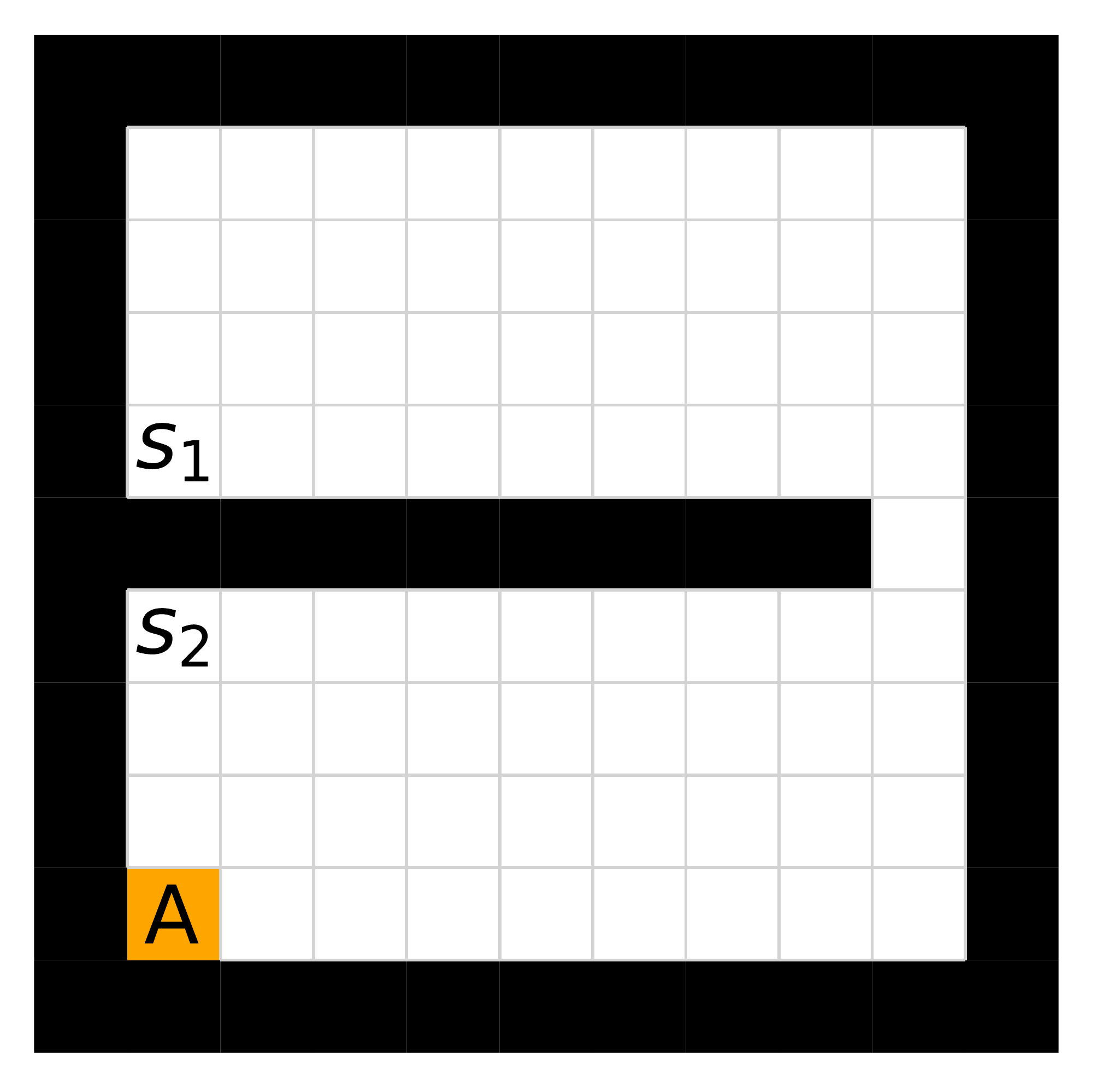}
  \label{subfig:gridworld}
}
\subfigure[]{
  \includegraphics[width=0.21\linewidth]{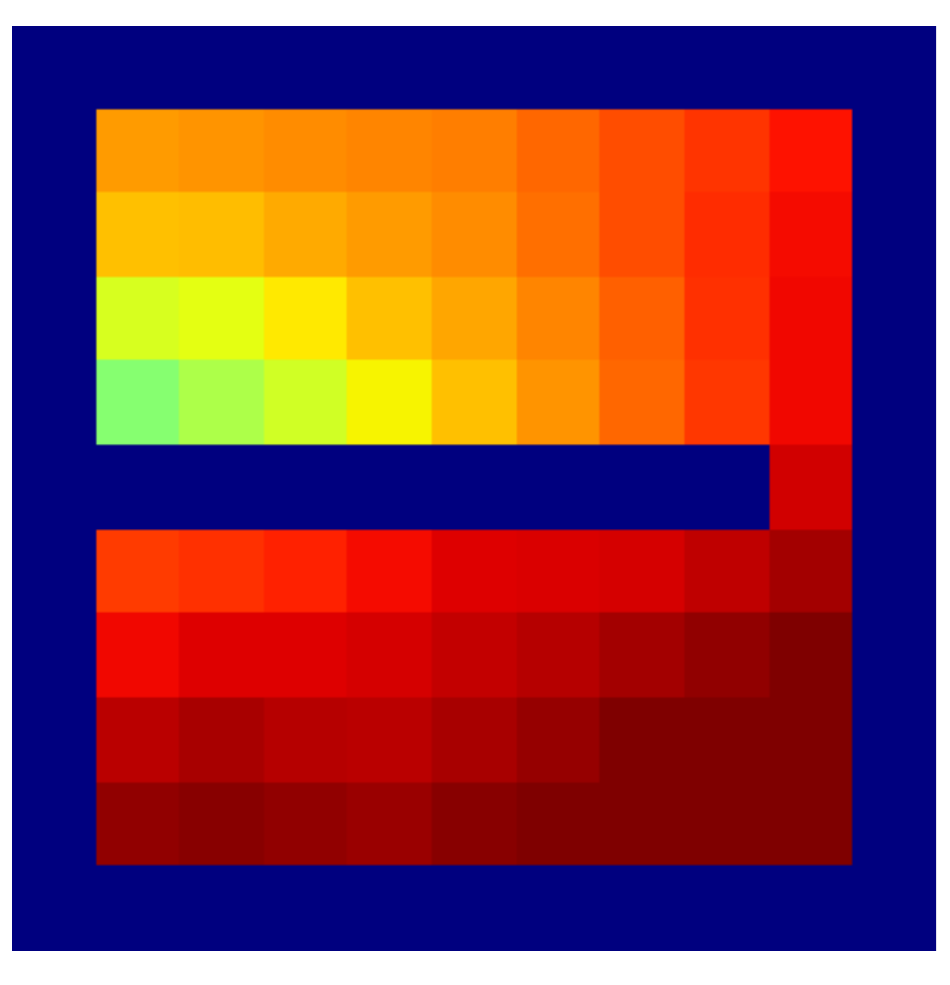}
  \hspace{0.5em}
  \includegraphics[width=0.21\linewidth]{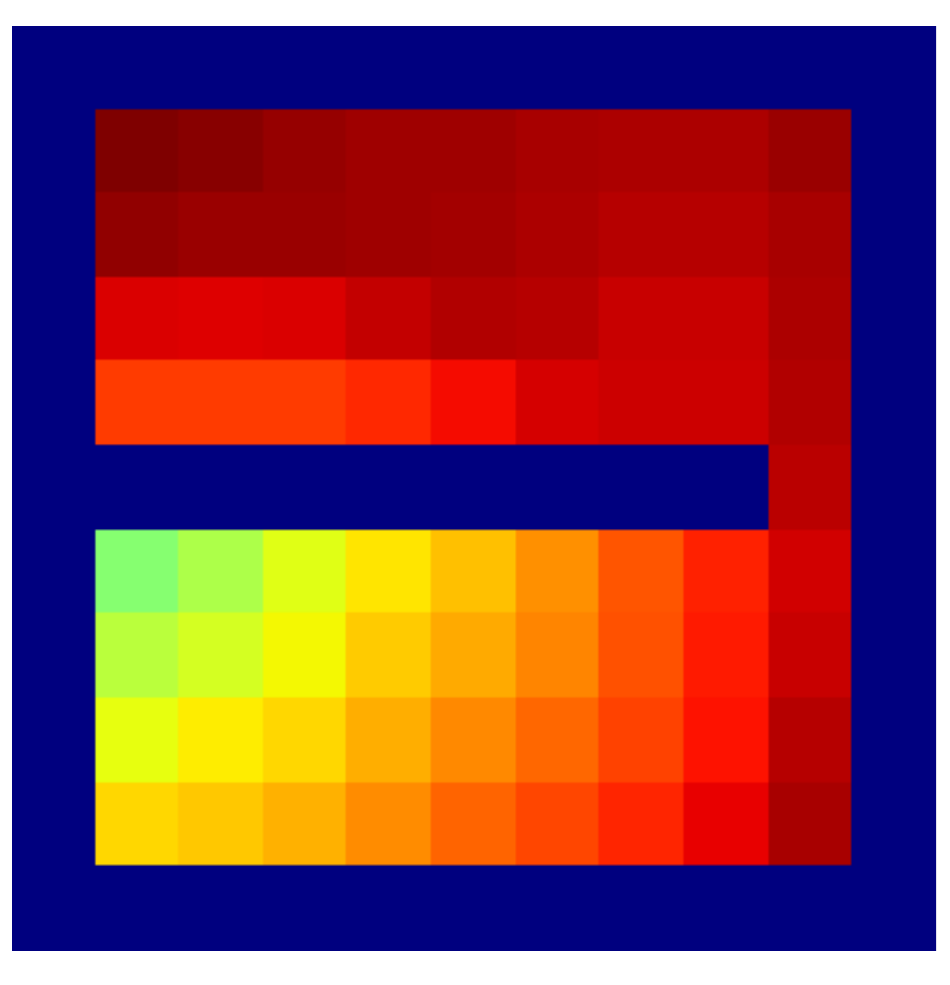}
  \hspace{0.5em}
  \includegraphics[width=0.21\linewidth]{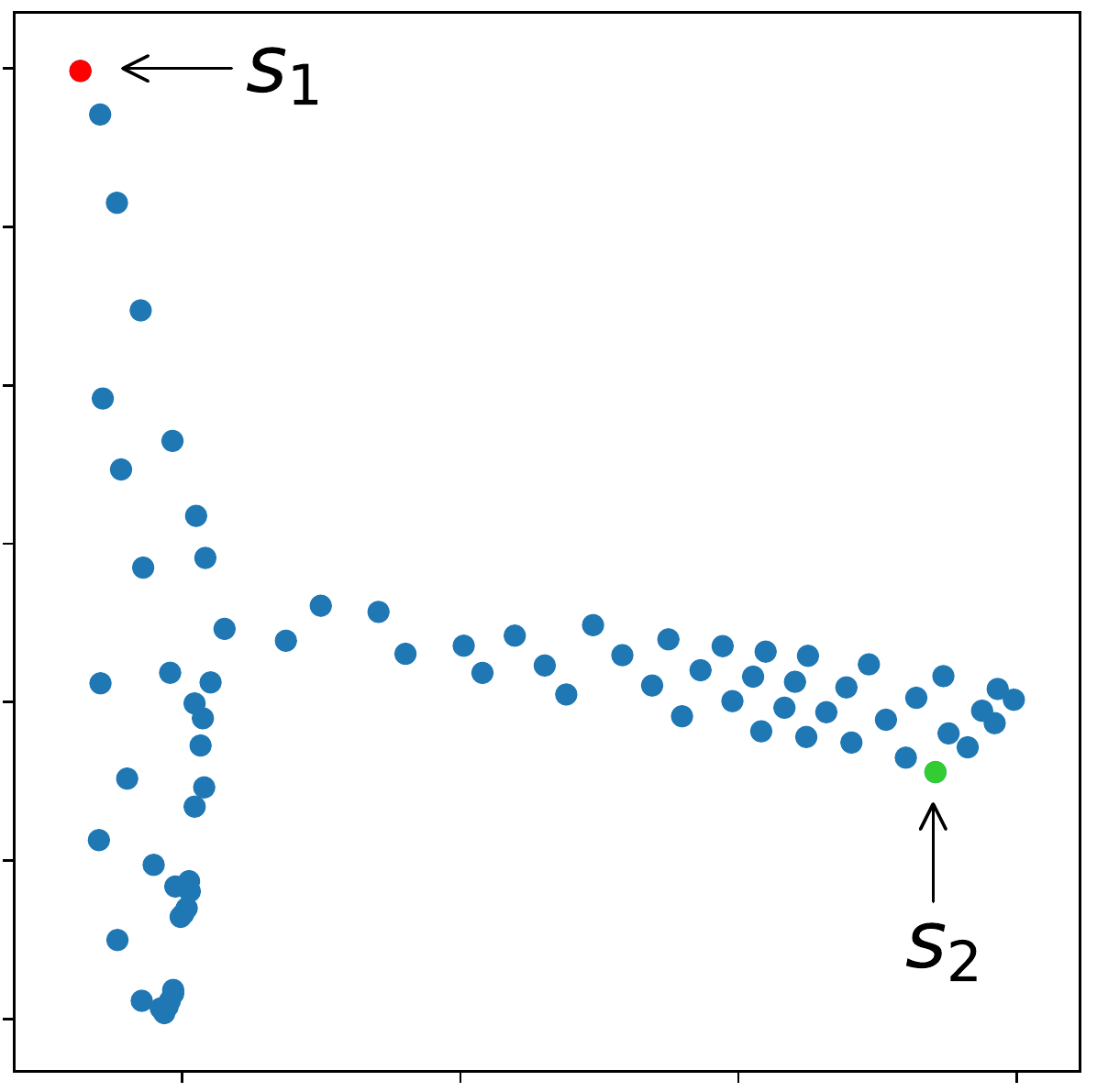}
  \label{subfig:lle_ours}
  }\\
\subfigure[]{
  \hspace{0.1em}
  \includegraphics[width=0.21\linewidth]{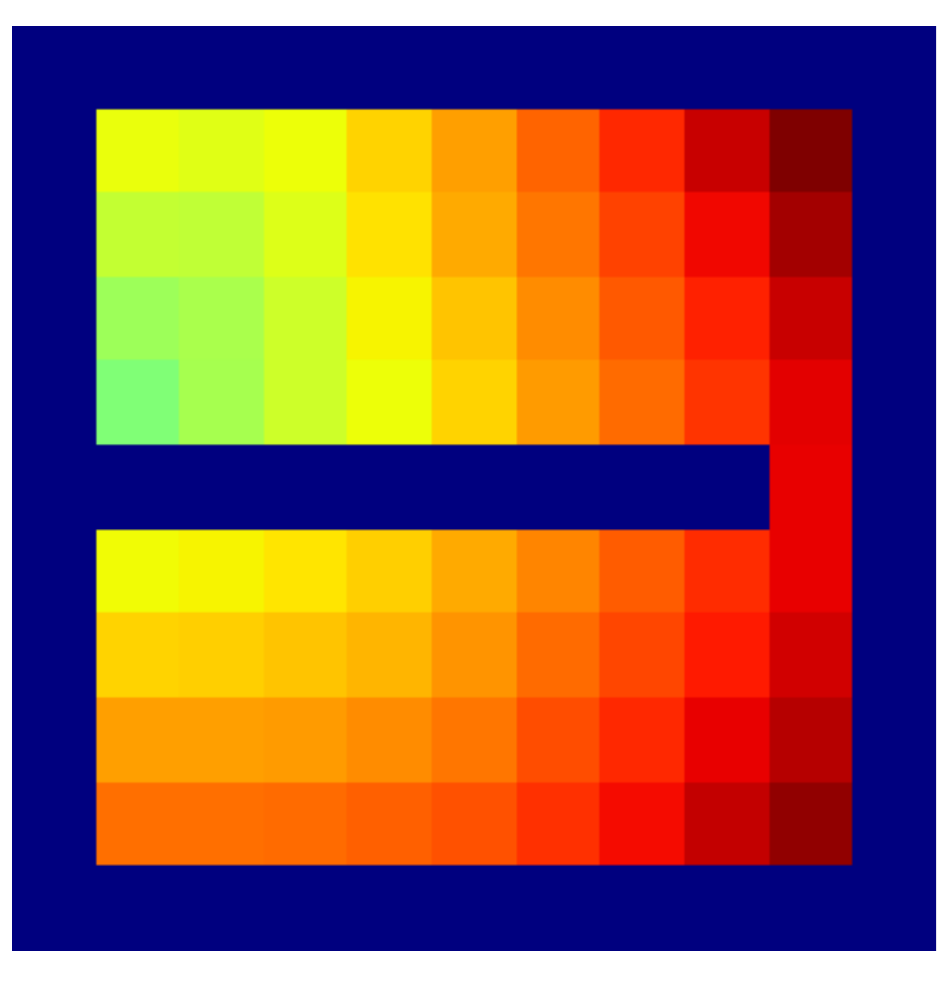}
  \hspace{1em}
  \includegraphics[width=0.21\linewidth]{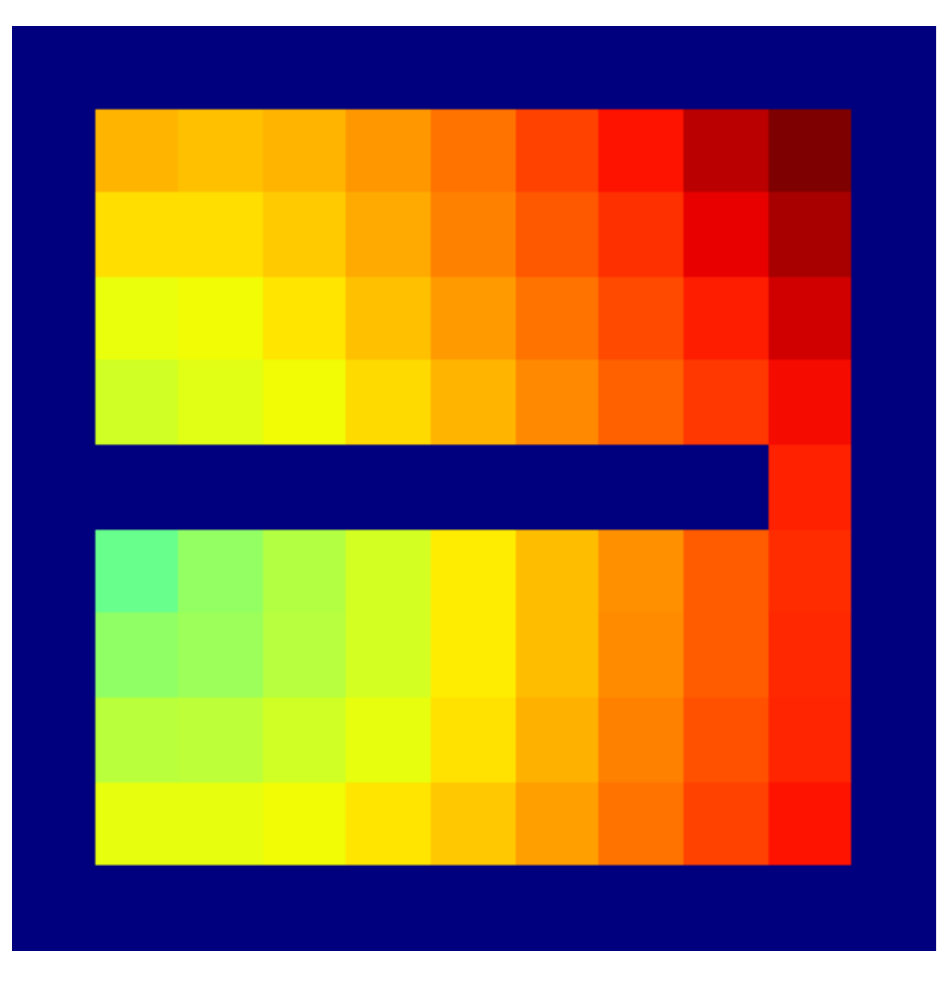}
  \hspace{1em}
  \includegraphics[width=0.21\linewidth]{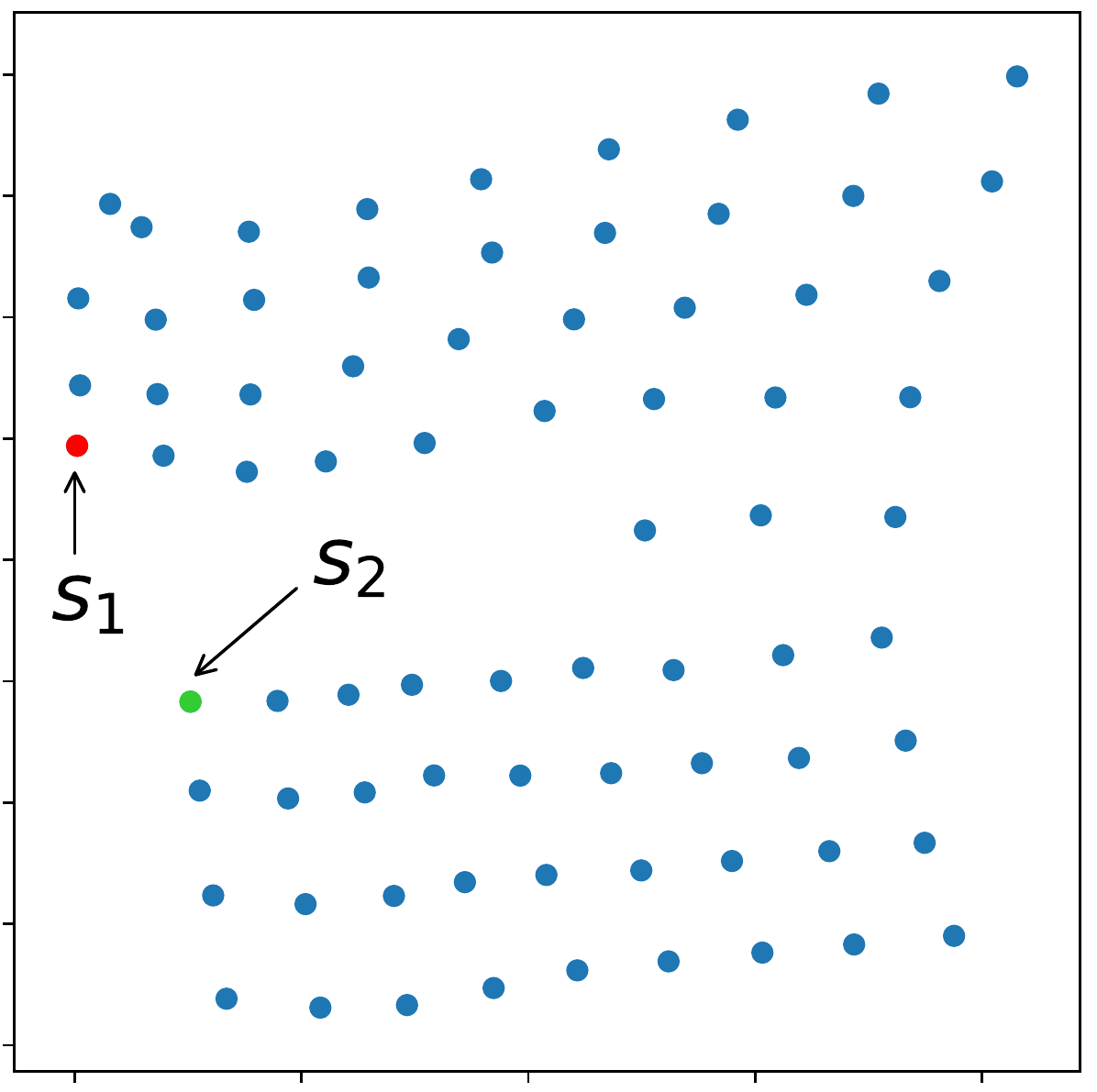}
  \label{subfig:lle_savinov}
  }
\caption{Qualitative comparison of different adjacency learning methods. (a) Environment layout. The agent starts from the grid A. (b) Results of our method, including the adjacency heatmaps from states $s_1$, $s_2$, and the LLE visualization of state embeddings. (c) Results of the method proposed by Savinov et al., including the adjacency heatmaps from states $s_1$, $s_2$, and the LLE visualization of state embeddings.}
\label{fig:r_compare}
\end{figure}

\section{Implementation Details}
\label{appsec:imp}

\subsection{HRAC and Baseline Details}

We use PyTorch~\cite{paszke_pytorch_2019} to implement our method HRAC and all the baselines.

\textit{HRAC:} for discrete control tasks, we adopt a binary intrinsic reward setting: we set the intrinsic reward to 1 when $|s_x - g_x| \le 0.5$ and $|s_y - g_y| \le 0.5$, where $(s_x,\,s_y)$ is the position of the agent and $(g_x,\,g_y)$ is the position of the desired subgoal. For continuous control tasks, we adopt a dense intrinsic reward setting based on the negative Euclidean distances $-\lVert s - g\rVert_2$ between states and subgoals.

\textit{HIRO:} following Nachum et al.~\cite{nachum_data-efficient_2018}, we restrict the output of high-level to $(\pm10,\,\pm10)$, representing the desired shift of the agent's $(x,y)$ position. By limiting the range of directional subgoals generated by the high-level, HIRO can roughly control the Euclidean distance between the absolute subgoal and the current state in the raw goal space rather than the learned adjacency space. 

\textit{HRL-HER:} as HER cannot be applied to the on-policy training scheme in a straightforward manner, in discrete control tasks where the low-level policy is trained using A2C, we modify its implementation so that it can be incorporated into the on-policy setting. For this on-policy variant, during the training phase, we maintain an additional episodic state memory. This memory stores states that the agent has visited from the beginning of each episode. When the high-level generates a new subgoal, the agent randomly samples a subgoal mapped from a stored state with a fixed probability of 0.2 to substitute the generated subgoal for the low-level to reach. This implementation resembles the ``episode'' strategy introduced in the original HER. We still use the original HER in continuous control tasks.

\textit{NoAdj:} we follow the training pipeline proposed by Savinov et al.~\cite{savinov_semi-parametric_2018,savinov_episodic_2019}, where no adjacency matrix is maintained. Training pairs are constructed by randomly sampling state-pairs $(s_i,\,s_j)$ from the stored trajectories. The samples with $|i-j|\le k$ are labeled as positive with $l=1$, and the samples with $|i-j| \ge M k$ are negative ones with $l=0$. The hyperparameter $M$ is used to create a gap between the two types of samples. In practice, we use $M = 4$.

\textit{NegReward:} in this variant, every time the high-level generates a subgoal, we use the adjacency network to judge whether it is $k$-step adjacent. If the subgoal is non-adjacent, the high-level will be penalized with a negative reward $-1$.

\subsection{Network Architecture}
For the hierarchical policy network, we employ the same architecture as HIRO~\cite{nachum_data-efficient_2018} in continuous control tasks, where both the high-level and the low-level use TD3~\cite{fujimoto_addressing_2018} algorithm for training. In discrete control tasks, we use two networks consisting of 3 fully-connected layers with ReLU nonlinearities as the low-level actor and critic networks of A2C (our preliminary results show that the performances using on-policy and off-policy methods for the low-level training are similar in the discrete control tasks we consider), and use the same high-level TD3 network architecture as in the continuous control task. The size of the hidden layers of both the low-level actor and the low-level critic is $(300,\,300)$. The output of the high-level actor is activated using the \texttt{tanh} function and scaled to fit the size of the environments.

For the adjacency network, we use a network consisting of 4 fully-connected layers with ReLU nonlinearities in all tasks. Each hidden layer of the adjacency network has the size of $(128,\,128)$. The dimension of the output embedding is 32.

We use Adam~\cite{kingma_adam:_2015} as the optimizer for all networks.

\subsection{Hyperparameters}
\label{appsubsec:hyperparam}

We list all hyperparameters we use in discrete control tasks and quadrupedal robot locomotion tasks respectively in Table~\ref{tab:param_disc} and Table~\ref{tab:param_cont}, and list the hyperparameters used for adjacency network training in Table~\ref{tab:param_adj}. Hyperparameters of the robot arm manipulation tasks are the same as~\cite{plappert_multi-goal_2018}, with a high-level action frequency of 5 and adjacency loss coefficient of 0.01. ``Ranges'' in the tables show the ranges of hyperparameters considered when we performed parameter tuning, and the hyperparameters without ranges were not specifically tuned.

\subsection{Evaluation Procedure}
We evaluate the performance of the agent every 5000 training steps by the average episodic reward over 5 independent evaluation episodes. The performance curves are generated using a rolling window with size 20; rewards in the rolling window are averaged.

\begin{table}[h]
    \centering
    \caption{Hyperparameters used in adjacency network training.}
    \begin{tabular}{lcc}
        \toprule
        \textbf{Hyperparameters} & \textbf{Values} & \textbf{Ranges} \\
        \midrule
        \midrule
        Adjacency Network & & \\
        \midrule
        Learning rate & \multicolumn{1}{c}{0.0002} & - \\
        Batch size & \multicolumn{1}{c}{64} & - \\
        $\epsilon_k$ & \multicolumn{1}{c}{1.0} & - \\
        $\delta$ & \multicolumn{1}{c}{0.2} & - \\
        Steps for pre-training & 50000 & - \\
        Pre-training epochs & 50 & - \\
        Online training frequency (steps) & 50000 & - \\
        Online training epochs & 25 & - \\
        \bottomrule
    \end{tabular}
    \label{tab:param_adj}
\end{table}

\begin{table}
    \centering
    \caption{Hyperparameters used in discrete control tasks. ``KC'' in the table refers to ``Key-Chest''.}
    \begin{tabular}{lcc}
        \toprule
        \textbf{Hyperparameters} & \textbf{Values} & \textbf{Ranges} \\
        \midrule
        \midrule
        High-level TD3 & & \\
        \midrule
        Actor learning rate & \multicolumn{1}{c}{0.0001}& - \\
        Critic learning rate & \multicolumn{1}{c}{0.001} & - \\
        \multirow{2}{*}{Replay buffer size} & 10000/20000 & \multirow{2}{*}{\{10000, 20000\}} \\
        & for Maze/KC & \\
        Batch size & \multicolumn{1}{c}{64} & - \\
        Soft update rate & \multicolumn{1}{c}{0.001} & - \\
        Policy update frequency & \multicolumn{1}{c}{2} & \{1, 2\} \\
        Discounting &\multicolumn{1}{c}{0.99} & - \\
        High-level action frequency & \multicolumn{1}{c}{10} & - \\
        Reward scaling & \multicolumn{1}{c}{1.0} & - \\
        \multirow{3}{*}{Exploration strategy} & Gaussian & \multirow{3}{*}{\{3.0, 5.0\}} \\
        & ($\sigma=3.0/5.0$ & \\
        & for Maze/KC) & \\
        Adjacency loss coefficient & \multicolumn{1}{c}{20} & \{1, 5, 10, 20\} \\
        \midrule
        \midrule
        Low-level A2C & & \\
        \midrule
        Actor learning rate & \multicolumn{1}{c}{0.0001} & - \\
        Critic learning rate & \multicolumn{1}{c}{0.0001} & - \\
        Entropy weight & \multicolumn{1}{c}{0.01} & - \\
        Discounting & \multicolumn{1}{c}{0.99} & - \\
        Reward scaling & \multicolumn{1}{c}{1.0} & - \\
        \bottomrule
    \end{tabular}
    \label{tab:param_disc}
\end{table}

\begin{table}
    \centering
    \caption{Hyperparameters used in continuous control tasks. ``AM'' and ``AP'' in the table refer to ``Ant Maze'' and ``Ant Push'' respectively.}
    \begin{tabular}{lcc}
        \toprule
        \textbf{Hyperparameters} & \textbf{Values} & \textbf{Ranges} \\
        \midrule
        \midrule
        High-level TD3 & & \\
        \midrule
        Actor learning rate & 0.0001 & - \\
        Critic learning rate & 0.001 & - \\
        Replay buffer size & 200000 & - \\
        Batch size & 128 & - \\
        Soft update rate & 0.005 & - \\
        Policy update frequency & 1 & \{1, 2\} \\
        Discounting & 0.99 & - \\
        High-level action frequency & 10 & - \\
        \multirow{2}{*}{Reward scaling} & 0.1/1.0 & \multirow{2}{*}{\{0.05, 0.1, 1.0\}} \\
        & for AM, AP/others & \\
        \multirow{3}{*}{Exploration strategy} & Gaussian & \multirow{3}{*}{\{1.0, 2.0, 5.0\}} \\
        & ($\sigma=5.0/1.0$ & \\
        & for AP/others) & \\
        Adjacency loss coefficient & 20 & \{1, 5, 10, 20\} \\
        \midrule
        \midrule
        Low-level TD3 & & \\
        \midrule
        Actor learning rate & 0.0001 & - \\
        Critic learning rate & 0.001 & - \\
        Replay buffer size & 200000 & - \\
        Batch size & 128 & - \\
        Soft update rate & 0.005 & - \\
        Policy update frequency & 1 & - \\
        Discounting & 0.95 & - \\
        Reward scaling & 1.0 & - \\
        Exploration strategy & Gaussian ($\sigma=1.0$) & - \\
        \bottomrule
    \end{tabular}
    \label{tab:param_cont}
\end{table}


\ifCLASSOPTIONcaptionsoff
  \newpage
\fi



%

\end{document}